\newcommand{\argmax}{\mathop{\rm argmax}}
\definecolor{DarkGreen}{rgb}{0.1,0.5,0.1}
\definecolor{DarkRed}{rgb}{0.5,0.1,0.1}
\definecolor{DarkBlue}{rgb}{0.1,0.1,0.5}
\definecolor{Gray}{rgb}{0.2,0.2,0.2}
\newcommand{\argmin}{{\rm argmin}}
\newcommand{\thetaPO}{\theta_{\mathrm{PO}}}
\DeclareMathOperator{\cD}{\mathcal D}
\newcommand{\E}{\mathbb{E}}
\newcommand{\PR}{\mathrm{PR}}
\newcommand{\DPR}{\mathrm{DPR}}
\newcommand{\PRhat}{\widehat{\PR}}
\newcommand{\cO}{{\mathcal{O}}}
\newcommand{\cS}{{\mathcal{S}}}
\newcommand{\cP}{{\mathcal{P}}}
\newcommand{\cE}{{\mathcal{E}}}
\newcommand{\cA}{{\mathcal{A}}}
\newcommand{\PRLB}{{\mathrm{PR}_\mathrm{LB}}}
\newcommand{\PRmin}{{\mathrm{PR}}_{\min}}
\newcommand{\Rph}{{\mathrm{Reg}}_{\mathrm{ph}}}
\newcommand{\Reg}{\mathrm{Reg}}
\newtheorem{theorem}{Theorem}
\newtheorem{definition}{Definition}
\newtheorem{proposition}{Proposition}
\newtheorem{remark}{Remark}
\newtheorem{lemma}[theorem]{Lemma}
\newtheorem{example}{Example}
\newtheorem{assumption}{Assumption}
\definecolor{mydarkblue}{rgb}{0,0.08,0.45}
\title{Regret Minimization with Performative Feedback\vspace{0.5cm}}
\author{Meena Jagadeesan, Tijana Zrnic, Celestine Mendler-D\"unner}
\date{}
\newcommand{\printfnsymbol}[1]{%
  \textsuperscript{\@fnsymbol{#1}}%
}
\author[1]{Meena Jagadeesan}
\author[1]{Tijana Zrnic}
\author[2]{Celestine Mendler-Dünner}
\affil[1]{University of California, Berkeley}
\affil[2]{Max Planck Institute for Intelligent Systems, Tübingen}
\date{}                     
\begin{document}

\maketitle

\begin{abstract}
In performative prediction, the deployment of a predictive model triggers a shift in the data distribution. As these shifts are typically unknown ahead of time, the learner needs to deploy a model to get feedback about the distribution it induces. We study the problem of finding near-optimal models under performativity while maintaining low regret. 
On the surface, this problem might seem equivalent to a bandit problem. However, it exhibits a fundamentally richer feedback structure that we refer to as \textit{performative feedback}: after every deployment, the learner receives samples from the shifted distribution rather than only bandit feedback about the reward. Our main contribution is an algorithm that achieves regret bounds scaling only with the complexity of the distribution shifts and not that of the reward function. The algorithm only relies on smoothness of the shifts and does not assume convexity. Moreover, its final iterate is guaranteed to be near-optimal.
The key algorithmic idea is careful exploration of the distribution shifts that informs a novel construction of confidence bounds on the risk of unexplored models.
 More broadly, our work establishes a conceptual approach for leveraging tools from the bandits literature for the purpose of regret minimization with performative feedback. 
\end{abstract}

\section{Introduction}
Predictive models deployed in social settings are often performative. This means that the model's predictions---by means of being used to inform consequential decisions---influence the outcomes the model aims to predict in the first place. For example, travel time estimates influence routing decisions and thus realized travel times, stock price predictions influence trading activity and hence prices. Such feedback-loop behavior arises in a variety of domains, including public policy, trading, traffic predictions, and recommendation systems.

\citet{perdomo20pp} formalized this phenomenon under the name \emph{performative prediction}.
A key concept in this framework is the \emph{distribution map}, which formalizes the dependence of the data distribution on the deployed predictive model. This object maps a model, encoded by a parameter vector $\theta$, to a distribution $\cD(\theta)$ over instances.
Naturally, in a performative environment, a model's performance is measured on the distribution that results from its deployment. 
That is, given a loss function $\ell(z;\theta)$, which measures the learner's loss when they predict on instance $z$ using model $\theta$, we evaluate a model based on its \emph{performative risk}, defined as
\begin{equation}
\label{eq:obj}
\PR(\theta) := \mathrm E_{z\sim\cD(\theta)} \,\ell(z;\theta).
\end{equation}
In contrast with the risk function studied in classical supervised learning, the performative risk takes an expectation over a model-dependent distribution. Importantly, this distribution is \emph{unknown} ahead of time; for example, one can hardly anticipate the distribution of travel times induced by a traffic forecasting system without deploying the system first.

Due to this inherent uncertainty about $\cD(\theta)$, it is not possible to find a model with low performative risk offline. The learner needs to interact with the environment and deploy models $\theta$ to explore the induced distributions $\cD(\theta)$. Given the online nature of this task, we measure the loss incurred by deploying a sequence of models $\theta_1,\dots,\theta_T$ by evaluating the \emph{performative regret}:
\begin{equation*}
\Reg(T) := \sum_{t=1}^T \left(\E\, \PR(\theta_t) -  \min_{\theta} \PR(\theta)\right),
\end{equation*}
where the expectation is taken over the possible randomness in the choice of $\{\theta_t\}_{t=1}^T$.
Performative regret measures the suboptimality of the deployed sequence of models relative to a \emph{performative optimum} 
$\thetaPO\in \argmin_\theta \PR(\theta)$.

At first glance, performative regret minimization might seem equivalent to a classical bandit problem. Bandit solutions minimize regret while requiring only noisy zeroth-order access to the unknown reward function---in our case $\PR$. The resulting regret bounds generally grow with some notion of complexity of the reward function.

However, a naive application of bandit baselines misses out on a crucial fact: performative regret minimization exhibits significantly richer feedback than bandit feedback. When deploying a model $\theta$, the learner gains access to samples from the induced distribution $\cD(\theta)$, rather than only a noisy estimate of the risk $\PR(\theta)$. We call this feedback model \emph{performative feedback}. Together with the fact that the learner knows the loss $\ell(z;\theta)$, performative feedback can be used to inform the reward of unexplored arms. For instance, it allows the computation of an unbiased estimate of $\E_{z\sim \cD(\theta)} \,\ell(z;\theta')$ for \emph{any} point $\theta'$. 

To illustrate the power of this feedback model, consider the limiting case in which the performative effects entirely vanish and the distribution map is constant, i.e. $\cD(\theta)\equiv \cD_*$ for some fixed distribution $\cD_*$ independent of $\theta$. With zeroth-order feedback, the learner would still need to deploy different models to explore the landscape of $\PR$ and find a point with low risk. However, with performative feedback, a \emph{single} deployment gives samples from $\cD_*$, thus resolving all uncertainty in the objective~\eqref{eq:obj} apart from finite-sample uncertainty. This raises the question:
\emph{with performative feedback, can one achieve regret bounds that scale only with the complexity of the distribution map, and not that of the performative risk?}

\subsection{Our contribution} 


We study the problem of performative regret minimization based on performative feedback.
Our main contribution is performative regret bounds that scale primarily with the complexity of the distribution map. The key conceptual idea is to apply bandits tools to carefully explore the distribution map, and then propagate this knowledge to the objective~\eqref{eq:obj} in order to minimize performative regret.

\paragraph{Performative confidence bounds algorithm.} Our main focus is on a setting where the distribution map is Lipschitz in an appropriate sense. 
We propose a new algorithm that takes advantage of performative feedback in order to construct non-trivial confidence bounds on the performative risk in unexplored regions of the parameter space and thus guide exploration. A crucial implication of these bounds is that the algorithm can discard highly suboptimal regions of the parameter space without ever deploying a model nearby.
We summarize the regret guarantee of our \emph{performative confidence bounds} algorithm:

\begin{theorem}[Informal]
\label{thm:informal}
Suppose that the distribution map $\cD(\theta)$ is $\epsilon$-Lipschitz and that the loss $\ell(z;\theta)$ is $L_z$-Lipschitz in $z$. Then,  after $T$ deployments, the performative confidence bounds algorithm achieves a regret bound of
\[\Reg(T) = \tilde\cO\left(\sqrt{T}+T^{\frac{d+1}{d+2}}(L_z\epsilon)^{\frac{d}{d+2}}\right),\] 
where $d$ denotes the zooming dimension of the problem.
\end{theorem}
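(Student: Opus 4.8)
The plan is to build the proof around a single structural observation that turns performative feedback into a Lipschitz-confidence structure, and then to run a zooming-style adaptive-discretization argument against it. The starting point is the identity
\[
\PR(\theta') = \E_{z\sim\cD(\theta)}\ell(z;\theta') + \Big(\E_{z\sim\cD(\theta')}\ell(z;\theta') - \E_{z\sim\cD(\theta)}\ell(z;\theta')\Big),
\]
valid for any deployed $\theta$ and any query point $\theta'$. The first term is computable from the samples collected at $\theta$, since the learner knows $\ell$; the second is a pure distribution-shift error. First I would bound this error via Kantorovich--Rubinstein duality together with the two hypotheses: because $\ell(\cdot;\theta')$ is $L_z$-Lipschitz in $z$ and the distribution map is $\epsilon$-Lipschitz (in Wasserstein-$1$ distance $W_1$), one gets $|\E_{z\sim\cD(\theta')}\ell(z;\theta') - \E_{z\sim\cD(\theta)}\ell(z;\theta')| \le L_z\, W_1(\cD(\theta'),\cD(\theta)) \le L_z\epsilon\|\theta-\theta'\|$. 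The key point is that $L_z\epsilon$ becomes the \emph{effective Lipschitz constant} governing how risk estimates propagate across the parameter space, even though $\PR$ itself need not be Lipschitz or convex in $\theta$.

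Next I would convert this into two-sided confidence bounds. Combining the shift bound with a finite-sample deviation bound for the empirical estimate of $\E_{z\sim\cD(\theta)}\ell(z;\theta')$ yields, for each deployed $\theta$ with $n$ samples, a confidence interval for $\PR(\theta')$ of half-width roughly $L_z\epsilon\|\theta-\theta'\| + \sqrt{\log(\cdot)/n}$. Intersecting over all past deployments produces global envelopes $\PRUB$ and $\PRLB$ that sandwich $\PR$ with high probability and are tight near explored points. The algorithm maintains the active set of \emph{plausibly optimal} points $\{\theta': \PRLB(\theta')\le \min_{\theta''}\PRUB(\theta'')\}$, adaptively discretizes it, and deploys to shrink uncertainty there; validity of $\PRLB$ ensures that discarded regions are genuinely suboptimal, so they are never visited. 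The main care here is making the envelopes hold \emph{uniformly} over the continuum of query points $\theta'$, which requires a covering/empirical-process union bound rather than a naive per-point Hoeffding bound.

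The regret analysis then follows the zooming template, reorganized so that one deployment resolves an entire ball rather than a single arm. I would group deployments by dyadic gap scales $\Delta\in\{2^{-i}\}$. To certify a point with gap $\Delta$ as near-optimal or to discard it, its confidence half-width must drop to $\cO(\Delta)$, which by the bound above requires deploying within geometric radius $\rho\sim\Delta/(L_z\epsilon)$ and collecting $n\sim\Delta^{-2}$ samples there. By the definition of the zooming dimension $d$, the $\Delta$-near-optimal region is covered by $N_\Delta\lesssim (L_z\epsilon/\Delta)^{d}$ such balls, each incurring per-deployment regret $\cO(\Delta)$. Summing the scale-$\Delta$ regret $N_\Delta\cdot n\cdot\Delta\sim (L_z\epsilon)^{d}\,\Delta^{-(d+1)}$ (up to logarithms) over scales, and using $T\sim\sum_i (L_z\epsilon)^{d}\Delta_i^{-(d+2)}$ to solve for the finest resolved scale $\Delta_{\min}\sim((L_z\epsilon)^{d}/T)^{1/(d+2)}$, the geometric series is dominated by its finest term and collapses to $(L_z\epsilon)^{d/(d+2)}\,T^{(d+1)/(d+2)}$. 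The additive $\sqrt{T}$ term captures the finite-sample regime that survives even as $L_z\epsilon\to 0$: when the distribution map is constant, a single deployment resolves the objective up to statistical error, leaving a standard stochastic-optimization rate. I expect the principal obstacles to be (i) establishing uniform validity of the confidence envelopes over a continuous parameter space with the correct $L_z\epsilon$ scaling, and (ii) making the zooming bookkeeping compatible with feedback being shared across a neighborhood, so that exploration is charged once per ball rather than once per point.
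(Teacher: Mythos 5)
Your proposal is correct and follows essentially the same route as the paper: the decomposition $\PR(\theta')=\DPR(\theta,\theta')+(\DPR(\theta',\theta')-\DPR(\theta,\theta'))$ with the Kantorovich--Rubinstein bound giving $L_z\epsilon$ as the effective Lipschitz constant, uniform-convergence confidence envelopes (the paper uses a Rademacher/symmetrization bound for the supremum over $\theta'$), successive elimination of regions whose lower bound exceeds the best upper bound, and dyadic-scale zooming bookkeeping with radius $\rho\sim\Delta/(L_z\epsilon)$ and $n\sim\Delta^{-2}$ samples per ball, yielding the two-term bound exactly as in the paper's phase-based analysis (early phases give the $\sqrt{T}$ term, later phases the $T^{(d+1)/(d+2)}(L_z\epsilon)^{d/(d+2)}$ term). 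The only element you omit is the paper's refinement to a \emph{sequential} zooming dimension from within-phase elimination, which is not needed for the stated informal bound.
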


We compare the bound in Theorem \ref{thm:informal} to a baseline Lipschitz bandits regret bound. The concept of zooming dimension stems from the work of~\citet{kleinberg2008multi} and serves as an instance-dependent notion of dimensionality. \citet{kleinberg2008multi} showed that sublinear regret $\tilde\cO\big(T^{\frac{d'+1}{d'+2}} L^{\frac{d'}{d'+2}}\big)$ can be achieved if the reward function is $L$-Lipschitz, where $d'$ is a zooming dimension. The performative risk can be guaranteed to be Lipschitz if the distribution map is Lipschitz and the loss $\ell(z;\theta)$ is Lipschitz in \textit{both} arguments.

The primary benefit of Theorem \ref{thm:informal} is that our regret bound scales only with the Lipschitz constant of the distribution map, rather than the Lipschitz constant of the performative risk. In particular, our result allows $\PR(\theta)$ to be highly irregular as a function of $\theta$, seeing that $\ell(z;\theta)$ as a function of $\theta$ is unconstrained. 
This difference becomes salient when $\epsilon\rightarrow 0$, meaning that the performative effects vanish:
our regret bound grows as $\tilde\cO(\sqrt{T})$ in an essentially  dimension-independent manner. More precisely, the dimension can only arise implicitly through a model class complexity term. 
On the other hand, the rate of classical Lipschitz bandits remains exponential in the dimension. 

Another difference between our regret bound and that of Lipschitz bandits is in the zooming dimension. In particular, $d'$ is a zooming dimension \emph{no smaller than} the zooming dimension we obtain in Theorem~\ref{thm:informal}.
As we will elaborate on in later sections, the benefit we derive from the zooming dimension comes from the fact that it implicitly depends on the Lipschitz constant driving the objective, which is smaller when making full use of performative feedback. 

\paragraph{Extension to location families.} In addition, we study performative regret minimization for the special case where the distribution map has a \textit{location family} form \citep{miller21echo}. We again prove regret bounds that scale only with the complexity of the distribution map, rather than the complexity of the performative risk. We adapt the LinUCB algorithm \citep{li2010contextual} to learn the hidden parameters of the location family. This enables us to achieve a $\tilde \cO(\sqrt{T})$
regret without placing any strong convexity assumptions on the performative risk that are required in \citep{miller21echo}. In particular, our result again allows $\PR(\theta)$ to be highly irregular as a function of $\theta$.

\paragraph{Consequences for finding performative optima.} While we have contextualized our work within online regret minimization, our performative confidence bounds algorithm has the additional property that it converges to the set of performative optima. Thus, if run for sufficiently many time steps, it generates a model with near-minimal performative risk: in particular, a model with risk at most $\tilde\cO\left(T^{-\frac{1}{d+2}} (L_z \epsilon)^{\frac{d}{d+2}} \right)$ greater than the minimum performative risk $\min_{\theta} \PR(\theta)$. 

More broadly, our work establishes a connection between performative prediction and the bandits literature, which we believe is a worthwhile direction for future inquiry. 


\subsection{Related work}

\paragraph{Performative prediction.}
Prior work on performative prediction has largely studied gradient-based optimization methods \citep{perdomo20pp, mendler20stochasticPP, drusvyatskiy2020stochastic, brown2020performative, miller21echo, izzo2021learn, maheshwari2021zeroth, li2021state, raydecision, dong2021approximate}.
Many of the studied procedures only converge to \emph{performatively stable} points, that is, points $\theta$ that satisfy the fixed-point condition $\theta \in \argmin_{\theta'} \E_{z\sim \cD(\theta)} \ell(z;\theta')$.
In general, stable points are not minimizers of the performative risk \citep{perdomo20pp, miller21echo}, which implies that procedures converging to stable points do not achieve sublinear performative regret. There are exceptions in the literature that focus on finding performative optima \citep{miller21echo,izzo2021learn}, but those algorithms rely on proving or assuming convexity of the performative risk; in this work we make no convexity assumptions. In fact, it is known that the performative risk can be nonconvex even when the loss $\ell(z;\theta)$ is convex and the performative effects are relatively weak~\citep{perdomo20pp,miller21echo}. One other work that studies performative optimality, without imposing convexity, is that of \citet{dong2021approximate}, but they  focus on optimization heuristics that are not guaranteed to minimize performative regret.

\paragraph{Learning in Stackelberg games.} Performative prediction is closely related to learning in \emph{Stackelberg games}: if $\cD(\theta)$ is thought of as a best response to the deployment of $\theta$ according to some unspecified utility function, then performative optima can be thought of as Stackelberg equilibria. There have been many works on learning dynamics in Stackelberg games in recent years~\citep{balcan2015commitment, jin2020local, fiez2020implicit, fiez2020local}. 
Notably, \citet{balcan2015commitment} also study the benefit of a richer feedback model: they assume the agent's type is revealed after taking an action. When combined with a known agent-response model, this allows them to directly infer the loss of unexplored strategies. In contrast, performative feedback does not imply full-information feedback.
One instance of performative prediction that has an explicit Stackelberg structure, meaning $\cD(\theta)$ is defined as a best response, is \emph{strategic classification}~\citep{hardt16strat}.
Several works have studied learning dynamics in strategic classification \citep{dong2018strategic, chen2020learning, bechavod2021gaming, zrnic2021leads}; notably, \citet{dong2018strategic} and \citet{chen2020learning} provide solutions that minimize Stackelberg regret, of which performative regret is an analog in the performative prediction context. However, all of these works rely on strong structural assumptions, such as linearity of the predictor or convexity of the risk function, which significantly reduce the amount of necessary exploration compared to the mild Lipschitzness conditions we impose in our work.

\paragraph{Continuum-armed bandits.} Particularly inspiring for our work is the literature on \emph{continuum-armed bandits} \citep{agrawal1995continuum, kleinberg2004nearly, auer2007improved, kleinberg2008multi, podimata20lipschitzbandit}. As we will elaborate on in Section \ref{sec:blackbox}, performative prediction
can be cast as a Lipschitz continuum-armed bandit problem. However, while this means that one can use an off-the-shelf Lipschitz bandit algorithm to minimize performative regret, this would generally be a conservative solution. After ``pulling an arm'' $\theta$ in performative prediction the learner observes samples from $\cD(\theta)$. As explained earlier, in combination with the structure of our objective, this feedback model is more powerful than classical bandit feedback, where a noisy version of the mean reward at $\theta$ is observed. Moreover, 
it is fundamentally different from other partial-feedback and side-information models studied in the literature, e.g.~\citep{mannor11graphfeedback,kocak14sideobs,wu15sideinfo,alon16graph}.

\subsection{Preliminaries}

Performative prediction, set up as an online learning problem, can be formalized as follows. The learner chooses models $\theta$ in the parameter space $\Theta\subset \mathbb{R}^{d_\Theta}$. We assume\footnote{Throughout we use $\|\cdot\|$ to denote the $\ell_2$-norm for vectors and the operator norm for matrices.} $\max \{\|\theta\|:\theta\in\Theta\} \leq 1$ for simplicity. The expected loss of model $\theta$ is given by $\PR(\theta) = \mathrm E_{z\sim\cD(\theta)} \ell(z;\theta)$. 
We assume that the objective function is bounded so that $\ell(z; \theta) \in [0,1]$ for all $z$ and $\theta$. 

At every time step $t$, the learner chooses a model $\theta_t$ and observes a constant number $m_0$ of i.i.d. samples,
\[\{z_t^{(i)}\}_{i\in{[m_0]}}, \text{ where } z_t^{(i)}\sim\cD(\theta_t).\] 
The regret incurred by choosing $\theta_t$ at time step $t$ is $\Delta(\theta_t):=\PR(\theta_t) - \PR(\thetaPO)$, where $\thetaPO$ is the performative optimum. 

The constant $m_0$ quantifies how many samples the learner can collect in a time window determined by how often they incur regret. For example, at the beginning of each week the learner might update the model, and thus at the end of each week they incur regret for the model they chose to deploy. In that case, $m_0$ is the number of samples the learner collects per week.
Note that a learner with larger $m_0$ collects an empirical distribution that more accurately reflects $\cD(\theta_t)$ and thus naturally minimizes regret at a faster rate.

To formally disentangle the effects of the parameter vector $\theta$ on the performative risk through the distribution map and the loss function, we use the notion of the \emph{decoupled performative risk} \citep{perdomo20pp}:
\begin{equation*}
    \DPR(\theta,\theta') := \mathrm E_{z\sim \cD(\theta)}\, \ell(z;\theta').
\end{equation*}
This object captures the risk incurred by a model $\theta'$ on the distribution $\cD(\theta)$. Note that $\PR(\theta) = \DPR(\theta, \theta)$ by definition.

To measure the complexity of the distribution map we consider how much the distribution $\cD(\theta)$ can change with changes in $\theta$, as formalized by $\epsilon$\emph{-sensitivity}.

\begin{assumption}[$\epsilon$-sensitivity~\citep{perdomo20pp}]
A distribution map $\cD(\cdot)$ is $\epsilon$-sensitive if for any pair $\theta,\theta'\in\Theta$ it holds that 
\[\mathcal W (\cD(\theta),\cD(\theta'))\leq \epsilon \|\theta-\theta'\|,\]
where $\mathcal W$ denotes the Wasserstein-1 distance.
\label{ass:lipschitz}
\end{assumption}
\noindent In the context of a traffic forecasting app, $\epsilon$ can be thought of as being proportional to the size of the user base of the app. When $\cD(\theta)$ arises from the aggregate behavior of strategic agents manipulating their features in response to a model $\theta$, the sensitivity $\epsilon$ grows when features are more easily manipulable.

\section{A black-box bandits approach}

\label{sec:blackbox}

Performative regret minimization can be set up as a continuum-armed bandits problem where an arm corresponds to a choice of model parameters $\theta$. Performative feedback is sufficient to simulate noisy zeroth-order feedback about the reward function, as assumed in bandits. When we deploy $\theta_t$, the samples from $\cD(\theta_t)$ enable us to compute an unbiased estimate
\[\PRhat(\theta_t) = \frac 1 {m_0} \sum_{i=1}^{m_0}  \ell\big(z_t^{(i)};\theta_t\big)\] 
of the risk $\PR(\theta_t)$. Moreover, since we assume the loss function is bounded, the noise in the estimate $\PRhat(\theta_t)$ is subgaussian, as typically required in bandits.

A standard condition that makes continuum-armed bandit problems tractable is a bound on how fast the reward can change when moving from one arm to a nearby arm. Formally, this regularity is ensured by assuming Lipschitzness of the reward function---in our case, Lipschitzness of the performative risk.

The dependence of $\PR(\theta)$ on $\theta$ is twofold, as seen in Equation~\eqref{eq:obj}.
Thus, the most natural way to ensure that $\PR(\theta)$ is Lipschitz is to ensure that each of these two dependencies is Lipschitz. This yields the following bound: 
\begin{lemma}[Lipschitzness of $\PR$]
\label{lemma:lipschitz}
If the loss $\ell(z;\theta)$ is $L_z$-Lipschitz in $z$ and $L_\theta$-Lipschitz in $\theta$ and the distribution map is $\epsilon$-sensitive, then the performative risk is $(L_{\theta} + \epsilon L_z )$-Lipschitz.
\end{lemma}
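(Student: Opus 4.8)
The plan is to bound $|\PR(\theta) - \PR(\theta')|$ for arbitrary $\theta, \theta' \in \Theta$ by $(L_\theta + \epsilon L_z)\|\theta - \theta'\|$. The core difficulty is that $\PR(\theta) = \DPR(\theta, \theta)$ depends on $\theta$ through \emph{both} arguments of the decoupled risk at once---through the distribution $\cD(\theta)$ and through the loss $\ell(\cdot\,; \theta)$---so the two Lipschitz hypotheses cannot be applied directly. The key idea is to decouple these two dependencies by inserting an intermediate term that freezes one argument while varying the other.

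Concretely, I would write
\[
\PR(\theta) - \PR(\theta') = \underbrace{\big(\DPR(\theta, \theta) - \DPR(\theta, \theta')\big)}_{\text{(I)}} + \underbrace{\big(\DPR(\theta, \theta') - \DPR(\theta', \theta')\big)}_{\text{(II)}},
\]
and bound each term separately via the triangle inequality. Term (I) holds the \emph{distribution} fixed at $\cD(\theta)$ and varies only the model at which the loss is evaluated, so it equals $\E_{z \sim \cD(\theta)}[\ell(z;\theta) - \ell(z;\theta')]$; applying the $L_\theta$-Lipschitzness of $\ell$ in its second argument pointwise inside the expectation and using the triangle inequality for integrals yields $|\text{(I)}| \leq L_\theta \|\theta - \theta'\|$.

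Term (II) is the more delicate part, and I expect it to be the main obstacle. Here the loss is frozen at $\ell(\cdot\,; \theta')$ but the distribution changes from $\cD(\theta)$ to $\cD(\theta')$, so $\text{(II)} = \E_{z \sim \cD(\theta)}\ell(z;\theta') - \E_{z \sim \cD(\theta')}\ell(z;\theta')$. To relate this difference of expectations to the Wasserstein distance appearing in the $\epsilon$-sensitivity assumption, I would invoke the Kantorovich--Rubinstein duality, which states that $\mathcal W(\mu, \nu) = \sup_{g:\,\mathrm{Lip}(g)\le 1} \big(\int g\, d\mu - \int g\, d\nu\big)$. Since $z \mapsto \ell(z; \theta')$ is $L_z$-Lipschitz by assumption, the rescaled map $\ell(\cdot\,;\theta')/L_z$ is $1$-Lipschitz, so applying the duality to it gives $|\text{(II)}| \leq L_z\, \mathcal W(\cD(\theta), \cD(\theta'))$, and then $\epsilon$-sensitivity (Assumption~\ref{ass:lipschitz}) bounds this by $\epsilon L_z \|\theta - \theta'\|$.

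Combining the two estimates through the triangle inequality gives $|\PR(\theta) - \PR(\theta')| \leq (L_\theta + \epsilon L_z)\|\theta - \theta'\|$, which is exactly the claimed Lipschitz constant. The only genuinely nontrivial ingredient is the duality step for term (II), which is what converts the Wasserstein-based sensitivity condition into a usable bound on the loss difference; everything else is a mechanical application of the triangle inequality and the pointwise Lipschitz bound.
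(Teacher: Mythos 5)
Your proposal is correct and follows essentially the same argument as the paper: the same decomposition of $\PR(\theta)-\PR(\theta')$ into the two decoupled-risk differences, the first bounded by $L_\theta\|\theta-\theta'\|$ via Lipschitzness of $\ell$ in $\theta$, and the second by $\epsilon L_z\|\theta-\theta'\|$ via Kantorovich--Rubinstein duality together with $\epsilon$-sensitivity. No gaps to report.
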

The intuition behind Lemma \ref{lemma:lipschitz} is that $\PR(\theta)$ is guaranteed to be Lipschitz if $\DPR(\theta,\theta')$ is Lipschitz in each argument individually. Lipschitzness in the second argument follows from requiring that the loss be Lipschitz in $\theta$. Lipschitzness in the first argument follows from combining Lipschitzness of the loss in $z$ and $\epsilon$-sensitivity of the distribution map.

\subsection{Adaptive zooming} 
Once we have established Lipschitzness of the performative risk, we can apply techniques from the Lipschitz bandits literature. \citet{kleinberg2008multi} proposed a bandit algorithm that adaptively discretizes promising regions of the space of arms, using Lipschitzness of the reward function to bound the additional loss due to discretization. Their method, called the \emph{zooming algorithm}, will serve as a baseline for our problem. The algorithm enjoys an instance-dependent regret that takes advantage of nice problem instances, while maintaining tight guarantees in the worst case. The rate depends on the \emph{zooming dimension}, which is upper bounded in the worst case by the dimension of the full space $d_\Theta$.

\begin{proposition}[Zooming algorithm \citep{kleinberg2008multi}]
Suppose $m_0 = o(\log T)$. Then, after $T$ deployments, the zooming algorithm achieves a regret bound of
\[\Reg(T) = \cO\left(T^{\frac{d+1}{d+2}} \left(\frac{\log T}{m_0}\right)^{\frac{1}{d+2}} (L_\theta + \epsilon L_z)^{\frac{d}{d+2}}\right),\]
where $d$ denotes the $(L_\theta + \epsilon L_z)$-zooming dimension. 
\label{prop:lipschitz}
\end{proposition}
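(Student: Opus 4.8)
The plan is to treat this as a reduction to the zooming algorithm of \citet{kleinberg2008multi}, identifying the correct Lipschitz constant and sub-Gaussian noise scale and then threading these through their regret decomposition. First I would invoke Lemma~\ref{lemma:lipschitz} to conclude that $\PR$ is $L$-Lipschitz with $L = L_\theta + \epsilon L_z$; this is precisely the constant that defines the zooming dimension $d$ in the statement. Next I would formalize the feedback: at each deployment of $\theta_t$ the learner averages $m_0$ i.i.d.\ samples to form $\PRhat(\theta_t)$, which is unbiased for $\PR(\theta_t)$ and, since $\ell(z;\theta)\in[0,1]$, is $\sigma$-sub-Gaussian with $\sigma = \Theta(1/\sqrt{m_0})$ by Hoeffding. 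The key reinterpretation is that each deployment counts as a single ``pull'' returning one sub-Gaussian reward observation with this reduced scale $\sigma$.

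With this identification, the confidence radius the algorithm attaches to an arm deployed $n$ times becomes $\sigma\sqrt{\log T / n} = \Theta\big(\sqrt{\log T/(m_0 n)}\big)$, i.e.\ the usual $\sqrt{\log T/n}$ radius contracted by $\sqrt{m_0}$. I would then recapitulate Kleinberg's decomposition by scale: on the clean event that all confidence intervals are valid, an active arm with suboptimality gap $\Delta$ is deployed only until its radius drops below $\Theta(\Delta)$, i.e.\ at most $\cO\big(\log T/(m_0\Delta^2)\big)$ times, contributing $\cO\big(\log T/(m_0\Delta)\big)$ to the regret. The number of simultaneously active arms at scale $\Delta$ is controlled by the $L$-zooming dimension as $\cO\big((L/\Delta)^d\big)$, so the regret from scale $\Delta$ is $\cO\big(L^d\,\Delta^{-(d+1)}\log T/m_0\big)$.

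Summing over dyadic scales $\Delta \ge \Delta_0$ (the sum being dominated by the smallest scale) and bounding the contribution of all gaps below $\Delta_0$ by the trivial $T\Delta_0$, I would balance the two terms at $\Delta_0 = \big(L^d \log T/(m_0 T)\big)^{1/(d+2)}$. Substituting back gives $\Reg(T) = \cO\big(T\Delta_0\big) = \cO\big(T^{(d+1)/(d+2)}(\log T/m_0)^{1/(d+2)} L^{d/(d+2)}\big)$, which is the claimed bound with $L = L_\theta + \epsilon L_z$. The sub-Gaussian scale $\sigma^2 = \Theta(1/m_0)$ thus surfaces exactly as the $(\log T/m_0)^{1/(d+2)}$ factor.

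I do not expect the reduction itself to be the obstacle, but two points require care. First, the role of the hypothesis $m_0 = o(\log T)$: it guarantees that the radius after a single deployment, $\Theta(\sqrt{\log T/m_0})$, exceeds the largest possible gap (at most $1$, since $\PR\in[0,1]$), so no arm is resolved in one pull and the algorithm genuinely stays in the zooming regime that produces the stated rate; for much larger $m_0$ a single sample would already resolve arms and the bound would take a different, covering-limited form. Second, since Kleinberg's analysis is stated for unit-scale noise, one must re-derive the confidence radii and re-run the per-scale counting with the factor $\sigma^2 = \Theta(1/m_0)$ carried consistently through the clean-event union bound and the zooming-dimension bookkeeping; this is routine but is the one place where the literal statement must be adapted rather than quoted directly.
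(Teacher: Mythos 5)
Your proposal is correct and matches the route the paper takes: the paper does not spell out a proof of Proposition~\ref{prop:lipschitz} but imports it from \citet{kleinberg2008multi}, with Lemma~\ref{lemma:lipschitz} supplying the Lipschitz constant $L_\theta+\epsilon L_z$ and the $m_0$-sample average supplying the $\Theta(1/\sqrt{m_0})$ sub-Gaussian noise scale that produces the $(\log T/m_0)^{1/(d+2)}$ factor (see Appendix~\ref{app:zooming} for the paper's remarks on adapting Kleinberg et al.'s analysis to Definition~\ref{def:zdim}). Your reconstruction of the per-scale counting, the balancing at $\Delta_0$, and the role of $m_0=o(\log T)$ is exactly the reduction the paper relies on.
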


The zooming dimension quantifies the niceness of a problem instance by measuring the size of a covering of near-optimal arms, instead of the entire parameter space. Roughly speaking, if the reward function is very ``flat'' in that there are many near-optimal points, then the zooming dimension is close to the dimension $d_\Theta$ of the parameter space. However, if the reward has sufficient curvature, then the zooming dimension can be much smaller than $d_\Theta$.
The zooming dimension is defined formally as follows:

\begin{definition}[$\alpha$-zooming dimension]
\label{def:zdim}
A performative prediction problem instance has $\alpha$-zooming dimension equal to $d$ if any minimal $s$-cover of any subset of $\{\theta:\Delta(\theta)\leq 16 \alpha s\}$ includes at most a constant multiple of $(3/s)^{d}$ elements from $\{\theta: 16 \alpha r \le \Delta(\theta) < 32 \alpha r\}$, for all $0<r\leq s\leq 1$.
\end{definition}

For well-behaved instances, the definition intuitively requires every minimal $s$-cover of $\{\theta:16\alpha r \le \Delta(\theta) < 32 \alpha r\}$ to have size at most of order $(3/s)^{d}$.  Definition~\ref{def:zdim} slightly differs from the definition presented in~\citep{kleinberg2008multi} and makes the dependence on the Lipschitz constant explicit; we use Definition~\ref{def:zdim} to later ease the comparison to our new algorithm. The differences between the two definitions are minor technicalities that we do not expect to alter the zooming dimension in a meaningful way, neither formally nor conceptually.  See Appendix~\ref{app:zooming} for a discussion.

\section{Making use of performative feedback}
\label{sec:perf_feedback}

In this section, we illustrate how we can take advantage of performative feedback beyond computing a point estimate of the deployed model's risk.
For now, we ignore finite-sample considerations and assume access to the entire distribution $\cD(\theta)$ after deploying a model $\theta$. We will address finite-sample uncertainty when presenting our main algorithm in the next section.


\subsection{Constructing performative confidence bounds}

First, we demonstrate how performative feedback allows constructing tighter confidence bounds on the performative risk of unexplored models, compared to only relying on Lipschitzness of the risk function $\PR(\theta)$.

\begin{figure}[t!]
    \centering
    \subfloat[Baseline confidence bounds]
    {\includegraphics[height=0.21\columnwidth]{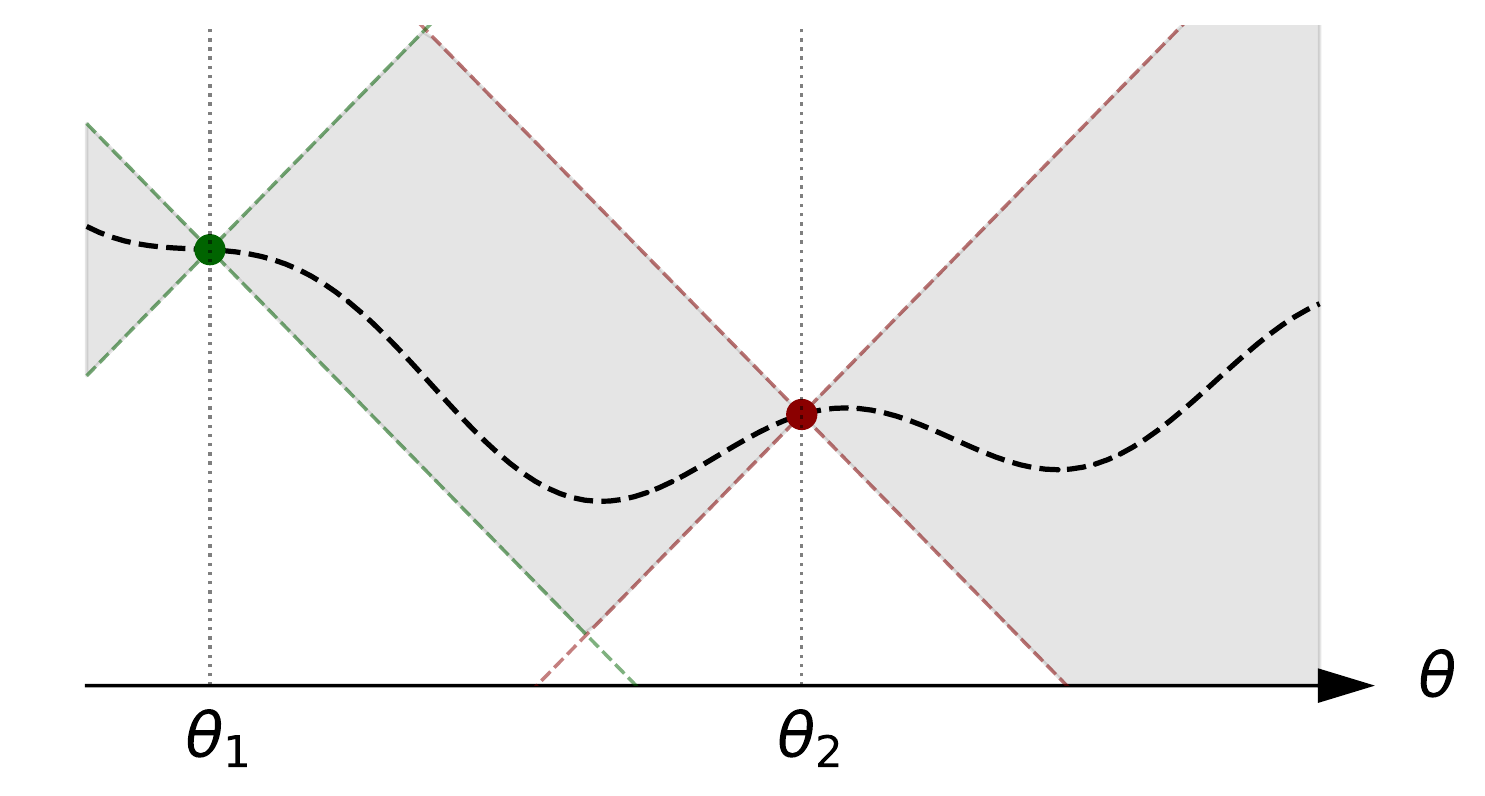}\label{fig:lipa}}\hspace{1cm}
    \subfloat[Performative confidence bounds $\quad\quad\quad\quad\quad$]
    {\includegraphics[height=0.21\columnwidth]{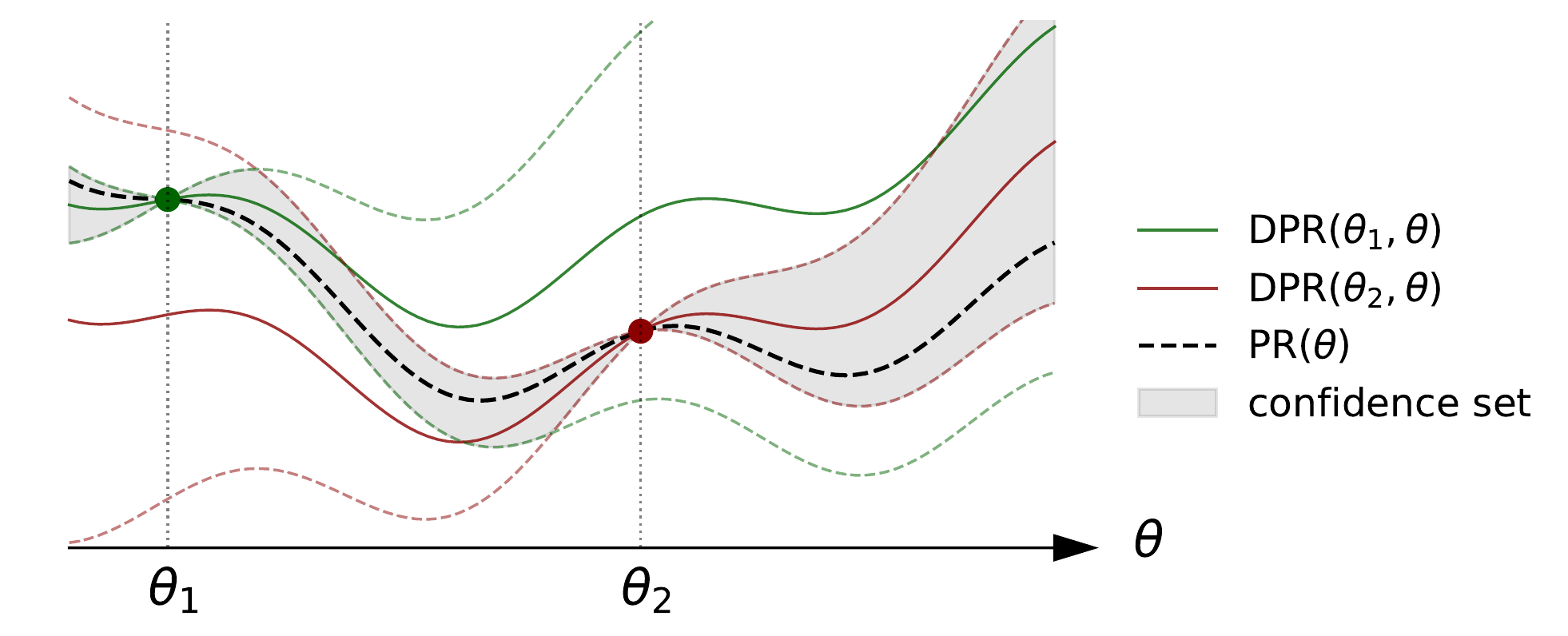}\label{fig:lipb}}
    \caption{Confidence bounds after deploying $\theta_1$ and $\theta_2$. (left) Confidence bounds via Lipschitzness, as stated in Equation~\eqref{eq:lipschitz_cb}. (right) Performative confidence bounds, as stated in Equation~\eqref{eq:tighter_lipschitz_cb}. The performative feedback model used for this illustration can be found in Appendix \ref{app:figures}.}
    \label{fig:confidence}
\end{figure}

Suppose we deploy a set of models $\cS\subseteq \Theta$ and for each $\theta\in\cS$ we observe $\cD(\theta)$.
Then, under the regularity conditions of Lemma \ref{lemma:lipschitz}, we can bound the risk of any $\theta'\in\Theta$ as 
\begin{align}
 \max_{\theta\in \cS}\;\PR(\theta)-(L_\theta + L_z\epsilon) \|\theta-\theta'\| \leq \PR(\theta')
  \leq \min_{\theta\in \cS}\;\PR(\theta)+(L_\theta + L_z\epsilon) \|\theta-\theta'\|.
\label{eq:lipschitz_cb}
 \end{align}


These confidence bounds only use $\cD(\theta)$ for the purpose of computing $\PR(\theta)$ and rely on Lipschitzness to construct confidence sets around the risk of unexplored models. However, in light of the structure of our objective function~\eqref{eq:obj}, the bounds in Equation~\eqref{eq:lipschitz_cb} do not make full use of performative feedback; in particular, access to $\cD(\theta)$ actually allows us to evaluate $\DPR(\theta,\theta')$ for \emph{any} $\theta'$. 
Importantly, this information can further reduce our uncertainty about $\PR(\theta')$, and we can bound: 
\begin{align*}
 \PR(\theta') &=  \DPR(\theta,\theta') +  \left(\DPR(\theta',\theta') - \DPR(\theta,\theta')\right)\\
 &\leq \DPR(\theta,\theta') + L_z\epsilon \|\theta-\theta'\|.
\end{align*}
Thus we can get tighter bounds on the performative risk at an unexplored parameter~$\theta'$:
\begin{align}
\label{eq:tighter_lipschitz_cb}
    \max_{\theta\in \cS} \;\DPR(\theta,\theta')-L_z\epsilon \|\theta -\theta'\|\leq \PR(\theta')
    \leq \min_{\theta\in \cS} \;\DPR(\theta,\theta')+L_z\epsilon \|\theta -\theta'\|.
\end{align}
\noindent We call the confidence bounds computed in \eqref{eq:tighter_lipschitz_cb} \emph{performative confidence bounds}.  In Figure~\ref{fig:confidence}, we visualize and contrast these confidence bounds with the confidence bounds obtained via Lipschitzness. We observe that by computing $\DPR$ we can significantly tighten the confidence regions.


The tightness of the confidence bounds depends on the set $\cS$ of deployed models. By choosing a cover of the parameter space, we can get an estimate of the performative risk that has low approximation error on the whole parameter space.

\begin{proposition}
\label{prop:cover}
Let $\cS_\gamma$ be a $\gamma$-cover of $\Theta$ and suppose we deploy all models $\theta\in \cS_\gamma$. Then, using performative feedback we can compute an estimate of the performative risk   $\PRhat(\theta)$ such that for any $\theta \in \Theta$ it holds that \[|\PR(\theta)-\PRhat(\theta)|\leq \gamma L_z\epsilon.\]
\end{proposition}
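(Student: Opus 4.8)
The plan is to estimate the risk of an arbitrary point $\theta\in\Theta$ by the decoupled risk evaluated at a nearby deployed model. Concretely, for each $\theta\in\Theta$ fix a cover point $\theta_c\in\cS_\gamma$ with $\|\theta-\theta_c\|\leq\gamma$, which exists because $\cS_\gamma$ is a $\gamma$-cover. Since $\theta_c$ has been deployed we have access to $\cD(\theta_c)$, and since the learner knows the loss $\ell(\cdot\,;\cdot)$ we can evaluate $\DPR(\theta_c,\theta)=\E_{z\sim\cD(\theta_c)}\,\ell(z;\theta)$ for this (indeed any) second argument. I would therefore define the estimate $\PRhat(\theta):=\DPR(\theta_c,\theta)$, which is computable purely from performative feedback.

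Next I would control the estimation error via the exact decomposition already used to derive the performative confidence bounds in Equation~\eqref{eq:tighter_lipschitz_cb}. Since $\PR(\theta)=\DPR(\theta,\theta)$, we have
\[
\PR(\theta)-\PRhat(\theta)=\DPR(\theta,\theta)-\DPR(\theta_c,\theta),
\]
and both terms are expectations of the \emph{same} function $z\mapsto\ell(z;\theta)$ taken under the two distributions $\cD(\theta)$ and $\cD(\theta_c)$. The key step is to bound this difference. The map $z\mapsto\ell(z;\theta)$ is $L_z$-Lipschitz by assumption, so dividing by $L_z$ makes it $1$-Lipschitz; the Kantorovich--Rubinstein dual representation of the Wasserstein-$1$ distance then gives
\[
\bigl|\DPR(\theta,\theta)-\DPR(\theta_c,\theta)\bigr|\leq L_z\,\mathcal W\bigl(\cD(\theta),\cD(\theta_c)\bigr).
\]
Invoking $\epsilon$-sensitivity (Assumption~\ref{ass:lipschitz}) to bound $\mathcal W(\cD(\theta),\cD(\theta_c))\leq\epsilon\|\theta-\theta_c\|$, followed by the cover property $\|\theta-\theta_c\|\leq\gamma$, yields $|\PR(\theta)-\PRhat(\theta)|\leq\gamma L_z\epsilon$, as claimed. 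This is precisely the one-sided inequality displayed just before Equation~\eqref{eq:tighter_lipschitz_cb}, now read as a two-sided bound (the reverse direction follows by the identical argument).

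I do not expect any serious obstacle: the statement is a direct specialization of the performative confidence bound construction, with the single deployed model taken to be the nearest cover point and the evaluation point $\theta$ playing the role of the unexplored parameter. The only subtlety worth stating carefully is the use of Wasserstein duality, which requires the loss to be Lipschitz in $z$ \emph{for each fixed} second argument---exactly the hypothesis carried by Lemma~\ref{lemma:lipschitz}---so that $L_z$ and the sensitivity $\epsilon$ combine multiplicatively and no Lipschitzness of $\ell$ in its $\theta$-argument is needed. This is what lets the bound scale with the complexity of the distribution map alone.
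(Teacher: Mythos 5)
Your proposal is correct and matches the paper's own argument: the paper likewise sets $\widehat{\PR}(\theta) = \DPR(\Pi_{\cS_\gamma}(\theta),\theta)$ for the nearest cover point and bounds the error by $L_z\epsilon\,\|\Pi_{\cS_\gamma}(\theta)-\theta\| \leq L_z\epsilon\gamma$ using the same Lipschitzness-in-the-first-argument property (derived from Kantorovich--Rubinstein duality plus $\epsilon$-sensitivity). You are simply more explicit about the duality step, which the paper delegates to the proof of Lemma~\ref{lemma:lipschitz}.
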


Proposition \ref{prop:cover} implies that after exploring the cover $\cS_\gamma$, we can find a model whose suboptimality is at most $\cO(\gamma L_z\epsilon)$. 
To contextualize the bound in Proposition \ref{prop:cover}, consider an approach that uses the same cover $\cS_\gamma$ but only relies on zeroth-order feedback, that is, $\{\PR(\theta):\theta\in \cS_\gamma\}$. Then, the only feasible estimate of $\PR$ over the whole space is $\widehat\PR(\theta) = \PR(\Pi_{\cS_\gamma}(\theta))$, where $\Pi_{\cS_\gamma}(\theta) = \argmin_{\theta'\in\cS_\gamma} \|\theta-\theta'\|$ is the projection onto the cover  $\cS_\gamma$. This zeroth-order approach only guarantees an accuracy of $|\PR(\theta)-\PRhat(\theta)|\leq (L_z\epsilon + L_\theta)\gamma$, a strictly weaker approximation than the one in Proposition \ref{prop:cover}.

\subsection{Sequential elimination of suboptimal models}

\begin{figure}[t!]
    \centering
    \subfloat[Baseline confidence bounds]
    {\includegraphics[height=0.21\columnwidth]{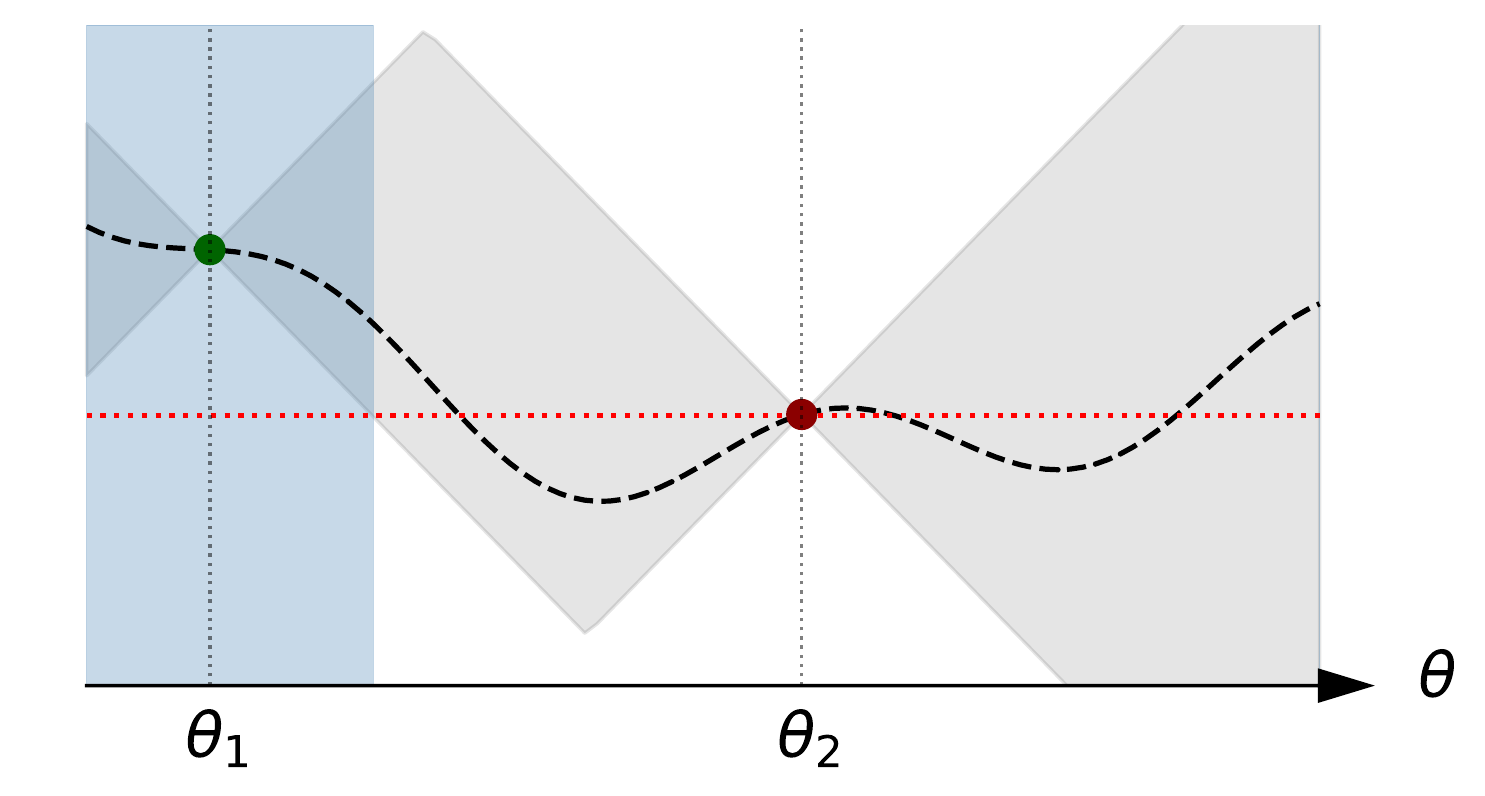}\label{fig:disca}} \hspace{1cm}
    \subfloat[Performative confidence bounds $\quad\quad\quad\quad\quad$]
    {\includegraphics[height=0.21\columnwidth]{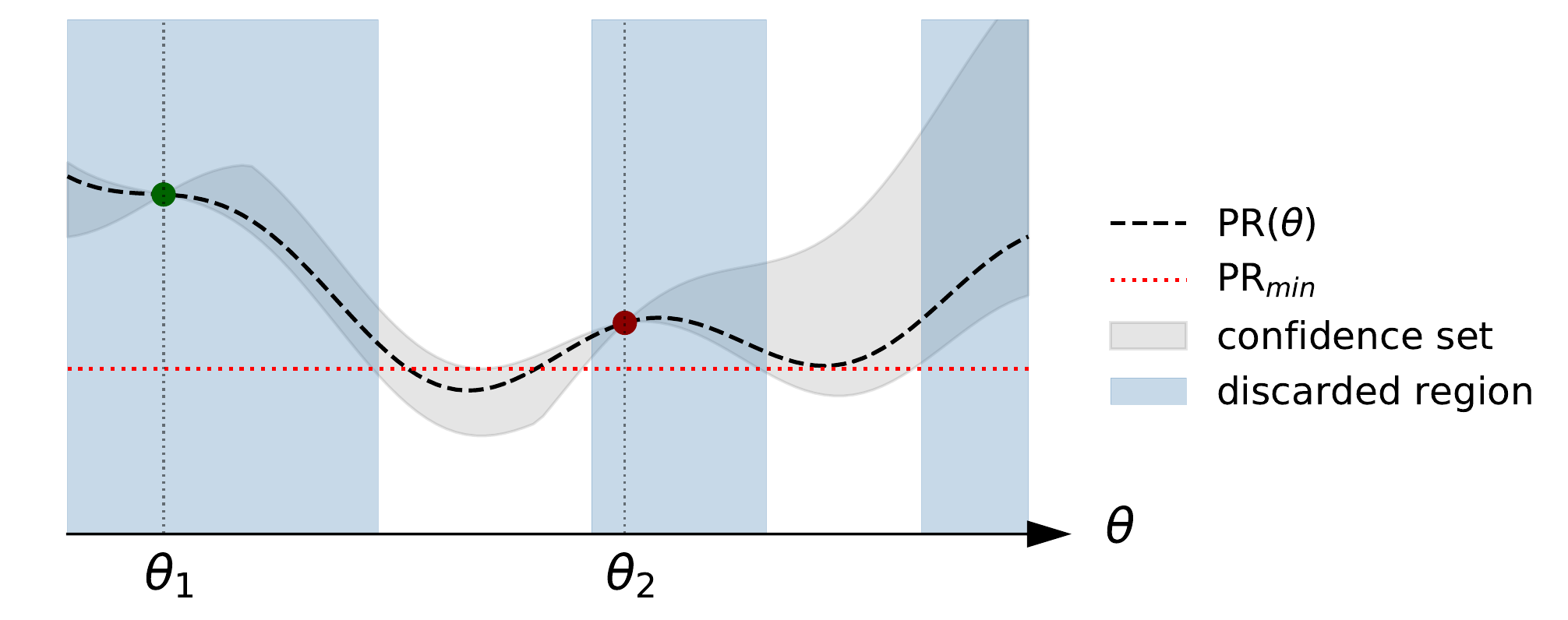}\label{fig:discb}}
    \caption{Performative feedback allows discarding unexplored suboptimal models even in regions that have not been explored. A model $\theta$ is discarded if  $\PR_\text{LB}(\theta)>\PRmin$. 
    The loss function and feedback model are the same as in Figure~\ref{fig:confidence}.}
    \label{fig:compare-discarding}
\end{figure}



Now we show how performative confidence bounds can guide exploration. Specifically, we show that every deployment informs the risk of unexplored models, which allows us to sequentially discard suboptimal regions of the parameter space.

To develop a formal procedure for discarding points, let $\PR_\mathrm{LB}(\theta)$ denote a lower confidence bound on $\PR(\theta)$ and $\PRmin$ denote an upper confidence bound on $\PR(\thetaPO)$ based on the information from the models deployed so far:
\begin{align*}
    &\PR_\mathrm{LB}(\theta) = \max_{\theta' \text{ already deployed}} \left(\DPR(\theta',\theta) - L_z \epsilon \|\theta-\theta'\|\right),\\
    &\PRmin =  \min_{\theta\in\Theta} \;\min_{\theta' \text{ already deployed}}  \;\left(\DPR(\theta',\theta) +L_z\epsilon  \|\theta'-\theta\|\right).
\end{align*}
It is not difficult to see that the following lower bound on the suboptimality of model $\theta$ holds:
\begin{proposition}
\label{prop:elimination}
For all $\theta\in\Theta$, we have $\Delta(\theta)\geq \PRLB(\theta)-\PRmin$.
\end{proposition}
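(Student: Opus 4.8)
The plan is to reduce the claim to two one-sided validity statements about the confidence quantities and then combine them by a short rearrangement. Writing out $\Delta(\theta) = \PR(\theta) - \PR(\thetaPO)$, the target inequality $\Delta(\theta) \geq \PRLB(\theta) - \PRmin$ is equivalent, after moving terms, to
\[
\PR(\theta) - \PRLB(\theta) \;\geq\; \PR(\thetaPO) - \PRmin .
\]
So it suffices to show that the left-hand side is nonnegative and the right-hand side is nonpositive; that is, (i) $\PRLB(\theta) \leq \PR(\theta)$ for every $\theta$, and (ii) $\PRmin \geq \PR(\thetaPO)$. Both follow directly from the performative confidence bounds in Equation~\eqref{eq:tighter_lipschitz_cb}, taking $\cS$ to be the set of already-deployed models.

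For (i), I would apply the lower bound in \eqref{eq:tighter_lipschitz_cb} with query point $\theta$. For each deployed $\theta'$ it gives $\DPR(\theta',\theta) - L_z\epsilon\|\theta-\theta'\| \leq \PR(\theta)$; since this holds for every deployed $\theta'$, taking the maximum over deployed models preserves the inequality, which is precisely $\PRLB(\theta)\leq \PR(\theta)$. (If one prefers a self-contained argument, this single inequality is just the decomposition $\PR(\theta) = \DPR(\theta',\theta) + \bigl(\DPR(\theta,\theta) - \DPR(\theta',\theta)\bigr)$ together with the bound $|\DPR(\theta,\theta)-\DPR(\theta',\theta)|\le L_z \mathcal{W}(\cD(\theta),\cD(\theta')) \le L_z\epsilon\|\theta-\theta'\|$, which uses $L_z$-Lipschitzness of $\ell(\cdot;\theta)$ and $\epsilon$-sensitivity via Kantorovich--Rubinstein duality.)

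For (ii), I would use the upper bound in \eqref{eq:tighter_lipschitz_cb}. For a fixed query point $\theta$, the inner minimum over deployed models satisfies $\min_{\theta' \text{ deployed}} \bigl(\DPR(\theta',\theta) + L_z\epsilon\|\theta'-\theta\|\bigr) \geq \PR(\theta)$. The key step is that this holds simultaneously for every $\theta\in\Theta$, so the pointwise inequality survives the outer minimization over $\theta$: since $f(\theta)\ge \PR(\theta)$ for all $\theta$ implies $\min_\theta f(\theta) \ge \min_\theta \PR(\theta)$, we obtain
\[
\PRmin = \min_{\theta\in\Theta}\,\min_{\theta' \text{ deployed}}\bigl(\DPR(\theta',\theta) + L_z\epsilon\|\theta'-\theta\|\bigr) \;\geq\; \min_{\theta\in\Theta}\PR(\theta) = \PR(\thetaPO).
\]
Combining (i) and (ii) with the displayed rearrangement yields $\PR(\theta)-\PRLB(\theta)\ge 0\ge \PR(\thetaPO)-\PRmin$, which is the claim.

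There is essentially no hard analytic step here; the substantive content is entirely contained in the earlier derivation of the performative confidence bounds. The only points requiring slight care are the monotonicity of the minimum used in (ii) — that a pointwise lower bound on the inner minimum transfers to a lower bound after taking the outer minimum over $\theta$ — and keeping the argument order of $\DPR$ (deployed model first, query point second) consistent with Equation~\eqref{eq:tighter_lipschitz_cb} throughout.
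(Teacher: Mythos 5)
Your proposal is correct and follows essentially the same route as the paper: both reduce the claim to the two one-sided facts $\PRLB(\theta)\leq\PR(\theta)$ and $\PRmin\geq\PR(\thetaPO)$, each of which is an instance of the $(L_z\epsilon)$-Lipschitzness of $\DPR$ in its first argument, and then combine them by rearrangement. The extra care you note about the order of minimizations in (ii) is handled identically in the paper's argument.
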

In particular, models $\theta$ with $\PRLB(\theta)>\PRmin$ cannot be optimal. We recall our toy example from Figure~\ref{fig:confidence} and illustrate in Figure~\ref{fig:compare-discarding} the parameter configurations we can discard after the deployment of two models, $\theta_1$ and $\theta_2$. We can see that access to $\DPR$ allows us to discard a large portion of the parameter space, and, in contrast to the baseline black-box approach, it is possible to discard regions of the space that have not been explored.

\section{Performative confidence bounds algorithm}\label{sec:algorithm}

We introduce our main algorithm that builds on the two insights from the previous section. We furthermore provide a rigorous, finite-sample analysis of its guarantees.

\subsection{Algorithm overview}
Our \emph{performative confidence bounds} algorithm, formally stated in Algorithm~\ref{alg:adaptive-exploration}, takes advantage of performative feedback by assessing the risk of unexplored models and thus guiding exploration.
We give an overview of the main steps.

Inspired by the successive elimination algorithm \citep{even2002pac}, the algorithm keeps track of and refines an \emph{active} set of models $\cA\subseteq \Theta$. Roughly speaking, active models are those that are estimated to have low risk and only they are admissible to deploy. 
To deal with finite-sample uncertainty, the algorithm proceeds in phases which progressively refine the precision of the finite-sample risk estimates. More precisely, in phase $p$ the algorithm chooses an error tolerance $\gamma_p$ and deploys a model for $n_p$ steps. In each step $m_0$ samples induced by the deployed model are collected, and $n_p$ is chosen so that the inferred estimates of $\DPR$ are $\gamma_p$-accurate. Formally, if $\theta$ is deployed in phase $p$, we collect an empirical distribution $\widehat \cD(\theta)$ of $n_p m_0$ samples so that $|\widehat\DPR(\theta,\theta') - \DPR(\theta,\theta')| \leq \gamma_p$ for all $\theta'$ with high probability, where \[\widehat\DPR(\theta,\theta'):=\E_{z\sim\widehat \cD(\theta)}\, \ell(z;\theta').\] These estimates of $\DPR$ are used to construct performative confidence bounds and refine $\cA$. 

Each phase begins by constructing a net of the current active set $\cA$. The points in the net are sequentially deployed in the phase, unless they are deemed to be suboptimal based on previous deployments in that phase and are in that case eliminated. During phase $p$, we denote by $\cP_p$ the running set of deployed points and by $\cS_p$ the running set of net points that have not been discarded. We initialize $\cS_p$ to a minimal $r_p$-net of the current set of active points $\cA$, denoted $\mathcal{N}_{r_p}(\cA)$, where $r_p$ is proportional to $\gamma_p$. A net point $\theta$ gets eliminated from $\cS_p$ if no point in $\text{Ball}_{r_p}(\theta) := \{\theta'\in\Theta: \|\theta'-\theta\| \leq r_p\}$ is active. This means that we may deploy suboptimal points in the net if they help inform active points nearby.

\begin{algorithm}[t!]
\caption{Performative Confidence Bounds Algorithm}\label{alg:adaptive-exploration}
\begin{algorithmic}[1]
\Require time horizon $T$, number of samples collected per step $m_0$, sensitivity parameter $\epsilon$, Lipschitz constant $L_z$, complexity bound $\mathfrak{C}$
\State Initialize $\cA \leftarrow \Theta$
\For{phase $p=0,1,\dots$} 
\State Set error tolerance $\gamma_p = 2^{-p}$ and net radius $r_p = \frac{\gamma_p}{L_z \epsilon}$
\State Let $n_p = \left\lceil \frac{\left(2 \mathfrak{C} + 3 \sqrt{\log T}\right)^2}{ \gamma_p^2 m_0}\right\rceil$ 
\State Initialize $\mathcal S_p\leftarrow \mathcal N_{r_p}(\cA)$ \Comment{Initialize $\cS_p$ to minimal $r_p$-cover of $\cA$}
\State Initialize $\cP_p \leftarrow \emptyset$
\While{$\mathcal S_p\neq \emptyset$}
\State Draw $\theta_{\text{net}}\in\mathcal S_p$ uniformly at random
\State Deploy $\theta_t$ for $n_p$ steps to form $\widehat \DPR(\theta_{\text{net}},\cdot)$
\State $\mathcal S_p \leftarrow \mathcal S_p \setminus \theta_{\text{net}}$ 
\State $\cP_p\leftarrow \cP_p \cup \theta_{\text{net}}$ \Comment{Update set of deployed models}
\State $\PRmin \leftarrow \min_{\theta\in\Theta} \min_{\theta' \in \cP_p} \,\widehat{ \mathrm{DPR}}(\theta',\theta)+L_z\epsilon  \|\theta'-\theta\|$ \Comment{Update estimate of $\PR(\thetaPO)$} 
\State $\PR_\text{LB}(\theta)\leftarrow \max_{\theta' \in \cP_p} \left(\widehat{ \mathrm{DPR}}(\theta',\theta)-L_z\epsilon \|\theta'-\theta\|\;\right)\forall \theta\in\cA$\Comment{Update LB for all models}
\State $\mathcal A \leftarrow \mathcal A \setminus \mathcal \{\theta\in\cA: \PR_{\text{LB}}(\theta)> \PRmin + 2 \gamma_p\}$ \Comment{Update active region} 
\State $\mathcal S_p \leftarrow \mathcal S_p \setminus \{\theta \in \mathcal S_p : \text{Ball}_{r_p}(\theta)\cap \cA = \emptyset \}$ \Comment{Remove net points in deactivated regions} 
\EndWhile
\EndFor
\end{algorithmic}
\end{algorithm}

\subsection{Comparison with adaptive zooming algorithm}

While we borrow the idea of an instance-dependent zooming dimension from \citet{kleinberg2008multi}, Algorithm \ref{alg:adaptive-exploration} and its analysis are substantially different from prior work. In particular, \citet{kleinberg2008multi} study an adaptive zooming algorithm which combines a UCB-based approach with an arm activation step. Adapting this method to our setting encounters several obstacles that we describe below. 

First, a naive application of the adaptive zooming algorithm proposed by \citet{kleinberg2008multi} does not lead to sublinear regret in our setting, unless we assume Lipschitzness of $\PR$. Their rule for activating new arms requires that the reward of arms within a given radius in Euclidean distance of the pulled arm is similar. However, without Lipschitzness of $\PR$, there is no radius that would ensure this property. 

Given the shortcomings of this exploration strategy, one might imagine that selecting a better distance between arms, e.g. one based on performative confidence bounds, would result in a better algorithm. A natural distance function would be $d(\theta, \theta')$ taken as (an empirical estimate of) $\PR(\theta) - \DPR(\theta, \theta') + L_z \epsilon \|\theta - \theta'\|$. The challenge is that the analysis in \citep{kleinberg2008multi} explicitly requires symmetry of the distance function, which $d(\theta, \theta')$ violates.

Therefore, to single out the $L_z \epsilon$ dependence, it is necessary to disentangle learning the structure of the distribution map from the elimination of arms based on reward, which is in stark contrast with UCB-style adaptive zooming algorithms. Algorithm \ref{alg:adaptive-exploration} achieves this by relying on a novel adaptation of successive elimination.

\subsection{Regret bound}

Before we state the regret bound for Algorithm~\ref{alg:adaptive-exploration}, let us comment on an important component in the analysis. Recall that throughout the algorithm we operate with finite-sample estimates of the decoupled performative risk to bound the risk of unexplored models. Specifically, for any deployed $\theta$, we make use of $\widehat{\mathrm{DPR}}(\theta,\theta')$ for \emph{all} $\theta'$. Since we need these estimates to be valid simultaneously for all $\theta'$, we rely on uniform convergence. As such, the Rademacher complexity of the loss function class naturally enters the bound.

\begin{definition}[Rademacher complexity]
Given a loss function $\ell(z;\theta)$, we define $\mathfrak{C}^*(\ell)$ to be:
\[\mathfrak{C}^*(\ell) 
= \sup_{\theta \in \Theta} \;\sup_{n \in\mathbb{N}}\; \sqrt{n} \cdot \E_{\epsilon, z^{\theta}} \left(\sup_{\theta' \in \Theta} \Big|\frac{1}{n} \sum_{j=1}^n \epsilon_j \ell(z_j^{\theta}; \theta')\Big|\right),\]
where $\epsilon_j\sim \mathrm{Rademacher}$ and $z_j^{\theta}\sim \cD(\theta)$, $\forall j\in[n]$, which are all independent of each other.
\end{definition}

Now we can state our regret guarantee for Algorithm~\ref{alg:adaptive-exploration}.
\begin{theorem}[Main regret bound]
\label{thm:fine-grained-seq}
Assume the loss $\ell(z;\theta)$ is $L_z$-Lipschitz in $z$ and let $\epsilon$ denote the sensitivity of the distribution map. Suppose that $\mathfrak{C}$ is any value such that $\mathfrak{C}^*(\ell) \leq \mathfrak{C}$ and $m_0 = o(\mathcal{B}_{\log T,\mathfrak{C}}^2)$,
where $\mathcal{B}_{\log T,\mathfrak{C}}:=\sqrt{\log T} + \mathfrak{C}$.
Then, after $T$ time steps, Algorithm~\ref{alg:adaptive-exploration} achieves a regret bound of
\begin{align*}
  \Reg(T) = \cO\Bigg(T^{\frac{d+1}{d+2}}\left(\frac{ (L_z \epsilon)^d \mathcal{B}_{\log T,\mathfrak{C}}^2}{m_0}\right)^{\frac{1}{d+2}}  + \sqrt{T} \;\frac{\mathcal{B}_{\log T,\mathfrak{C}}}{\sqrt{m_0}} \Bigg),
\end{align*}
where $d$ is the $(L_z\epsilon)$-sequential zooming dimension (see Definition \ref{def:zdim-seq-our-algo}).
\end{theorem}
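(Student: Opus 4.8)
The plan is to condition on a high-probability \emph{good event} on which all finite-sample estimates are accurate, and then run an essentially deterministic argument establishing three facts: (i) no near-optimal model is ever eliminated, (ii) the active set shrinks geometrically in suboptimality from phase to phase, and (iii) the per-phase regret is controlled by the number of deployed net points, which the sequential zooming dimension bounds. I would begin with the concentration step. Fix a phase $p$ and a deployed net center $\theta$; I need $\widehat{\DPR}(\theta,\theta')=\E_{z\sim\widehat\cD(\theta)}\ell(z;\theta')$ to be within $\gamma_p$ of $\DPR(\theta,\theta')$ \emph{simultaneously} over all $\theta'$. Since this is a uniform deviation of an empirical mean over the class $\{z\mapsto\ell(z;\theta'):\theta'\in\Theta\}$ on $n_p m_0$ i.i.d.\ samples, symmetrization plus a bounded-differences inequality gives $\sup_{\theta'}|\widehat{\DPR}(\theta,\theta')-\DPR(\theta,\theta')|\le (2\mathfrak{C}+c\sqrt{\log T})/\sqrt{n_p m_0}$ with probability $1-T^{-\Omega(1)}$, where $\mathfrak{C}\ge\mathfrak{C}^*(\ell)$ absorbs the Rademacher complexity. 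The choice $n_p=\lceil(2\mathfrak{C}+3\sqrt{\log T})^2/(\gamma_p^2 m_0)\rceil$ is exactly what forces this bound below $\gamma_p$; a union bound over the at most $T$ deployments defines the good event $\cE$ with $\Pr[\cE]\ge 1-o(1)$, and the off-$\cE$ regret is negligible because $\Delta\le 1$.

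On $\cE$ I would next establish validity of the performative confidence bounds. Combining $\gamma_p$-accuracy with the decoupled sandwich $\DPR(\theta',\theta)-L_z\epsilon\|\theta'-\theta\|\le\PR(\theta)\le\DPR(\theta',\theta)+L_z\epsilon\|\theta'-\theta\|$ from \eqref{eq:tighter_lipschitz_cb} yields $\PRLB(\theta)\le\PR(\theta)+\gamma_p$ and $\PRmin\ge\PR(\thetaPO)-\gamma_p$. Hence the elimination rule $\PRLB(\theta)>\PRmin+2\gamma_p$ can never fire at $\thetaPO$, so $\thetaPO$ remains active in every phase. The delicate converse is that every model still active at the \emph{end} of phase $p$ is $\cO(\gamma_p)$-suboptimal. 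The structural key is that the while-loop terminates with $\cS_p=\emptyset$ and a net point is discarded only when its $r_p$-ball contains no active model; therefore any $\theta$ still active at the end of the phase has a net center within $r_p$ that was actually deployed. That center certifies $\PRLB(\theta)\ge\PR(\theta)-3\gamma_p$ (using $L_z\epsilon\,r_p=\gamma_p$), while the deployment near the still-active $\thetaPO$ forces $\PRmin\le\PR(\thetaPO)+\cO(\gamma_p)$; since $\theta$ survived, $\PRLB(\theta)\le\PRmin+2\gamma_p$, and chaining these inequalities gives $\Delta(\theta)=\cO(\gamma_p)$.

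I expect this coverage argument to be the main obstacle, precisely because the net is pruned \emph{dynamically} as models are deactivated. One must argue that the net center certifying $\theta$ cannot be discarded before $\theta$ itself is eliminated: this is an ordering argument that exploits the fact that $\PRmin$, the lower bounds $\PRLB(\cdot)$, and the active set are all recomputed after \emph{every single} deployment, so once both the certifying center near $\theta$ and the center near $\thetaPO$ have been deployed, the elimination check necessarily removes $\theta$. The remaining subtlety is that net centers are chosen inside $\cA$, which is what lets me bound the regret of a deployed center by that of an active model (rather than invoking Lipschitzness of $\PR$, which we deliberately do not assume).

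Finally, I would account for regret phase by phase. Since every model deployed in phase $p$ was active at the start of the phase, the previous step gives $\Delta=\cO(\gamma_{p-1})=\cO(\gamma_p)$, so $\Reg_p\le N_p\,n_p\,\cO(\gamma_p)$ where $N_p$ is the number of deployed net points. These models lie in the near-optimal region $\{\theta:\Delta(\theta)\le\cO(\gamma_p)\}$ and form a subset of a minimal $r_p=\gamma_p/(L_z\epsilon)$-net of it, so the $(L_z\epsilon)$-sequential zooming dimension yields $N_p=\cO\!\big(\max\{1,(L_z\epsilon/\gamma_p)^d\}\big)$. Substituting $n_p=\cO(\mathcal{B}_{\log T,\mathfrak{C}}^2/(\gamma_p^2 m_0))$ and $\gamma_p=2^{-p}$ makes both $\Reg_p$ and the per-phase time $N_p n_p$ grow geometrically in $p$, so each sum is dominated by the final phase $P$. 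Identifying $P$ through the budget constraint $\sum_{p\le P}N_p n_p\le T$ and eliminating $\gamma_P$ produces the two summands: the $1$ in $\max\{1,\cdot\}$ gives the $\sqrt{T}\,\mathcal{B}_{\log T,\mathfrak{C}}/\sqrt{m_0}$ term (the regime that survives as $\epsilon\to0$ and the net collapses to a single point), while the $(L_z\epsilon/\gamma_p)^d$ contribution gives $T^{\frac{d+1}{d+2}}\big((L_z\epsilon)^d\mathcal{B}_{\log T,\mathfrak{C}}^2/m_0\big)^{\frac{1}{d+2}}$. A concluding check that the residual time inside the incomplete last phase and the off-$\cE$ contribution are lower order completes the bound.
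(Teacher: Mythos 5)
Your overall architecture matches the paper's: a high-probability clean event via symmetrization and a union bound, the observation that $\thetaPO$ survives the elimination rule (Lemma~\ref{lemma:Anonempty}), and the argument that every model active at the end of phase $p$ has a deployed net center within $r_p$ and is therefore $\cO(\gamma_p)$-suboptimal (Lemma~\ref{lemma:approx_sequential}, facts (F1)--(F2)). Those steps are sound and essentially identical to the paper's.

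The gap is in your final accounting. You claim that the deployed net points in phase $p$ number $N_p=\cO\bigl(\max\{1,(L_z\epsilon/\gamma_p)^d\}\bigr)=\cO\bigl(\max\{1,(1/r_p)^d\}\bigr)$, and you use this both to bound $\Reg_p\le N_p n_p\,\cO(\gamma_p)$ and to locate the final phase via the budget $\sum_{p\le P}N_p n_p\le T$. But Definitions~\ref{def:zdim} and~\ref{def:zdim-seq-our-algo} only bound the number of cover points in each \emph{suboptimality band} $\{16\alpha r\le\Delta(\theta)<32\alpha r\}$ by $\cO((3/s)^d)$; they say nothing about the total size of the cover of the near-optimal region. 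When $\PR$ is nearly flat around the optimum, the zooming dimension is $0$ yet no net point is eliminated within a phase, so $N_p=\Theta((1/r_p)^{d_\Theta})$; bounding each deployed point's suboptimality by the worst case $\cO(\gamma_p)$ then reintroduces the ambient dimension. The whole point of the zooming dimension is that near-optimal points may be numerous but are individually cheap, and extracting that requires weighting counts by band: this is exactly what the paper does in Lemmas~\ref{lemma:armscountsequential} and~\ref{lemma:regrettarget}. There, models are partitioned into bands $\cE_i=\{\Delta(\theta)\in[8\cdot 2^{-i}L_z\epsilon,16\cdot 2^{-i}L_z\epsilon)\}$; the sequential zooming dimension bounds $\E|\cP_p\cap\cE_i|$ per band and per phase; Lemma~\ref{lemma:approx_sequential} limits band $\cE_i$ to phases $p\le\log_2(1/(L_z\epsilon))+i+1$; bands with $i>N$ are handled not by counting but by the crude cap $T\cdot\cO(2^{-N}L_z\epsilon)$ on their total regret; and optimizing over $N$ produces the $T^{\frac{d+1}{d+2}}$ term. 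Your proposal needs this band decomposition (or an equivalent threshold-and-optimize argument) in place of the single claim on $N_p$; as written, that step does not follow from the definitions and the derivation of the exponent $\frac{d+1}{d+2}$ does not go through. (The early phases with $r_p\ge 1$, where the net is a single point and the $\sqrt{T}\,\mathcal{B}_{\log T,\mathfrak{C}}/\sqrt{m_0}$ term originates, are unaffected and match the paper's Lemma~\ref{lemma:phase0}.)
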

\begin{remark}[Consequences for finding performative optima]
Algorithm~\ref{alg:adaptive-exploration} has the additional property that it generates a model with near-minimal performative risk. In particular, an intermediate step in the proof of Theorem~\ref{thm:fine-grained-seq} shows if $T$ is sufficiently large, the final iterate $\theta_T$ of Algorithm~\ref{alg:adaptive-exploration} satisfies:
\[\E \left[\PR(\theta_T) - \min_{\theta \in \Theta} \PR(\theta)\right] \leq  \cO\left(T^{-\frac{1}{d+2}} \left(\frac{ (L_z \epsilon)^d \mathcal{B}_{\log T,\mathfrak{C}}^2}{m_0}\right)^{\frac{1}{d+2}}  \right), \]
where $d$ is the $(L_z \epsilon)$-zooming dimension. 
\end{remark}

Notice that the regret in Theorem \ref{thm:fine-grained-seq} depends on the sequential zooming dimension (formally defined in Definition \ref{def:zdim-seq-our-algo}). This sequential variant of zooming dimension accounts for the sequential elimination of models within each phase. We will show in the next section that the sequential zooming dimension is upper bounded by the usual zooming dimension (see Proposition~\ref{prop:zooming}).  

The primary advantage of Theorem \ref{thm:fine-grained-seq} over the Lipschitz bandit baseline can be seen by examining the first term in the regret bound. This term resembles the black-box regret bound from Section \ref{sec:blackbox}; however, the key difference is that that the bound of Theorem~\ref{thm:fine-grained-seq} depends on the complexity of the distribution map rather than that of the performative risk. In particular, the Lipschitz constant is $L_z \epsilon$ and not $L_{\theta} + L_z \epsilon$. The advantage is pronounced when $\epsilon \rightarrow 0$, making the first term of the bound in Theorem~\ref{thm:fine-grained-seq} vanish so only the $\cO(\sqrt{T})$ term remains. On the other hand, the bound in Proposition \ref{prop:zooming} maintains an exponential dimension dependence. 

Taking the limit as $\epsilon \rightarrow 0$ also reveals why the second term in the bound emerges. Even if the distribution map is constant, there is regret arising from finite-sample error. This is a key conceptual difference in the meaning of Lipschitzness of the distribution map versus that of the performative risk: $L_{\theta} + L_z \epsilon$ being $0$ implies that $\PR$ is flat and thus all models are optimal, while performative regret minimization is nontrivial even if $L_z \epsilon = 0$. Unlike the first term, the second term due to finite samples is dimension-independent apart from any dependence implicit in the Rademacher complexity.

We note that the presence of the Rademacher complexity term $\mathfrak{C}^*(\ell)$ makes a direct comparison of the bound in Theorem \ref{thm:fine-grained-seq} and the bound in Proposition \ref{prop:zooming} subtle. When the Rademacher complexity is very high, the regret bound in Theorem \ref{thm:fine-grained-seq} may be worse. Nonetheless, for many natural function classes, the Rademacher complexity is polynomial in the dimension; in these cases, Theorem \ref{thm:fine-grained-seq} can substantially outperform the regret bound in Proposition \ref{prop:zooming}.

Another key feature of the regret bound in Theorem \ref{thm:fine-grained-seq} worth highlighting is the zooming dimension. Definition~\ref{def:zdim} allows us to directly compare the dimension in Theorem~\ref{thm:fine-grained-seq} with the dimension in Proposition \ref{prop:zooming}:
the $(L_z\epsilon)$-zooming dimension of Algorithm~\ref{alg:adaptive-exploration} is no larger than, and most likely smaller than, the $(L_{\theta} + L_z\epsilon)$-zooming dimension in the black-box approach. Moreover, the sequential variant of zooming dimension in Theorem~\ref{thm:fine-grained-seq} can further reduce the dimension. 

Finally, the main assumption underpinning the bound in Theorem \ref{thm:fine-grained-seq} is that $\DPR$ is $(L_z \epsilon)$-Lipschitz in its first argument. Assumption~\ref{ass:lipschitz} coupled with Lipschitzness of the loss in the data achieves this. However, this property can hold with different regularity assumptions on the distribution map and loss function; e.g., if the loss is bounded and the distribution map is Lipschitz in total variation distance.

\begin{figure}[t!]
    \centering
    \subfloat
    {\includegraphics[height=0.21\columnwidth]{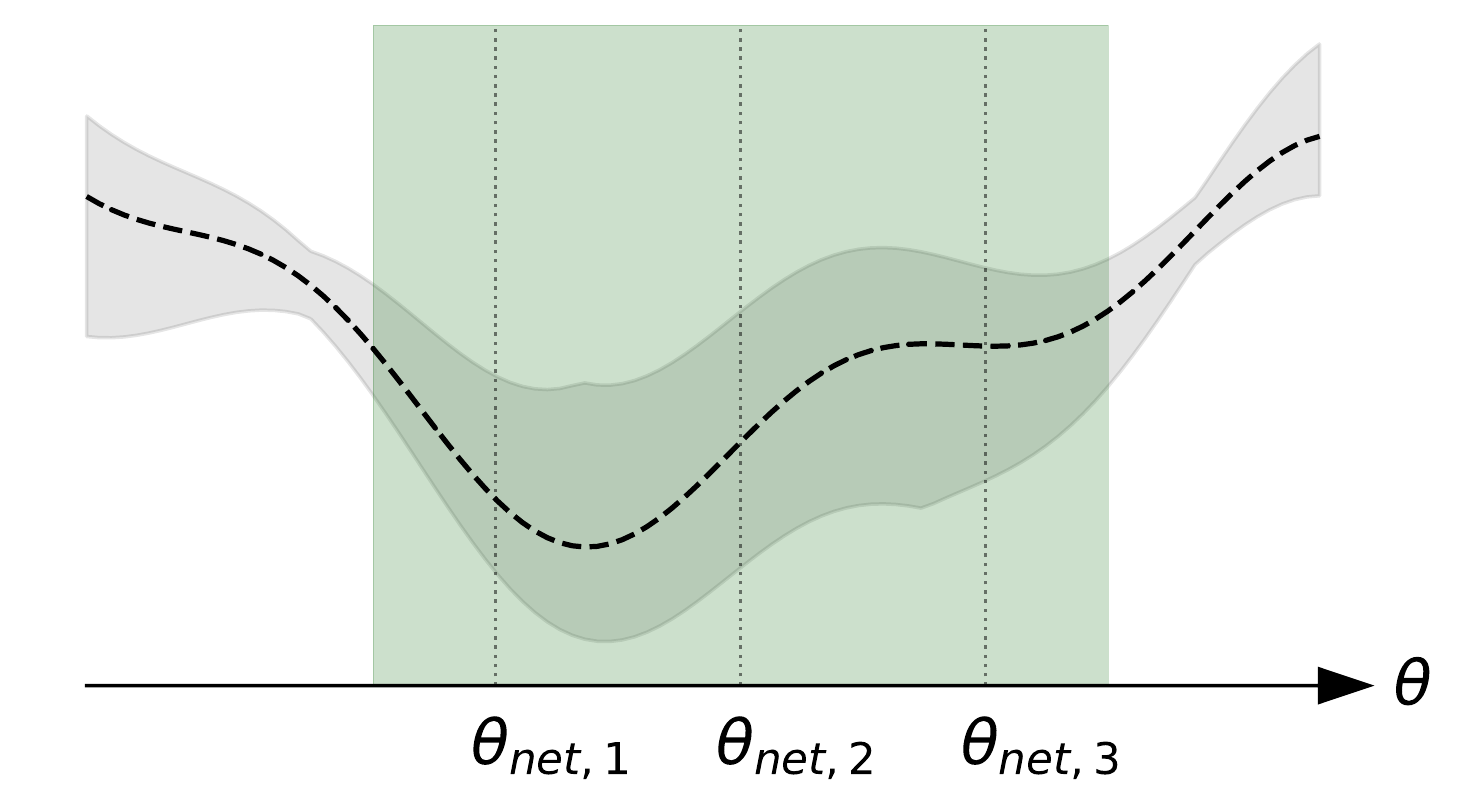}\label{fig:disc--a}}\hspace{1cm}
    \subfloat
    {\includegraphics[height=0.21\columnwidth]{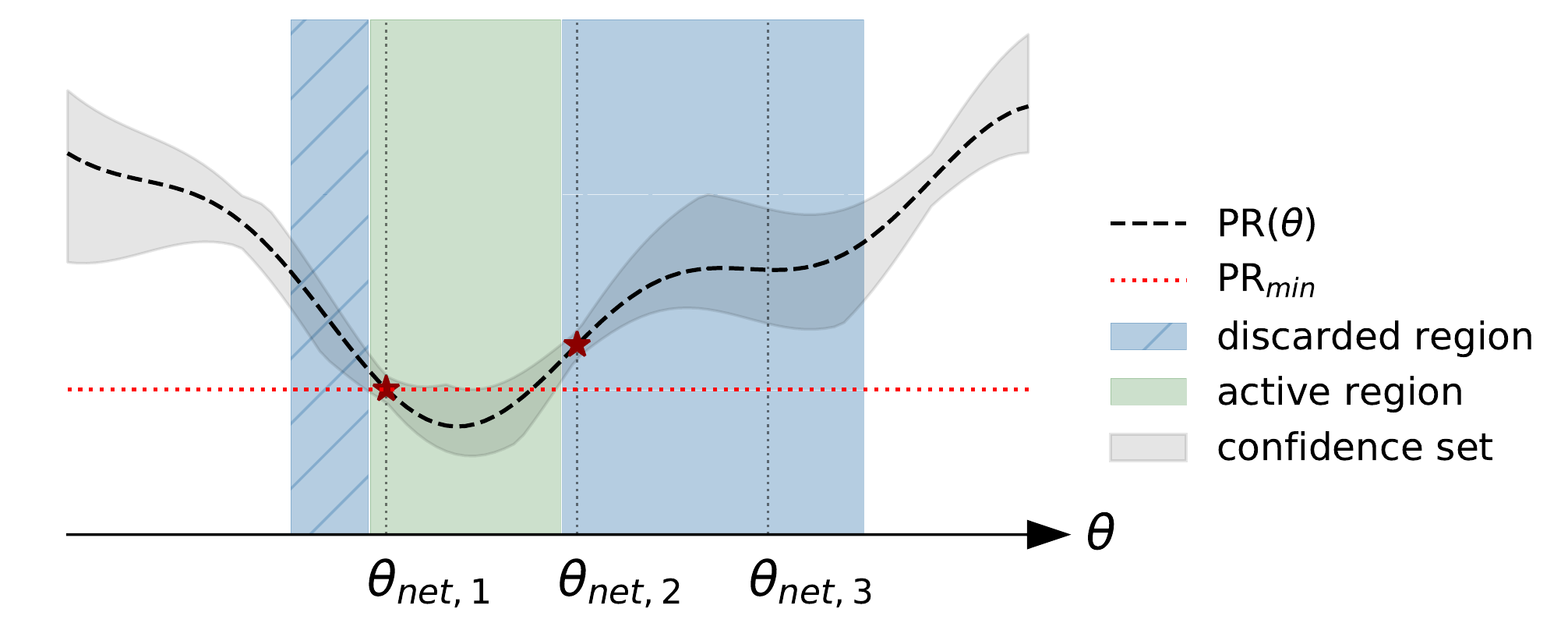}\label{fig:disc--b}}
    \caption{Sequential deployment of models allows Algorithm~\ref{alg:adaptive-exploration} to eliminate points from $\cS_p$, reducing the number of deployments during the phase. We see how the deployment of $\theta_{\text{net},1}$ and $\theta_{\text{net},2}$ allows one to eliminate $\theta_{\text{net},3}$.}
    \label{fig:sequential}
\end{figure}



\subsection{Sequential zooming dimension}

The zooming dimension of Definition~\ref{def:zdim} does not take into account that, using performative feedback, our algorithm can eliminate unexplored models \textit{within} a phase. We illustrate the benefits of this sequential exploration strategy in Figure \ref{fig:sequential}, where the deployment of two models is sufficient to eliminate the remaining model in the cover. This motivates a sequential definition of zooming dimension that captures the benefits of sequential exploration. 

To set up the definition of \textit{sequential zooming dimension}, we need to introduce some notation. For a set of points $\mathcal{S}$, enumeration $\pi:\mathcal{S}\rightarrow \{1,\dots,|\mathcal{S}|\}$ that specifies an ordering on $\cS$, and number  $k\in\{1,\dots,|\cS|\}$, let
\begin{align*}
\PR_{\mathrm{LB}}(\theta;k) &:= \max_{\theta'\in\cS:\pi(\theta')< k} \left(\DPR(\theta',\theta) - L_z\epsilon \|\theta-\theta'\|\right),\\
\PR^s_{\mathrm{LB}}(k) &:= \min_{\theta\in \mathrm{Ball}_s(\pi^{-1}(k))} \PR_{\mathrm{LB}}(\theta; k),\\
\PRmin(k) &:= \min_{\theta}  \min_{\theta'\in\cS:\pi(\theta')< k}  \left(\DPR(\theta',\theta) + L_z\epsilon \|\theta'-\theta\|\right).
\end{align*}
\noindent Here, $\PR_{\mathrm{LB}}(\theta; k)$ is a lower bound on $\PR(\theta)$ arising from the first $k-1$ deployments of the phase. Similarly, $\PR^s_{\mathrm{LB}}(k)$ captures the minimal lower confidence bound on the performative risk for any point in an $s$-ball around the $k$-th deployed model, $\pi^{-1}(k)$.
Finally, $\PRmin(k)$ captures an upper bound on $\PR(\thetaPO)$, estimated from the first $k-1$ deployments.

Using the above terms, we see that $\PR^s_{\mathrm{LB}}(k) \leq \PRmin(k) + 4\alpha s$ is the population version of the condition that a model in the cover does not get discarded. The sequential zooming dimension captures the maximal number of models in each suboptimality band that can be deployed. 

\begin{definition}[Sequential zooming dimension]
\label{def:zdim-seq-our-algo}
A performative prediction problem instance has $\alpha$-zooming dimension equal to $d$ if for any minimal $s$-cover $\mathcal{S}$ of any subset of $\{\theta:\Delta(\theta)\leq 16 \alpha s\}$ and all $0<r\leq s\leq 1$, the expected number of models $\theta\in\mathcal{S} \cap \{\theta: 16 \alpha r \leq \Delta(\theta) < 32 \alpha r\}$ with
\begin{equation}
\label{eq:seq_zoom}
  \PR^s_{\mathrm{LB}}(\pi(\theta)) \leq \PRmin(\pi(\theta)) + 4\alpha s  
\end{equation}
is at most a constant multiple of $(3/s)^{d}$, where the expectation is taken over a uniformly sampled enumeration $\pi:\cS\rightarrow \{1,\dots,|\mathcal{S}|\}$. 
\end{definition}

The sequential zooming dimension is bounded by the zooming dimension in Definition \ref{def:zdim}.
\begin{proposition}
\label{prop:zooming}
For all $\alpha>0$, the $\alpha$-zooming dimension is at least as large as the $\alpha$-sequential zooming dimension.
\end{proposition}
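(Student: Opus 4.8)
The plan is to show that every value $d$ that qualifies as an $\alpha$-zooming dimension under Definition~\ref{def:zdim} also qualifies as an $\alpha$-sequential zooming dimension under Definition~\ref{def:zdim-seq-our-algo}. Taking the infimum over admissible $d$ then yields the inequality, since the admissible set for the standard notion will be contained in that for the sequential notion. The core observation is that the sequential definition counts a \emph{subset} of the points counted by the standard definition: both range over $\mathcal{S}\cap\{\theta:16\alpha r\leq\Delta(\theta)<32\alpha r\}$ for a minimal $s$-cover $\mathcal{S}$, but the sequential definition additionally restricts to those points satisfying the non-elimination condition~\eqref{eq:seq_zoom}.

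First I would fix $\alpha>0$ and a value $d$ that is a valid $\alpha$-zooming dimension. I would then fix an arbitrary minimal $s$-cover $\mathcal{S}$ of an arbitrary subset of $\{\theta:\Delta(\theta)\leq 16\alpha s\}$, arbitrary radii $0<r\leq s\leq 1$, and an arbitrary enumeration $\pi$. For this fixed data, the set of $\theta\in\mathcal{S}\cap\{16\alpha r\leq\Delta(\theta)<32\alpha r\}$ obeying~\eqref{eq:seq_zoom} is contained in the full band $\mathcal{S}\cap\{16\alpha r\leq\Delta(\theta)<32\alpha r\}$, so its cardinality is at most $|\mathcal{S}\cap\{16\alpha r\leq\Delta(\theta)<32\alpha r\}|$. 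Crucially, this bound holds \emph{uniformly over all enumerations} $\pi$, and its right-hand side does not reference $\pi$ at all.

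Next I would invoke Definition~\ref{def:zdim}: because $d$ is a valid zooming dimension, $|\mathcal{S}\cap\{16\alpha r\leq\Delta(\theta)<32\alpha r\}|\leq c\,(3/s)^{d}$ for the appropriate constant $c$. Combining this with the previous step, the number of non-eliminated band points is at most $c\,(3/s)^{d}$ for every $\pi$, so taking the expectation over the uniformly random $\pi$ preserves this deterministic upper bound, giving that the expected number of band points satisfying~\eqref{eq:seq_zoom} is at most $c\,(3/s)^{d}$. Since $\mathcal{S}$, $r$, and $s$ were arbitrary, $d$ satisfies the requirement of Definition~\ref{def:zdim-seq-our-algo} and is therefore a valid sequential zooming dimension.

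I expect no serious obstacle here: the argument is a set inclusion followed by a trivial expectation bound, since the dominating quantity is constant across orderings. The only point requiring care is the bookkeeping around the phrase ``zooming dimension equal to $d$.'' I would read both definitions as specifying the \emph{smallest} admissible $d$, noting that the defining property is monotone in $d$ because $3/s\geq 3>1$ for $s\leq 1$, so $(3/s)^{d}$ is increasing in $d$ and the admissible sets are upward closed. Consequently the admissible set for the sequential notion contains that for the standard notion, and the sequential zooming dimension---being the infimum of its admissible set---is no larger than the standard zooming dimension, which is exactly the claim.
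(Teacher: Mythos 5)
Your proposal is correct and follows exactly the paper's argument: the set of band points satisfying the non-elimination condition~\eqref{eq:seq_zoom} is a subset of the full band $\mathcal{S}\cap\{16\alpha r\leq\Delta(\theta)<32\alpha r\}$, so the deterministic $(3/s)^d$ bound from Definition~\ref{def:zdim} carries over to the expectation over enumerations. Your extra care about monotonicity in $d$ and the infimum bookkeeping is a sensible elaboration of what the paper leaves implicit, but the substance is identical.
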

\noindent The claim of Proposition \ref{prop:zooming} follows by definition. To see this, let $d$ be the $\alpha$-zooming dimension. This means that $\mathcal{S}$ includes at most a constant multiple of $(3/s)^{d}$ elements from $\{\theta: 16 \alpha r \le \Delta(\theta)< 32 \alpha r\}$, for all $0<r\leq s\leq 1$. This immediately guarantees that the subset of $\cS$ characterized by \eqref{eq:seq_zoom} is at most a multiple of $(3/s)^{d}$, as desired. 

In Appendix \ref{appendix:seqzooming}, we provide an example where the sequential zooming dimension is \textit{strictly} smaller than the zooming dimension.


\section{Regret minimization for location families}

\begin{algorithm}[b!]
\caption{Performative Regret Minimization for Location Families}\label{alg:location-families}
\begin{algorithmic}[1]
\Require time horizon $T$, number of samples collected per step $m_0$, base distribution $\cD_0$, bound $M_*$ such that $\|\mu_*\| \leq M_*$
\State Initialize confidence set $\mathcal{C}_1 \leftarrow \{\mu: \|\mu\| \leq M_*\}$
\For{step $t=1,2,\dots$} 
\State $\PRLB(\theta) \leftarrow \min_{\mu\in\mathcal{C}_t} \E_{z_0\sim \cD_0} \ell(z_0 + \mu\theta;\theta)~\forall \theta\in\Theta$\Comment{Update LB for all models}
\State Deploy $\theta_t = \argmin_\theta \PRLB(\theta)$ \Comment{Deploy model with lowest LB}
\State Compute $\bar z_t = \frac{1}{m_0} \sum_{i=1}^{m_0} z_t^{(i)}$ from collected samples
\State Let $\Sigma_t \leftarrow  \sum_{i=1}^t \theta_i\theta_i^\top + \frac{1}{m_0} I$
\State  $\hat \mu_t \leftarrow \Sigma_t^{-1} \left(\sum_{i=1}^t \theta_i \bar z_i^\top\right)$
\Comment{Update estimate of $\mu_*$}
\State $\mathcal{C}_{t+1} \leftarrow \left\{\mu: \left\|\Sigma_t^{1/2}(\hat \mu_{t} - \mu)\right\|  <  \frac{M_* + \sqrt{8m_0 + 8\log T + 2d_\Theta\log\left(1+\frac{T m_0}{d_\Theta}\right)}}{\sqrt{m_0}}\right\}$\Comment{Update confidence set}
\EndFor
\end{algorithmic}
\end{algorithm}

In this section, we show how further knowledge about the structure of the distribution map can help reduce the complexity of performative regret minimization, without necessarily implying favorable structure of the performative risk. Once again, we apply our guiding principle of focusing exploration on learning the distribution map. Since the loss function is known, we can extrapolate knowledge about the distribution map to estimate the performative risk.

We focus on the setting of \textit{location families}  \citep{miller21echo}, which are distribution maps that depend on $\theta$ via a linear shift. More precisely, location families are distribution maps of the form $z \sim \cD(\theta) \Leftrightarrow z \stackrel{d}{=} z_0 + \mu_*^\top \theta$, 
where  $\mu_*\in\mathbb{R}^{d_\Theta \times m}$ is an unknown matrix and $z_0\in\mathbb{R}^m$ is a zero-mean subgaussian sample from a base distribution $\cD_0$. 
\begin{example}[Strategic classification]
Location families arise in strategic classification~\citep{hardt16strat}, where agents strategically manipulate their features in response to a deployed model. Suppose the learner uses a linear predictor $f_\theta(x)=\theta^T x$ and the agents incur quadratic cost for changing their original features $x$ to manipulated features $x'$, $c(x,x')=\frac 1 2 (x-x')\Lambda(x-x')$. Then, the best response of an agent, typically modeled as $x_{\mathrm{BR}}(\theta)=\argmax_{x'} f_\theta(x')-c(x,x')$, satisfies the location family structural assumption with $z_0$ being the agent's original features and $\mu^*=\Lambda^{-1}$. 
\end{example}

At a high level, our algorithm can be described as follows: at every step $t$, the learner deploys a model $\theta_t$ and collects $m_0$ samples from $\cD(\theta_t)$. 
We will write $\bar z_t := \frac{1}{m_0} \sum_{i=1}^{m_0} z_t^{(i)}$ for the corresponding sample average at time $t$.
Then, based on all samples collected so far, the algorithm computes the least-squares estimate of $\mu_*$ along with a confidence region for $\mu_*$. In the next step the algorithm picks the model that minimizes a lower confidence bound $\PRLB(\theta)$. See Algorithm~\ref{alg:location-families} for details.

This algorithm is inspired by LinUCB \citep{li2010contextual}, a standard bandits algorithm for linear rewards whose regret scales as $\tilde \cO(d \sqrt{T})$, where $d$ is the dimension of the linear map. Importantly, unlike in the LinUCB analysis, our objective function $\PR(\theta)$ is \emph{not} linear in $\theta$. Still, the nature of performative feedback allows us to learn the hidden linear structure in the distribution map and apply this knowledge to obtain confidence bounds on the performative risk. Below we state our algorithm for performative regret minimization for location families together with its regret guarantees.

\begin{theorem}
\label{thm:lin_bound}
Suppose that $\ell(z;\theta)$ is $L_z$-Lipschitz in $z$, $\cD_0$ is $1$-subgaussian, and $m_0 = o(\log T)$. Then, after $T$ time steps, Algorithm~\ref{alg:location-families} achieves a regret bound of
\[\Reg(T) = \tilde \cO  \left(\frac 1 {\sqrt{m_0}} \max\{L_z, 1\} \sqrt{T}\max\left\{d_\Theta,\sqrt{d_\Theta m}\right\}\right).\]
\end{theorem}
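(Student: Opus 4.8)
The plan is to bound the regret by the quality of the confidence set for $\mu_*$, since the algorithm deploys the model minimizing a lower confidence bound $\PRLB(\theta)$. The core observation is that even though $\PR(\theta)$ is not linear in $\theta$, the quantity that the learner must estimate---the mean shift $\mu_*^\top \theta$---is linear, and the performative risk at any $\theta$ is a \emph{known} function of this shift, namely $\PR(\theta) = \E_{z_0\sim\cD_0}\ell(z_0 + \mu_*^\top\theta;\theta)$. So I would first establish that the constructed confidence sets $\mathcal{C}_t$ are valid, i.e. $\mu_* \in \mathcal{C}_t$ for all $t$ with high probability. This is the standard self-normalized martingale concentration argument underlying LinUCB: writing $\bar z_t = \mu_*^\top\theta_t + \bar\xi_t$ where $\bar\xi_t$ is a mean-zero $\frac{1}{\sqrt{m_0}}$-subgaussian average of the base-distribution noise, the least-squares estimator $\hat\mu_t = \Sigma_t^{-1}(\sum_i \theta_i \bar z_i^\top)$ concentrates around $\mu_*$ in the $\Sigma_t$-weighted norm. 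The radius in the definition of $\mathcal{C}_{t+1}$ is exactly the self-normalized bound from \citet{li2010contextual} (the elliptical potential / method-of-mixtures bound) adapted so the effective noise scale is $1/\sqrt{m_0}$ and accounting for the $m$ output coordinates and the $d_\Theta$-dimensional regressor $\theta$.

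\textbf{Converting confidence-set accuracy into regret.} Conditioned on the good event $\mu_*\in\mathcal{C}_t$, optimism gives $\PRLB(\theta_t) \le \PR(\thetaPO)$, because $\thetaPO$'s lower bound is at most its true risk and $\theta_t$ minimizes $\PRLB$. Hence the per-step regret is controlled by $\PR(\theta_t) - \PRLB(\theta_t)$. Here I would use $L_z$-Lipschitzness of the loss in $z$ to translate uncertainty in $\mu$ into uncertainty in risk: for any $\mu\in\mathcal{C}_t$,
\[
\bigl|\E_{z_0}\ell(z_0+\mu_*^\top\theta_t;\theta_t) - \E_{z_0}\ell(z_0+\mu^\top\theta_t;\theta_t)\bigr| \le L_z \,\|(\mu_*-\mu)^\top\theta_t\|,
\]
so that $\PR(\theta_t) - \PRLB(\theta_t) \le L_z \sup_{\mu\in\mathcal{C}_t}\|(\mu_*-\mu)^\top\theta_t\|$. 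Since $\mu_*,\mu\in\mathcal{C}_t$, their difference is bounded in $\Sigma_{t-1}$-norm by twice the confidence radius, and $\|(\mu_*-\mu)^\top\theta_t\| \le \|\Sigma_{t-1}^{1/2}(\mu_*-\mu)\|\cdot\|\Sigma_{t-1}^{-1/2}\theta_t\|$ by Cauchy--Schwarz. This is the key reduction: the per-step regret is at most a constant times $L_z \cdot (\text{radius}) \cdot \|\theta_t\|_{\Sigma_{t-1}^{-1}}$.

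\textbf{Summing over time.} Summing $\sum_{t=1}^T \|\theta_t\|_{\Sigma_{t-1}^{-1}}$ is the classical elliptical-potential argument: by Cauchy--Schwarz in $t$ it is at most $\sqrt{T\sum_t \|\theta_t\|^2_{\Sigma_{t-1}^{-1}}}$, and the log-determinant telescoping bound gives $\sum_t \|\theta_t\|^2_{\Sigma_{t-1}^{-1}} = \tilde\cO(d_\Theta)$ using $\|\theta\|\le 1$. The confidence radius scales like $\frac{1}{\sqrt{m_0}}(\sqrt{m_0}+\sqrt{\log T + d_\Theta\log T})$, which after the $o(\log T)$ assumption on $m_0$ is $\tilde\cO(\max\{d_\Theta,\sqrt{d_\Theta m}\}/\sqrt{m_0})$ once the $m$-coordinate structure is tracked. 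Multiplying the radius, the $L_z$ factor, and the $\sqrt{T d_\Theta}$ potential sum yields the stated bound, with the $\max\{L_z,1\}$ and $\max\{d_\Theta,\sqrt{d_\Theta m}\}$ absorbing the regime where finite-sample noise dominates.

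\textbf{The main obstacle} is the careful bookkeeping of dimensions in the confidence radius: the regressor $\theta_t$ lives in $\mathbb{R}^{d_\Theta}$ while the response $\bar z_t$ is $m$-dimensional, so $\hat\mu_t$ is a $d_\Theta\times m$ matrix and the self-normalized concentration must be proved for the matrix-valued least-squares problem. Handling the matrix regression correctly---deciding whether to vectorize, and extracting the clean $\max\{d_\Theta,\sqrt{d_\Theta m}\}$ dependence rather than a loose $d_\Theta\sqrt{m}$---is the delicate step; everything else is a direct adaptation of the LinUCB analysis combined with the Lipschitz transfer from shift-uncertainty to risk-uncertainty.
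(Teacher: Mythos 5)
Your proposal is correct and follows essentially the same route as the paper: a self-normalized (method-of-mixtures) concentration argument to validate the confidence sets $\mathcal{C}_t$, an optimism step combined with $L_z$-Lipschitzness in $z$ to bound the per-step regret by $L_z\sup_{\mu,\mu'\in\mathcal{C}_t}\|(\mu-\mu')^\top\theta_t\|$, and the Cauchy--Schwarz plus elliptical-potential argument to sum over $t$. The matrix-valued issue you flag as the main obstacle is resolved in the paper by a covering argument over the unit ball in $\mathbb{R}^m$ (a $\tfrac12$-net of size $5^m$ applied to the one-dimensional projections of the supermartingale), which is where the $m$-dependence in the radius, and hence the $\sqrt{d_\Theta m}$ term, comes from.
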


\begin{remark}
\label{rem:UC}
For simplicity, we assume that $\cD_0$ is known in Algorithm \ref{alg:location-families}. This assumption is justified, for example, when we have plenty of historical data about a population, before any model deployment. We note that Theorem \ref{thm:lin_bound} can be extended to the case where we only have a finite data set from $\cD_0$, by relying on a uniform convergence argument. 
\end{remark}

Theorem \ref{thm:lin_bound} shows that by leveraging the hidden linear structure of the distribution map, Algorithm \ref{alg:location-families} inherits the $\tilde \cO(\sqrt{T})$ rate of LinUCB. This bears resemblance to the regret bound in Theorem \ref{thm:fine-grained-seq} that also scaled primarily with the complexity of the distribution map. Furthermore, similarly to Algorithm \ref{alg:adaptive-exploration}, we see that the regret bound for Algorithm~\ref{alg:location-families} holds while allowing the loss to have arbitrary dependence on $\theta$. For example, the loss need not be convex and, as a result, the performative risk need not be convex either. 

We conclude by comparing Theorem \ref{thm:lin_bound} to \citep{miller21echo}, which provided an algorithm for finding performative optima for location families in the special case when the performative risk is \textit{strongly convex}.  Converting their optimization error into a regret bound yields a bound of $\cO(\sqrt{T}(d_\Theta+m))$. While this bears resemblance to Theorem \ref{thm:lin_bound}, the rates are not directly comparable. The algorithm by \citet{miller21echo} does not assume knowledge of the base distribution $\cD_0$, but rather deploys the model $\theta=0$ in initial steps to collect samples from $\cD_0$ (see Remark \ref{rem:UC} for how to combine this strategy with our algorithm).
In any case, the main benefit of Theorem~\ref{thm:lin_bound} is that it applies to a more general setting, placing significantly fewer restrictions on the loss function and the performative risk.

\section{Future directions}

Having illuminated the connection between performative prediction and bandit problems, our work opens the door for interesting further investigations. We highlight several directions we consider promising.

\paragraph{Structural knowledge of the distribution map.} Domain knowledge about performative distribution shifts is sometimes available: for example, a parametric approximation to the aggregate response~\citep{miller21echo, izzo2021learn}, a microfoundations model for individual behavior~\citep{hardt16strat, jag21alt}, or basic constraints on the agents' action set~\citep{chen2020learning}.
For linear shifts, we demonstrated how such structural knowledge about the distribution map can help guide exploration. We expect this principle to apply to other structures of $\cD(\theta)$.

\paragraph{Consequences of exploration.} 
An important limitation of exploration in performative environments are social welfare concerns. Performative shifts can rarely be analyzed offline and every model deployment is consequential for the population the model acts upon. 
The ability to discard highly suboptimal regions of the parameter space without having to deploy a model within is highly appealing from a welfare perspective.
Beyond this, we believe that incorporating constraints on what constitutes safe exploration~\citep{wu2016conservative,turchetta19safe,kazerouni19bandit} is crucial for performative optimization in practice.

\paragraph{Costs of a new deployment.} 
Our notion of regret quantifies the statistical complexity of regret minimization, but it does not differentiate between collecting more samples induced by the currently deployed model and deploying a new model. This difference has previously been studied by \citet{mendler20stochasticPP} in the context of stochastic retraining methods. 
Due to the costs associated with a new deployment, collecting more samples from the same model typically comes at a reduced cost for the learner, and there may be a better notion of regret that reflects this.

\paragraph{Adapting to unknown sensitivity.} Our algorithm relies on knowing $L_z \epsilon$.
While the Lipschitzness of a classifier in the data has been studied in the context of adversarial robustness~\citep{szegedy14spectral,cisse17adv,hein17localLip,yang20rob}, which could help inform $L_z$, the sensitivity $\epsilon$ of an environment is generally unknown.
Adapting the tools by~\citet{bubeck11woL} could help relax the requirement of a known sensitivity.


\paragraph{``Best of both worlds'' algorithm.} When the Rademacher complexity of the function class is high, the Lipschitz bandit baseline may provide a better regret bound than Algorithm~\ref{alg:adaptive-exploration}. It would be an interesting task for future work to design an algorithm that intersects the confidence sets of both algorithms and inherits the better of the two regret bounds.

\section*{Acknowledgements}

The authors would like to thank Moritz Hardt for helpful conversations during the course of this project, Nilesh Tripuraneni for pointers to relevant literature, 
and Jacob Steinhardt, Alex Wei, and Clara Wong-Fannjiang for valuable feedback on the manuscript.

\bibliography{references}
\bibliographystyle{plainnat}
\newpage 

\appendix

\onecolumn

\section{Proofs from Section \ref{sec:blackbox} and Section \ref{sec:perf_feedback}}

\subsection{Proof of Lemma \ref{lemma:lipschitz}}

Notice that $\PR(\theta) - \PR(\theta') 
    = \left(\DPR(\theta, \theta) - \DPR(\theta, \theta')\right) + \left(\DPR(\theta, \theta') - \DPR(\theta', \theta')\right)$.
We bound the first difference using Lipschitzness of $\ell$ in $\theta$ as $|\DPR(\theta, \theta) - \DPR(\theta, \theta')| = |\mathrm E_{z\sim\cD(\theta)} [\ell(z;\theta) - \ell(z;\theta')]| \leq  L_\theta \|\theta-\theta'\|$. For the second term we combine Assumption \ref{ass:lipschitz} and Lipschitzness of $\ell$ in $z$ via the Kantorovich-Rubinstein duality theorem. In particular, we get $|\DPR(\theta, \theta') - \DPR(\theta', \theta')| = |\mathrm E_{z\sim\cD(\theta)} \ell(z;\theta') - \mathrm E_{z\sim\cD(\theta')} \ell(z;\theta')| \leq  \epsilon L_z \|\theta-\theta'\|.$
Putting both bounds together, we obtain the claimed Lipschitz bound.

\subsection{Proof of Proposition \ref{prop:cover}}

We construct a $\gamma$-cover of the parameter space, denoted $\cS_\gamma$, and deploy all models in this cover. This gives us access to the distributions $\{\cD(\theta): \theta\in \cS_{\gamma}\}$. Using this information, for any $\theta\in\Theta$ we can compute
\[\widehat \PR(\theta) = \DPR(\Pi_{\cS_\gamma}(\theta), \theta) = \E_{z\sim \cD(\Pi_{\cS_\gamma}(\theta))} \ell(z;\theta),\]
where $\Pi_{\cS_\gamma}(\theta) := \argmin_{\theta'\in\cS_\gamma} \|\theta'-\theta\|$ is the projection onto $\cS_\gamma$. Note that $\|\theta-\Pi_{\cS_\gamma}(\theta)\|\leq \gamma$ all $\theta\in\Theta$ since $\cS_\gamma$ is a cover. Therefore, for any $\theta\in\Theta$, we can bound $\PR(\theta)$ as
\begin{align*}
  \PR(\theta) &\leq \DPR(\Pi_{S_\gamma}(\theta),\theta) + L_z\epsilon \|\Pi_{S_\gamma}(\theta) - \theta\|\\
  &\leq \DPR(\Pi_{S_\gamma}(\theta),\theta) + L_z\epsilon \gamma\\
  &= \widehat\PR(\theta) + L_z\epsilon\gamma.
\end{align*}
Similarly we obtain $\PR(\theta) \geq \widehat\PR(\theta) - L_z\epsilon\gamma$, which completes the proof.

\subsection{Proof of Proposition \ref{prop:elimination}}

We will show that $\PRLB(\theta) \leq \PR(\theta)$ and $\PRmin\geq \PR(\thetaPO)$; these two facts immediately imply $\Delta(\theta) := \PR(\theta) - \PR(\thetaPO) \geq \PRLB(\theta) - \PRmin$.

The first bound follows because
$\PR(\theta) = \DPR(\theta,\theta) \geq \DPR(\theta',\theta) - L_z\epsilon \|\theta'-\theta\|$ for all $\theta'$, where we use $(L_z\epsilon)$-Lipschitzness of $\DPR$ in the first argument. Similarly, the second bound follows because
\[\PR(\thetaPO) = \min_\theta \DPR(\theta,\theta) \leq \min_\theta (\DPR(\theta',\theta) + L_z\epsilon \|\theta-\theta'\|),\]
for all $\theta'$.

\section{Regret analysis of Algorithm~\ref{alg:adaptive-exploration}}
\label{sec:proofmainthm}

In this section, we prove a regret bound for Algorithm~\ref{alg:adaptive-exploration}. At a high level, Theorem \ref{thm:fine-grained-seq} combines bounds specific to performative prediction with ingredients from the analysis of successive elimination \citep{even2002pac}. First, using a finite-sample analogue of Proposition \ref{prop:elimination}, we show that after phase $p$ all models $\theta \in \mathcal{A}$ have suboptimality $\Delta(\theta) \le 8 \gamma_p$. We then upper bound the number of models in each suboptimality band $\left\{\theta : 16 L_z\epsilon r \le \Delta(\theta) < 32 L_z\epsilon r\right\}$, for fixed $r$, that are deployed in each phase, by leveraging the definition of sequential zooming dimension. The remainder of the proof separately analyzes the regret incurred from the first $\log_2(1/(L_z\epsilon))$ phases, in which the finite-sample error dominates the discretization error, and the regret from the later phases, in which the finite-sample error and the discretization error are of the same order.

We use $\Rph(p_1:p_2)$ to denote the regret incurred from phase $p_1$ to phase $p_2$:
\[\Rph(p_1:p_2) = \E \sum_{p=p_1}^{p_2}\Delta(\theta_p).\]
We let $\Rph(0:p)\equiv \Rph(p)$. For phases $p$ that happen after the time horizon $T$, we assume that the incurred regret is 0; for example, if phases $p_1\leq p_2$ happen after $T$, then $\Rph(p_1:p_2) = 0$.

\subsection{Clean event}

First, we define a clean event that guarantees that the estimates $\widehat{\mathrm{DPR}}(\theta,\theta')$ are close to the true values $\mathrm{DPR}(\theta,\theta')$ at all phases. The clean event essentially guarantees uniform convergence over $\widehat{\mathrm{DPR}}(\theta,\cdot)$ for every $\theta \in \mathcal{P}_p$. 
\begin{definition}[Clean event]
Denote the ``clean event'' by
\begin{equation}
    \label{eq:cleanevent}
    E_{\mathrm{clean}} = \left\{\forall p: \sup_{\theta \in \mathcal{P}_p} \sup_{\theta' \in\Theta}  \left|\widehat{ \mathrm{DPR}}(\theta,\theta') - \DPR(\theta,\theta')\right|\leq \frac{ 2\mathfrak{C}^*(\ell) + 3 \sqrt{\log(T)}}{\sqrt{n_p m_0 }} \right\},
\end{equation}
where $\cP_p$ is the set of all models deployed in phase $p$ during time horizon $T$.
\end{definition}

We show that the clean event occurs with high probability.
\begin{lemma}
\label{lemma:cleanevent}
The clean event holds with high probability,
\[\mathbb{P}\left\{E_{\mathrm{clean}}\right\} \geq 1- T^{-2}.\]
\end{lemma}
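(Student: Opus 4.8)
The plan is to establish the clean event via a two-level argument: first a uniform concentration bound for a single deployment's empirical $\widehat{\mathrm{DPR}}(\theta,\cdot)$ via Rademacher complexity, and then a union bound over all deployments that occur within the time horizon $T$. The key observation is that for a fixed deployed model $\theta$, the quantity $\sup_{\theta'\in\Theta}|\widehat{\mathrm{DPR}}(\theta,\theta') - \mathrm{DPR}(\theta,\theta')|$ is exactly a uniform deviation of empirical means over the function class $\{\ell(\cdot;\theta'):\theta'\in\Theta\}$ evaluated on $n_p m_0$ i.i.d. samples from $\cD(\theta)$.

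First I would fix a deployed model $\theta$ in phase $p$, with $N := n_p m_0$ samples $z_1,\dots,z_N \sim \cD(\theta)$. Since $\ell(z;\theta')\in[0,1]$, the function $z \mapsto \sup_{\theta'}|\frac1N\sum_j \ell(z_j;\theta') - \mathrm{DPR}(\theta,\theta')|$ changes by at most $1/N$ when any single $z_j$ is altered. Applying McDiarmid's bounded-differences inequality gives that with probability at least $1-\delta$,
\[
\sup_{\theta'\in\Theta}\Big|\widehat{\mathrm{DPR}}(\theta,\theta') - \mathrm{DPR}(\theta,\theta')\Big| \leq \E\Big[\sup_{\theta'}\big|\tfrac1N\textstyle\sum_j \ell(z_j;\theta') - \mathrm{DPR}(\theta,\theta')\big|\Big] + \sqrt{\tfrac{\log(1/\delta)}{2N}}.
\]
The next step is to control the expected supremum by symmetrization, which bounds it by twice the Rademacher average $\E_{\epsilon,z}\sup_{\theta'}|\frac1N\sum_j\epsilon_j\ell(z_j;\theta')|$; by the definition of $\mathfrak{C}^*(\ell)$, this is at most $2\mathfrak{C}^*(\ell)/\sqrt{N}$. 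Combining these and choosing $\delta = T^{-3}$ so that $\sqrt{\log(1/\delta)/(2N)} = \sqrt{3\log T/(2N)} \le 3\sqrt{\log T}/\sqrt{N}$ (generously), I obtain the per-deployment bound $\frac{2\mathfrak{C}^*(\ell)+3\sqrt{\log T}}{\sqrt{n_p m_0}}$ matching Equation~\eqref{eq:cleanevent}.

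Finally I would union bound over all deployments occurring during the horizon. Since each deployment in phase $p$ lasts $n_p \geq 1$ steps and there are at most $T$ total steps, the total number of distinct deployed models across all phases is at most $T$. Taking the failure probability $\delta = T^{-3}$ per deployment and summing over at most $T$ deployments yields an overall failure probability at most $T\cdot T^{-3} = T^{-2}$, which gives the claim $\mathbb{P}\{E_{\mathrm{clean}}\}\geq 1-T^{-2}$.

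The main obstacle I anticipate is handling the union bound correctly given that the set $\cP_p$ of deployed models is itself random and data-dependent (models are drawn and eliminated adaptively within a phase). The clean concentration must hold \emph{conditionally} on the choice of each $\theta$, which is legitimate because the $N$ fresh samples drawn after deploying $\theta$ are independent of the history that determined $\theta$; thus the conditional distribution of the samples is exactly $\cD(\theta)^{\otimes N}$ and the per-deployment bound applies. The union bound then needs to range over all \emph{possible} deployment rounds rather than the realized adaptive set, which is why bounding the total count by the horizon $T$ (rather than by the unknown cardinality of $\bigcup_p \cP_p$) is the clean way to close the argument.
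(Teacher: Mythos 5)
Your proposal is correct and follows essentially the same route as the paper's proof: a bounded-differences (McDiarmid) concentration step plus symmetrization to invoke $\mathfrak{C}^*(\ell)$, a per-deployment failure probability of $T^{-3}$, and a union bound over the at most $T$ deployment intervals within the horizon. Your handling of adaptivity---conditioning on the deployed model and using that the fresh samples are independent of the history that selected it---is exactly the content of the paper's ``counterfactual samples'' construction, just stated less formally.
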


\begin{proof}
We consider each interval of length $n_p$ in phase $p$, during which the same model is deployed, separately, and then take a union bound over these intervals across all phases. Therefore, we will say interval $s$ in phase $p$ to refer to steps $(s-1)n_p+1,\dots,s n_p$ in phase $p$. For the sake of this proof, we consider a ``counterfactual'' set of samples for each model $\theta$ that augments the set of actually observed samples. In particular, for interval $s$ in phase $p$, we let $\{z^{\theta,s}_j\}_{j=1}^{n_p m_0}$ denote i.i.d. samples from $\cD(\theta)$. The samples for different time intervals and different phases are independent. When model $\theta$ is deployed, we observe the samples corresponding to the interval in which $\theta$ is deployed.

For each phase $p$ and each time interval $s$ within phase $p$, let $E^{s,p}_{\text{end}}$ denote the event that phase $p$ terminates strictly before interval $s$ is reached. Let $E^{s,p}_{\mathrm{clean}}$ denote the event that one of the following two holds:
\begin{enumerate}[itemsep=1pt]
    \item[(E1)] $E^{s,p}_{\text{end}}$ occurs;
    \item[(E2)] $E^{s,p}_{\text{end}}$ does not occur, and for the model $\theta_s$ deployed in time interval $s$ it holds that:
\[\sup_{\theta' \in\Theta}  \left|\widehat{\mathrm{DPR}}(\theta_s,\theta') - \DPR(\theta_s,\theta')\right|\leq \frac{ 2\mathfrak{C} + 3 \sqrt{\log(T)}}{\sqrt{n_p m_0 }}, \]
\end{enumerate}
where $\theta_s$ is a random variable.

The probability that  $E^{s,p}_{\mathrm{clean}}$ does not occur is at most:
\begin{align*}
& \mathbb{P}\left[\lnot E^{s,p}_{\text{end}} \And \sup_{\theta' \in\Theta}  \left|\widehat{ \mathrm{DPR}}(\theta_s,\theta') - \DPR(\theta_s,\theta')\right| > \frac{ 2\mathfrak{C} + 3 \sqrt{\log(T)}}{\sqrt{n_p m_0 }}\right] \\
&= \mathbb{P}\left[\lnot E^{s,p}_{\text{end}} \right] \cdot \mathbb{P}\left[ \sup_{\theta' \in\Theta}  \left|\widehat{ \mathrm{DPR}}(\theta_s,\theta') - \DPR(\theta_s,\theta')\right| > \frac{ 2\mathfrak{C} + 3 \sqrt{\log(T)}}{\sqrt{n_p m_0 }} \Bigg| \lnot E^{s,p}_{\text{end}}\right] \\
&\le \mathbb{P}\left[ \sup_{\theta' \in\Theta}  \left|\widehat{ \mathrm{DPR}}(\theta_s,\theta') - \DPR(\theta_s,\theta')\right| > \frac{ 2\mathfrak{C} + 3 \sqrt{\log(T)}}{\sqrt{n_p m_0 }}  \Bigg| \lnot E^{s,p}_{\text{end}}\right].
\end{align*}
We can equivalently write this as
\begin{align*}
&\mathbb{P}\left[ \sup_{\theta' \in\Theta}  \left|\frac{1}{n_p m_0} \sum_{j=1}^{n_p m_0} \ell(z^{\theta_s,s}_j;\theta') - \DPR(\theta_s,\theta')\right| > \frac{ 2\mathfrak{C} + 3 \sqrt{\log(T)}}{\sqrt{n_p m_0 }}  \Bigg| \lnot E^{s,p}_{\text{end}}\right] \\ \\
&= \mathbb{E}_{\theta \sim \theta_s}\left[\mathbb{P}\left[ \sup_{\theta' \in\Theta}  \left|\frac{1}{n_p m_0} \sum_{j=1}^{n_p m_0} \ell(z^{\theta,s}_j;\theta') - \DPR(\theta,\theta')\right| > \frac{ 2\mathfrak{C} + 3 \sqrt{\log(T)}}{\sqrt{n_p m_0 }}  \Bigg| \lnot E^{s,p}_{\text{end}}, \theta_s = \theta\right]\right]. \\
\end{align*}
To upper bound this expression, it suffices to show an upper bound on 
\[\mathbb{P}\left[ \sup_{\theta' \in\Theta}  \left|\frac{1}{n_p m_0} \sum_{j=1}^{n_p m_0} \ell(z^{\theta,s}_j;\theta') - \DPR(\theta,\theta')\right| > \frac{ 2\mathfrak{C} + 3 \sqrt{\log(T)}}{\sqrt{n_p m_0 }} \Bigg| \lnot E^{s,p}_{\text{end}}, \theta_s = \theta\right] \]
that holds for every $\theta$. The first observation is that for any $\theta$, the samples $\{z^{\theta,s}_j\}_{j=1}^{n_p m_0}$ are independent of the event $\left\{\theta_s = \theta, \lnot E^{s,p}_{\text{end}} \right\}$, since the event depends only on the samples collected in previous time intervals and phases. This means that the above probability is equal to:
\[\mathbb{P}\left[\sup_{\theta' \in\Theta}  \left|\frac{1}{n_p m_0} \sum_{j=1}^{n_p m_0} \ell(z^{\theta,s}_j;\theta') - \DPR(\theta,\theta')\right| > \frac{ 2\mathfrak{C} + 3 \sqrt{\log(T)}}{\sqrt{n_p m_0 }}\right]. \] 
Let $\epsilon_j$ denote i.i.d. Rademacher random variables. Then, we can observe that with probability $1 - T^{-3}$, it holds that:
\begin{align*}
\sup_{\theta'\in\Theta} \left|\frac{1}{n_p m_0} \sum_{j=1}^{n_p m_0} \ell(z^{\theta,s}_j;\theta') - \DPR(\theta,\theta')\right|
&\leq \E \left[\sup_{\theta'\in\Theta} \left|\frac{1}{n_p m_0} \sum_{j=1}^{n_p m_0} \ell(z^{\theta,s}_j;\theta') - \DPR(\theta,\theta')\right|\right] + \sqrt{\frac{6 \log(T)}{n_p m_0}} \\
&\leq 2 \cdot \E \left[\sup_{\theta'\in\Theta} \left|\frac{1}{n_p m_0} \sum_{j=1}^{n_p m_0} \ell(z^{\theta,s}_j;\theta') \cdot \epsilon_j\right|\right] + \sqrt{\frac{6 \log(T)}{n_p m_0}} \\
&\leq \frac{2}{\sqrt{n_p m_0}} \cdot \sup_{n \ge 1} \sqrt{n} \E \left[\sup_{\theta'\in\Theta} \left|\frac{1}{n} \sum_{j=1}^{n} \ell(z^{\theta}_j;\theta') \cdot \epsilon_j\right|\right] + \sqrt{\frac{6 \log(T)}{n_p m_0}} \\
&\leq \frac{2 \mathfrak{C}^*(\ell) + 3 \sqrt{\log(T)}}{\sqrt{n_p m_0 }},
\end{align*}
where the first step follows from the bounded differences inequality and the second step follows from a classical symmetrization argument. In the penultimate step we let $\{z_j^{\theta}\}_{j\in\mathbb{N}}$ denote an infinite sequence of samples from $\cD(\theta)$. Putting this all together, we have that:
\[1 - \mathbb{P}\left[E^{s,p}_{\mathrm{clean}}\right] \le T^{-3}. \] 

Finally, using that there are at most $T$ intervals before time horizon $T$ (across all phases), by a union bound we see that:
\[1 - \mathbb{P}\left[E_{\mathrm{clean}}\right] \le T^{-2},  \] as desired.

\end{proof}

\subsection{Suboptimality of the active set}

We show that the elimination strategy in Algorithm \ref{alg:adaptive-exploration} will never eliminate any performatively optimal point.
\begin{lemma}
\label{lemma:Anonempty}
On the clean event \eqref{eq:cleanevent}, any performatively optimal point $\thetaPO \in \argmin_{\theta} \PR(\theta)$ will always remain in $\mathcal{A}$. 
\end{lemma}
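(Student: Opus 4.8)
The plan is to show that $\thetaPO$ is never removed from $\cA$ by verifying that the elimination criterion in Algorithm~\ref{alg:adaptive-exploration}, line~14, is never triggered at $\thetaPO$. A point $\theta$ is removed from $\cA$ exactly when $\PR_{\text{LB}}(\theta) > \PRmin + 2\gamma_p$. Thus it suffices to prove that, on the clean event, at every phase $p$ we have $\PR_{\text{LB}}(\thetaPO) \le \PRmin + 2\gamma_p$; in fact I expect to establish the stronger (finite-sample) chain $\PR_{\text{LB}}(\thetaPO) \le \PR(\thetaPO) + 2\gamma_p \le \PRmin + 2\gamma_p$ is not quite right, so I will instead bound each side against $\PR(\thetaPO)$ separately.

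First I would control $\PR_{\text{LB}}(\thetaPO)$ from above. For any already-deployed $\theta'\in\cP_p$, the clean event \eqref{eq:cleanevent} gives $\widehat{\DPR}(\theta',\thetaPO) \le \DPR(\theta',\thetaPO) + \gamma_p$, since $n_p$ is chosen precisely so that the right-hand side of \eqref{eq:cleanevent} is at most $\gamma_p$. Combining this with the population lower-bound inequality from the proof of Proposition~\ref{prop:elimination}, namely $\DPR(\theta',\thetaPO) - L_z\epsilon\|\theta'-\thetaPO\| \le \PR(\thetaPO)$, yields
\[
\widehat{\DPR}(\theta',\thetaPO) - L_z\epsilon\|\theta'-\thetaPO\| \le \PR(\thetaPO) + \gamma_p
\]
for every $\theta'\in\cP_p$. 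Taking the maximum over $\theta'$ gives $\PR_{\text{LB}}(\thetaPO) \le \PR(\thetaPO) + \gamma_p$.

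Next I would lower-bound $\PRmin$. Here one uses the other direction of the clean event, $\widehat{\DPR}(\theta',\theta) \ge \DPR(\theta',\theta) - \gamma_p$, together with the fact that $\DPR(\theta',\theta) + L_z\epsilon\|\theta'-\theta\| \ge \DPR(\theta,\theta) = \PR(\theta) \ge \PR(\thetaPO)$ for all $\theta$ and all deployed $\theta'$, again invoking $(L_z\epsilon)$-Lipschitzness of $\DPR$ in its first argument exactly as in Proposition~\ref{prop:elimination}. This shows $\widehat{\DPR}(\theta',\theta) + L_z\epsilon\|\theta'-\theta\| \ge \PR(\thetaPO) - \gamma_p$ for every such pair, and taking the double minimum over $\theta$ and $\theta'\in\cP_p$ preserves the inequality, so $\PRmin \ge \PR(\thetaPO) - \gamma_p$. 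Chaining the two estimates gives
\[
\PR_{\text{LB}}(\thetaPO) \le \PR(\thetaPO) + \gamma_p \le (\PRmin + \gamma_p) + \gamma_p = \PRmin + 2\gamma_p,
\]
so the elimination condition $\PR_{\text{LB}}(\thetaPO) > \PRmin + 2\gamma_p$ never holds and $\thetaPO$ survives every phase. The main thing to be careful about is bookkeeping rather than conceptual difficulty: the two $\gamma_p$ slacks (one from the upper bound on $\PR_{\text{LB}}$, one from the lower bound on $\PRmin$) must add up to exactly the $2\gamma_p$ margin hard-coded into the algorithm's elimination rule, which is why that margin was chosen. I should also note that $\PRmin$ and $\PR_{\text{LB}}$ are computed from the running set $\cP_p$, which only grows during a phase, so the argument applies at every update within the \textbf{while} loop, not merely at the end of a phase. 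Since the conclusion holds on the clean event for all phases simultaneously, $\thetaPO$ remains in $\cA$ throughout, as claimed.
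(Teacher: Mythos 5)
Your proposal is correct and follows essentially the same route as the paper: the paper proves the bound as a single chain $\PRLB(\thetaPO) \le \PR(\thetaPO) + \gamma_p = \min_\theta \DPR(\theta,\theta) + \gamma_p \le \PRmin + 2\gamma_p$, using exactly the same ingredients you use (the clean event in both directions, $(L_z\epsilon)$-Lipschitzness of $\DPR$ in its first argument, and optimality of $\thetaPO$), whereas you split it into the two one-sided bounds $\PRLB(\thetaPO) \le \PR(\thetaPO) + \gamma_p$ and $\PRmin \ge \PR(\thetaPO) - \gamma_p$ pivoting around $\PR(\thetaPO)$. This is a purely cosmetic rearrangement, and your observation that the argument applies at every update of the running set $\cP_p$ within a phase matches the paper's treatment.
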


\begin{proof}
It suffices to show that $\thetaPO$ cannot be eliminated in Step 14 of Algorithm \ref{alg:adaptive-exploration}. Fix any phase $p$ and denote by 
$\cP_p$ the running set of deployed points at any point during phase $p$. Then, we have:
\begin{align*}
\PRLB(\thetaPO) &= \max_{\theta' \in \mathcal{P}_p} \left( \widehat{\DPR}(\theta', \thetaPO) - L_z \epsilon \|\thetaPO - \theta'\|\right)\\
&\leq \max_{\theta' \in \cP_p} \left( \DPR( \theta', \thetaPO) - L_z \epsilon \|\thetaPO - \theta'\|\right) + \gamma_p \\
&\leq \PR(\thetaPO)  + \gamma_p\\
&= \min_\theta \DPR(\theta,\theta)  + \gamma_p\\
&\leq \min_\theta \min_{\theta'\in\cP_p} \DPR(\theta',\theta) + L_z\epsilon \|\theta-\theta'\|  + \gamma_p\\
&\leq \min_{\theta}\min_{\theta' \in \mathcal{P}_p}  \widehat{\DPR}( \theta',\theta) + L_z \epsilon \|\theta' - \theta\|  + 2\gamma_p \\
&= \PRmin + 2\gamma_p.
\end{align*}
Therefore, $\PRLB(\thetaPO) \leq \PRmin + 2\gamma_p$, implying that $\thetaPO$ cannot be removed from $\cA$ during phase $p$. Since this is true for any phase $p$, that completes the proof of the lemma.
\end{proof}

We next show that the elimination strategy is sufficiently effective that all models that remain active after a given phase $p$ have suboptimality at most $8 \gamma_p$. 
\begin{lemma} 
\label{lemma:approx_sequential}
On the clean event \eqref{eq:cleanevent},  after phase $p$ all models $\theta\in\cA$ satisfy $\Delta(\theta) \le 8\gamma_p$. 
\end{lemma}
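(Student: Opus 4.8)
\textbf{Proof plan for Lemma~\ref{lemma:approx_sequential}.}
The plan is to show that any model $\theta$ with suboptimality $\Delta(\theta) > 8\gamma_p$ must get eliminated during phase $p$, by arguing that after deploying a sufficiently close net point, the finite-sample elimination criterion $\PR_{\text{LB}}(\theta) > \PRmin + 2\gamma_p$ is triggered. The key structural facts I would combine are the population-level suboptimality bound from Proposition~\ref{prop:elimination}, the finite-sample accuracy guaranteed by the clean event, and the fact that $\cS_p$ is initialized to an $r_p$-net of $\cA$ with $r_p = \gamma_p/(L_z\epsilon)$.

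First I would fix a model $\theta \in \cA$ that survives phase $p$ and show its suboptimality is small. The crucial observation is that because $\theta$ remains active throughout the phase, the net point $\theta_{\text{net}}$ within distance $r_p$ of $\theta$ (which exists since $\cS_p$ started as an $r_p$-net of $\cA$) is never removed from $\cS_p$ via the deactivation step (Step~14): if $\theta \in \text{Ball}_{r_p}(\theta_{\text{net}})$ is active, then $\text{Ball}_{r_p}(\theta_{\text{net}}) \cap \cA \neq \emptyset$, so $\theta_{\text{net}}$ stays in $\cS_p$ and is eventually deployed. Thus by the end of the phase some $\theta'\in\cP_p$ satisfies $\|\theta - \theta'\| \leq r_p$. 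I would then use this deployed neighbor to lower bound $\PR_{\text{LB}}(\theta)$ and upper bound $\PRmin$ in terms of the true $\DPR$ values, paying a $\gamma_p$ error on each via the clean event~\eqref{eq:cleanevent}.

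Next I would translate these finite-sample quantities back to the true suboptimality. Using $\widehat{\DPR}(\theta',\theta) \geq \DPR(\theta',\theta) - \gamma_p$ from the clean event together with $(L_z\epsilon)$-Lipschitzness of $\DPR$ in its first argument, I get
\[
\PR_{\text{LB}}(\theta) \geq \DPR(\theta',\theta) - L_z\epsilon\|\theta-\theta'\| - \gamma_p \geq \PR(\theta) - 2L_z\epsilon\|\theta-\theta'\| - \gamma_p \geq \PR(\theta) - 2\gamma_p - \gamma_p,
\]
where the last inequality uses $\|\theta-\theta'\|\leq r_p = \gamma_p/(L_z\epsilon)$. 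Symmetrically, and invoking Lemma~\ref{lemma:Anonempty} so that $\thetaPO$ remains available to witness the minimum, I would bound $\PRmin \leq \PR(\thetaPO) + \gamma_p$. Since $\theta$ survives, the elimination criterion was never met, so $\PR_{\text{LB}}(\theta) \leq \PRmin + 2\gamma_p$. Chaining these gives $\PR(\theta) - 3\gamma_p \leq \PR(\thetaPO) + \gamma_p + 2\gamma_p$, i.e. $\Delta(\theta) = \PR(\theta) - \PR(\thetaPO) \leq 6\gamma_p$, which is even stronger than the claimed $8\gamma_p$ (the slack presumably absorbs the precise constants in the net radius and clean-event bound).

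The main obstacle I anticipate is the bookkeeping around the \emph{sequential} nature of the deployments: I must be careful that the relevant net point $\theta_{\text{net}}$ near $\theta$ is actually deployed before the phase ends, rather than being pruned by the deactivation step~\eqref{line} prematurely. The argument hinges on the invariant that a net point is removed only when \emph{no} active model lies in its $r_p$-ball, so as long as $\theta$ itself stays active, its covering net point cannot be pruned — hence it is eventually drawn and deployed. Making this invariant precise, and ensuring the clean-event error bound $\gamma_p$ (rather than the raw $(2\mathfrak{C}+3\sqrt{\log T})/\sqrt{n_p m_0}$ expression) holds, which follows from the choice of $n_p$ in Step~4, is where the care is needed.
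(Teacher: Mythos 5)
Your overall strategy is the same as the paper's: show that any model surviving phase $p$ has a deployed net point within distance $r_p$ (because a net point covering an active model cannot be pruned by the deactivation step, so the phase cannot end before it is deployed), then chain the clean-event error, the $(L_z\epsilon)$-Lipschitzness of $\DPR$ in its first argument, and the non-elimination condition $\PR_{\mathrm{LB}}(\theta)\leq \PRmin+2\gamma_p$. The lower bound $\PR_{\mathrm{LB}}(\theta)\geq \PR(\theta)-3\gamma_p$ is correct.

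However, your claimed bound $\PRmin\leq \PR(\thetaPO)+\gamma_p$ is not justified, and this is where your constant $6\gamma_p$ goes wrong. The minimum defining $\PRmin$ ranges over \emph{deployed} models $\theta'$, and $\thetaPO$ itself is generally not deployed --- only some net point within $r_p$ of it is (this is why Lemma~\ref{lemma:Anonempty} is needed: $\thetaPO$ stays active, so its covering net point is eventually deployed). Witnessing the minimum with $\theta=\thetaPO$ and $\theta'=\Pi_{\cP_p}(\thetaPO)$ gives
\[
\PRmin \;\leq\; \widehat{\DPR}(\theta',\thetaPO)+L_z\epsilon\|\theta'-\thetaPO\| \;\leq\; \PR(\thetaPO)+2L_z\epsilon\|\theta'-\thetaPO\|+\gamma_p \;\leq\; \PR(\thetaPO)+3\gamma_p,
\]
with an unavoidable extra $2L_z\epsilon r_p=2\gamma_p$ coming from the gap between $\thetaPO$ and its nearest deployed point. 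Chaining $\PR(\theta)-3\gamma_p\leq \PR_{\mathrm{LB}}(\theta)\leq \PRmin+2\gamma_p\leq \PR(\thetaPO)+5\gamma_p$ then yields exactly $\Delta(\theta)\leq 8\gamma_p$, matching both the lemma statement and the paper's accounting. So the slip is harmless for the stated result --- the $8\gamma_p$ is tight for this argument, not slack absorbing constants as you suggest --- but as written your $6\gamma_p$ claim does not follow.
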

\begin{proof}
Fix a phase $p$.
We will analyze ${\cP_p}$ at the \emph{end} of phase $p$. The proof relies on two key facts:
\begin{enumerate}[itemsep=1pt]
\item[(F1)] If $\theta$ is active after phase $p$, then  $\|\theta - \Pi_{\cP_p}(\theta)\| \leq r_p$, where $\Pi_{\cP_p}(\theta) = \argmin_{\theta'\in\cP_p}\|\theta - \theta'\|$.\label{fact1}
\item[(F2)] $\thetaPO$ is active after phase $p$.
\label{fact2}
\end{enumerate}

The first fact follows since during phase $p$ net points cannot be eliminated from $\cS_p$ in Step~13 while some parameter within an $r_p$-neighborhood is active. The second fact is proved in Lemma \ref{lemma:Anonempty}. Note that from fact (F1) it further follows that there is always a model in $\cP_p$ within the $r_p$-neighborhood of $\thetaPO$.

Now suppose that $\theta$ is active after phase $p$. Then,  we have:
\begin{align*}
    \PR(\theta) &\leq \DPR(\Pi_{\cP_p}(\theta), \theta) + L_z \epsilon \|\Pi_{\cP_p}(\theta) - \theta\| \\
    &\le \widehat{\DPR}(\Pi_{\cP_p}(\theta), \theta) + L_z \epsilon \|\Pi_{\cP_p}(\theta) - \theta\| + \gamma_p\\
    &\leq \min_{\theta'} \left( \widehat{\DPR}(\Pi_{\cP_p}(\theta'), \theta') + L_z \epsilon \| \Pi_{\cP_p}(\theta') - \theta'\|\right) + 2L_z \epsilon \|\Pi_{\cP_p}(\theta) - \theta\| + 3\gamma_p,
\end{align*}
where we used the definitions of $\PRmin$ and $\PRLB(\theta)$, together with the fact that $\PRLB(\theta)\leq \PRmin + 2 \gamma_p$ for active models. Now choosing $\theta'=\thetaPO$, applying (F1), (F2), and accounting for finite-sample uncertainty we find
\begin{align*}
   \PR(\theta) &\leq \widehat{\DPR}(\Pi_{\cP_p}(\thetaPO), \thetaPO) + L_z\epsilon \| \Pi_{\cP_p}(\thetaPO) - \thetaPO\| + 2L_z \epsilon \|\Pi_{\cP_p}(\theta) - \theta\| + 3\gamma_p  \\
    &\leq \widehat{\DPR}(\Pi_{\cP_p}(\thetaPO), \thetaPO)   + 3L_z\epsilon r_p + 3\gamma_p\\
    &\le \DPR(\thetaPO, \thetaPO)  + L_z \epsilon \|\Pi_{\cP_p}(\thetaPO) - \thetaPO\|  + 3L_z\epsilon r_p + 4\gamma_p\\
    &\leq \PR(\thetaPO) + 4(L_z\epsilon r_p+ \gamma_p)\\
    &= \PR(\thetaPO) + 8 \gamma_p,
\end{align*}
where we use the fact that $r_p = \frac {\gamma_p}{L_z\epsilon}$. Rearranging the terms we obtain $\Delta(\theta) = \PR(\theta) - \PR(\thetaPO) \leq 8\gamma_p$ as claimed in Lemma~\ref{lemma:approx_sequential}.
\end{proof}

\subsection{Bounding the number of suboptimal deployments}

For $i \ge 1$, we consider the suboptimality bands 
\[\cE_i = \left\{  \theta : \Delta(\theta) \in [8 \cdot 2^{-i} L_z \epsilon, 16 \cdot 2^{-i} L_z \epsilon)\right\}.\]

In the following lemma, we bound the number of times that models in $\cE_i$ can be deployed in a given phase.

\begin{lemma}
\label{lemma:armscountsequential}
Suppose that the clean event \eqref{eq:cleanevent} holds. For $i\geq 1$, in phase $\log_2(1/(L_z\epsilon)) \leq p \leq  \log_2(1/(L_z\epsilon)) + i + 1$, the number of models in $\cE_i$ that are deployed is at most $\cO\left((3/r_p)^{d} \right)$ in expectation, where $d$ is the $(L_z \epsilon)$-sequential zooming dimension.
\end{lemma}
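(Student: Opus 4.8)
The plan is to realize the deployments of Algorithm~\ref{alg:adaptive-exploration} within a single phase $p$ as an instance of the configuration counted by the sequential zooming dimension (Definition~\ref{def:zdim-seq-our-algo}), with parameters $\alpha = L_z\epsilon$, $s = r_p$, and $r = 2^{-i-1}$. First I would check that these parameters are admissible. Since $r_p = \gamma_p/(L_z\epsilon) = 2^{-p}/(L_z\epsilon)$, the constraint $s\le 1$ is equivalent to $p\ge \log_2(1/(L_z\epsilon))$ and the constraint $r\le s$ is equivalent to $p \le \log_2(1/(L_z\epsilon)) + i + 1$, which is exactly the stated range of phases; moreover $16\alpha r = 8\cdot 2^{-i}L_z\epsilon$ and $32\alpha r = 16\cdot 2^{-i}L_z\epsilon$, so the band $\{\theta : 16\alpha r \le \Delta(\theta) < 32\alpha r\}$ coincides with $\cE_i$. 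Next, applying Lemma~\ref{lemma:approx_sequential} to phase $p-1$ shows that at the start of phase $p$ every active model satisfies $\Delta(\theta)\le 8\gamma_{p-1} = 16\gamma_p = 16\alpha s$, so $\cA \subseteq \{\theta:\Delta(\theta)\le 16\alpha s\}$ and $\cS_p = \mathcal{N}_{r_p}(\cA)$ is a minimal $s$-cover of an admissible subset. Finally, drawing net points uniformly at random without replacement from $\cS_p$ is identical in law to fixing a uniformly random enumeration $\pi$ of $\cS_p$ and processing its points in that order, so the expectation over $\pi$ in Definition~\ref{def:zdim-seq-our-algo} matches the algorithm's randomness (the rest of the execution being determined on the clean event \eqref{eq:cleanevent}).

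The core step is to show that, on the clean event, any net point $\theta\in\cE_i$ that is actually deployed in phase $p$ satisfies the population inequality in Definition~\ref{def:zdim-seq-our-algo} at $k = \pi(\theta)$. A net point is deployed only if it was not removed in Step~15, i.e. $\text{Ball}_{r_p}(\theta)\cap\cA\ne\emptyset$ at its turn, so there is a witness $\theta^\ast \in \text{Ball}_{r_p}(\theta)$ that is still active; by Step~14 the finite-sample quantities from the algorithm satisfy $\PRLB(\theta^\ast) \le \PRmin + 2\gamma_p$. The choice $n_p = \lceil (2\mathfrak{C}+3\sqrt{\log T})^2/(\gamma_p^2 m_0)\rceil$ makes the clean-event deviation at most $\gamma_p$, so replacing every $\widehat{\DPR}$ by $\DPR$ costs at most $\gamma_p$ in $\PRLB(\theta^\ast)$ and at most $\gamma_p$ in $\PRmin$. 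Combining these two $\gamma_p$ slacks with the $2\gamma_p$ activation slack yields the population statement $\PR_{\mathrm{LB}}(\theta^\ast;k) \le \PRmin(k) + 4\gamma_p$, and since $4\gamma_p = 4L_z\epsilon r_p = 4\alpha s$ and $\theta^\ast \in \text{Ball}_s(\pi^{-1}(k))$ gives $\PR^s_{\mathrm{LB}}(k)\le \PR_{\mathrm{LB}}(\theta^\ast;k)$, this is exactly the condition \eqref{eq:seq_zoom}. With this inclusion in hand, the deployed models of $\cE_i$ are contained in the set of $\theta\in\cS_p\cap\cE_i$ for which \eqref{eq:seq_zoom} holds, whose expected cardinality Definition~\ref{def:zdim-seq-our-algo} bounds by a constant multiple of $(3/s)^d = (3/r_p)^d$.

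The main obstacle I anticipate is the mismatch between the set of points the algorithm has actually deployed when it reaches position $k$ and the set $\{\theta':\pi(\theta')<k\}$ of \emph{all} earlier positions used to define $\PR_{\mathrm{LB}}(\cdot;k)$ and $\PRmin(k)$: net points can be skipped, so these two sets differ, and the extra (skipped) points can only enlarge $\PR_{\mathrm{LB}}(\theta^\ast;k)$ and shrink $\PRmin(k)$, both in the unfavorable direction. The fact that closes this gap is monotonicity of the active set together with a distance argument: any net point skipped before step $k$ had its entire $r_p$-ball deactivated, so---because $\theta^\ast$ is still active at step $k$ and deactivation is permanent---every skipped point must lie at distance strictly greater than $r_p$ from $\theta^\ast$, while the net point nearest $\theta^\ast$ lies within $r_p$ and is therefore not skipped. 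Showing that these skipped points do not move $\PR_{\mathrm{LB}}(\theta^\ast;k)$ and $\PRmin(k)$ beyond the budget is the delicate part; the slack was chosen as $4\alpha s$, rather than the $2\gamma_p$ used inside the algorithm, precisely to absorb the finite-sample and book-keeping losses incurred in this conversion. Once this is established, taking expectations over the enumeration $\pi$ and invoking Definition~\ref{def:zdim-seq-our-algo} completes the proof.
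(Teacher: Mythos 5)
Your proposal follows the paper's proof essentially step for step: the same parameter identification ($\alpha = L_z\epsilon$, $s=r_p$, $r=2^{-i-1}$) and admissibility check against the stated phase range, the same use of Lemma~\ref{lemma:approx_sequential} to place $\cA$ inside $\{\theta:\Delta(\theta)\le 16\alpha s\}$ at the start of phase $p$, the same reformulation of the algorithm as drawing a uniformly random enumeration $\pi$ up front, and the same $2\gamma_p+\gamma_p+\gamma_p=4\gamma_p$ accounting that converts the activation check for a surviving witness $\theta''\in\mathrm{Ball}_{r_p}(\theta_{\mathrm{net}})$ into condition~\eqref{eq:seq_zoom}. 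The only divergence is your final paragraph: the mismatch you flag between the set of actually deployed points and $\{\theta':\pi(\theta')<k\}$ is a genuine subtlety, but the paper's proof does not carry out any distance argument about skipped points---it simply identifies the two sets when it equates $\max_{\theta':\pi(\theta')<\pi(\theta_{\mathrm{net}})}(\widehat{\DPR}(\theta',\theta'')-L_z\epsilon\|\theta'-\theta''\|)$ with the algorithm's $\PRLB(\theta'')$ over $\cP_p$---so what you have identified is a point of imprecision in the paper's own argument rather than a missing ingredient unique to your write-up.
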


To provide intuition for Lemma \ref{lemma:armscountsequential}, it is informative to consider a weaker version of the lemma where $d$ is taken to be the $(L_z \epsilon)$-zooming dimension rather than the $(L_z \epsilon)$-sequential zooming dimension. To see why this weaker version of the lemma is true, notice that at the beginning of phase $p$, the set of active models $\mathcal{A}$ is a subset of $\left\{\theta : \Delta(\theta) \le 8 \gamma_{p-1}\right\} = \left\{\theta : \Delta(\theta) \le 16 \gamma_{p}\right\}  = \left\{\theta : \Delta(\theta) \le 16 L_z \epsilon r_p \right\}$. The set of models deployed in phase $p$ is contained in a minimal $r_p$-net of $\mathcal{A}$. Notice that $r_p \ge 2^{-(i+1)}$. By the definition of zooming dimension, we know that at most a multiple of $\left(\frac{3}{r_p} \right)^d$ elements from the set $\left\{  \theta : \Delta(\theta) \in [8 \cdot 2^{-i} L_z \epsilon, 16 \cdot 2^{-i} L_z \epsilon)\right\} =\left\{  \theta : \Delta(\theta) \in [16 \cdot 2^{-(i+1)} L_z \epsilon, 32 \cdot 2^{-(i+1)} L_z \epsilon)\right\} $ are deployed, as desired.   

The proof of Lemma \ref{lemma:armscountsequential} boils down to refining this proof sketch to account for the sequential elimination aspect of Algorithm \ref{alg:adaptive-exploration}.
\begin{proof}
For the purposes of this analysis, we condition on the clean event. 

Fix a phase $\log_2(1/(L_z\epsilon)) \leq p \leq  \log_2(1/(L_z\epsilon)) + i + 1$. Let $\mathcal{S}_p^0$ be the covering of $\cA$ chosen at the beginning of phase $p$, and let $\pi$ be an ordering of $\cS_p^0$ chosen uniformly at random. It is not difficult to see that Algorithm \ref{alg:adaptive-exploration} is equivalent to drawing $\pi$ at the beginning of the phase, and deploying models in the order given by $\pi$ (naturally, skipping those that get eliminated). For technical convenience, we analyze this reformulation of the algorithm. 

Condition on a realization $\pi$, and let $\cP_p \subseteq \mathcal{S}_p^0$ be the set of models that are ultimately get deployed. Note that $\cP_p$ depends on the randomness arising from finite-sample noise at each step of the phase. We will show a bound on $|\cP_p|$ that deterministically holds on the clean event. In particular, consider the models $\theta \in \cS_p^0 \cap \{\theta: 8 L_z \epsilon r_i \leq \Delta(\theta) < 16 L_z \epsilon r_i \}$ such that:
\begin{equation}
\label{eq:seqzoomcond}
  \PR^{r_p}_{\mathrm{LB}}(\pi(\theta)) \leq \PRmin(\pi(\theta)) + 4 L_z \epsilon r_p = \PRmin(\pi(\theta)) + 4 \gamma_p.
\end{equation}
We will show that $\cP_p$ is a subset of such models. 

Suppose that $\theta_{\text{net}}\in \cS_p^0$ is deployed in phase $p$. Then, that means that there exists $\theta'' \in \text{Ball}_{r_p}(\theta_{\text{net}})$ that remains active after the first $\pi(\theta_{\text{net}})-1$ deployments; that is:
\begin{align*}
   \max_{\theta' : \pi(\theta') < \pi(\theta_{\text{net}})} (\widehat{\DPR}(\theta', \theta'') - L_z \epsilon \|\theta' - \theta''\|) &= \PRLB(\theta'') \\
   &\le \PRmin + 2\gamma_p \\
   &= \min_{\theta} \min_{\theta' : \pi(\theta') < \pi(\theta_{\text{net}})} (\widehat{\DPR}(\theta', \theta) + L_z \epsilon \|\theta' - \theta\|) + 2\gamma_p. 
\end{align*}
Since the clean event holds, we know that:
\[\max_{\theta' : \pi(\theta') < \pi(\theta_{\text{net}})} \left(\text{DPR}(\theta', \theta'') - L_z \epsilon \|\theta' - \theta''\|\right) - \gamma_p  \le \min_{\theta} \min_{\theta' : \pi(\theta') < \pi(\theta_{\text{net}})} (\text{DPR}(\theta', \theta) + L_z \epsilon \|\theta' - \theta\|) + 3 \gamma_p. \]
Rearranging, this means that:
\begin{align*}
    \max_{\theta' : \pi(\theta') < \pi(\theta_{\text{net}})} &(\text{DPR}(\theta', \theta'') - L_z \epsilon \|\theta' - \theta''\|) \\&  \le \min_{\theta} \min_{\theta' : \pi(\theta') < \pi(\theta_{\text{net}})} (\text{DPR}(\theta', \theta) + L_z \epsilon \|\theta' - \theta\|) + 4 \gamma_p = \PRmin(\pi(\theta_{\text{net}})) + 4 \gamma_p. 
\end{align*}
This further implies that:
\[\PR^{r_p}_{\mathrm{LB}}(\pi(\theta_{\text{net}})) = \min_{\theta'' \in \text{Ball}_{r_p}(\theta_{\text{net}})} \max_{\theta' : \pi(\theta') < \pi(\theta_{\text{net}})} \left(\text{DPR}(\theta', \theta'') - L_z \epsilon \|\theta' - \theta''\|\right) \le \PRmin(\pi(\theta_{\text{net}})) + 4 \gamma_p. \]

We see that any $\theta_{\text{net}} \in \cP_p$ must satisfy condition \eqref{eq:seqzoomcond}.
By the definition of sequential zooming dimension, we know that the expected number of models in $\cE_i$ that satisfy \eqref{eq:seqzoomcond}, where the expectation is taken over the randomness of $\pi$, is at most a multiple of $\left(\frac{3}{r_p}\right)^d$, hence $\E |\cP_p \cap \cE_i| \leq \cO\left( \left(\frac{3}{r_p}\right)^d\right)$, as desired.
\end{proof}

\subsection{Regret bound on the clean event}

To bound the regret on the clean event, we break the analysis into two cases: (a) the first $\log_2(1/(L_z\epsilon))$ phases, and (b) all remaining phases.

\begin{lemma}
\label{lemma:phase0}
Suppose that the clean event \eqref{eq:cleanevent} holds. In the first $\lfloor\log_2(1/(L_z\epsilon))\rfloor$ phases, the algorithm has incurred regret at most
\[\Rph\left(\lfloor\log_2(1/(L_z\epsilon))\rfloor\right) = \cO \left(\sqrt{\frac{T}{m_0}} \left(\sqrt{\log T} + \mathfrak{C}\right) \right) .\]
\end{lemma}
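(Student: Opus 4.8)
The plan is to exploit that these are exactly the phases in which the net is coarse. Since $\gamma_p = 2^{-p}$ and $p \le \lfloor\log_2(1/(L_z\epsilon))\rfloor$, the net radius $r_p = \gamma_p/(L_z\epsilon)\ge 1$ throughout. Because the active set $\cA$ is contained in $\Theta \subseteq \{\|\theta\|\le 1\}$, which has diameter at most $2$, a radius-$r_p$ ball with $r_p\ge 1$ already covers the whole space up to a constant; consequently the minimal net $\mathcal{N}_{r_p}(\cA)$ has only $\cO(1)$ points (a bound governed by the covering number of $\Theta$ at scale $1$, but crucially independent of $p$, $T$, and $\epsilon$). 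Hence in each such phase the algorithm deploys at most $\cO(1)$ distinct models, each for $n_p$ steps, so the number of time steps spent in phase $p$ is $N_p = \cO(n_p)$.

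First I would bound the per-step regret in phase $p$. By Lemma~\ref{lemma:approx_sequential}, after phase $p-1$ every active model satisfies $\Delta(\theta)\le 8\gamma_{p-1}=16\gamma_p$; since the deployed net points lie in the active set $\cA$ (and for $p=0$ one uses the trivial bound $\Delta\le 1\le 16\gamma_0$), each deployed model contributes regret at most $16\gamma_p$ per step. Combining with $N_p=\cO(n_p)$ and $n_p=\lceil (2\mathfrak{C}+3\sqrt{\log T})^2/(\gamma_p^2 m_0)\rceil$, the regret incurred in phase $p$ is $\cO(\gamma_p N_p)=\cO(\gamma_p n_p)=\cO\!\big(\mathcal{B}_{\log T,\mathfrak{C}}^2/(\gamma_p m_0)\big)$.

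Next I would sum over the executed first phases. Writing $p^*\le\lfloor\log_2(1/(L_z\epsilon))\rfloor$ for the last phase reached before the horizon (later phases contribute $0$ by convention), the regret is $\cO\!\big(\tfrac{\mathcal{B}_{\log T,\mathfrak{C}}^2}{m_0}\sum_{p\le p^*}\gamma_p^{-1}\big)$, and since $\gamma_p^{-1}=2^p$ this geometric sum is dominated by its last term, giving $\cO\!\big(\tfrac{\mathcal{B}_{\log T,\mathfrak{C}}^2}{m_0}\,2^{p^*}\big)$. To turn this into a $\sqrt{T}$ bound I would use that the total number of deployment steps in the first phases is at most $T$; since this total is at least $n_{p^*}=\Omega\!\big(\mathcal{B}_{\log T,\mathfrak{C}}^2\,4^{p^*}/m_0\big)$, we obtain $4^{p^*}=\cO(Tm_0/\mathcal{B}_{\log T,\mathfrak{C}}^2)$, i.e. $2^{p^*}=\cO(\sqrt{Tm_0}/\mathcal{B}_{\log T,\mathfrak{C}})$. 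Substituting yields $\Rph(\lfloor\log_2(1/(L_z\epsilon))\rfloor)=\cO\!\big(\tfrac{\mathcal{B}_{\log T,\mathfrak{C}}^2}{m_0}\cdot\tfrac{\sqrt{Tm_0}}{\mathcal{B}_{\log T,\mathfrak{C}}}\big)=\cO\!\big(\sqrt{T/m_0}\,\mathcal{B}_{\log T,\mathfrak{C}}\big)$, as claimed.

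The main obstacle is the bookkeeping around the horizon and the geometric cancellation: the crude per-phase bound $\cO(\mathcal{B}_{\log T,\mathfrak{C}}^2/(\gamma_p m_0))$ grows geometrically in $p$, so a naive sum over all $\lfloor\log_2(1/(L_z\epsilon))\rfloor$ phases would produce a spurious $1/(L_z\epsilon)$ factor rather than $\sqrt{T}$. The improvement comes entirely from (i) truncating at the last phase actually reached within the horizon and (ii) relating $2^{p^*}$ to $\sqrt{T}$ through the quadratically larger total step count $\sim 4^{p^*}$. Care is needed when the horizon falls inside phase $p^*$ before even one deployment completes, where one instead bounds that phase's contribution directly by $\cO(\gamma_{p^*} N_{p^*})$ with $N_{p^*}\le T$ and uses $\gamma_{p^*}\sqrt{N_{p^*}}=\cO(\mathcal{B}_{\log T,\mathfrak{C}}/\sqrt{m_0})$. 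I would also need to verify that the deployed net points genuinely lie in the active set so that the $16\gamma_p$ per-step bound applies; this is precisely where coarse covering ($r_p\ge 1$) and the suboptimality guarantee of Lemma~\ref{lemma:approx_sequential} must interact correctly.
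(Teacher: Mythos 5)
Your proof is correct and leans on the same two key estimates as the paper---the $\cO(1)$ (in fact, single) deployment per coarse phase since $r_p\ge 1$, and the per-step suboptimality bound $16\gamma_p$ inherited from Lemma~\ref{lemma:approx_sequential} applied to the previous phase---but you assemble them differently. The paper introduces a free threshold $N$, bounds phases $p<N$ by the geometric sum $\cO(2^N\mathcal{B}_{\log T,\mathfrak{C}}^2/m_0)$ and phases $p\ge N$ crudely by $T\cdot 2^{-N+4}$, and then optimizes $N$ by balancing the two terms. You instead truncate at the last phase $p^*$ actually executed within the horizon and convert the geometric sum's dominant term into a $\sqrt{T}$ bound via the step-budget constraint $n_{p^*-1}=\Omega(\mathcal{B}_{\log T,\mathfrak{C}}^2 4^{p^*}/m_0)\le T$; this eliminates the free parameter and the case analysis, at the cost of having to handle the boundary cases ($p^*=0$, or the horizon falling mid-phase) explicitly, which you do. Your route also surfaces two points the paper glosses over: that a minimal $r_p$-cover at scale $r_p\ge 1$ is only $\cO(1)$ rather than literally a single point when centers are constrained to lie in $\cA$, and that the deployed net points must actually inherit the active-set suboptimality bound for the $16\gamma_p$ per-step estimate to apply---the latter is an implicit assumption shared by the paper's own argument (and by Lemma~\ref{lemma:regrettarget}), so flagging it is appropriate rather than a defect of your proof.
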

\begin{proof}
During phases $p \leq \log_2(1/(L_z\epsilon))$, we deploy a single model since $r_p \geq 1$ and $\Theta$ is assumed to have radius $1$.

We break the first $\left\lfloor \log_2(1/(L_z\epsilon))\right\rfloor$ phases into two cases. For a value of $N\geq 0$ specified later, we consider cases $p<N$ and $p\geq N$ separately.

\paragraph{Case 1: phases $N\leq p\leq \lfloor\log_2(1/(L_z\epsilon))\rfloor$.}  By Lemma \ref{lemma:approx_sequential}, we see that the model deployed in phase $N$ must have suboptimality at most $8 \cdot 2^{-N + 1} = 2^{-N + 4}$. Since the algorithm runs for at most $T$ time steps, this means that the total regret incurred in these phases is at most $ T \cdot 2^{-N + 4}$.

\paragraph{Case 2: phases $0\leq p< \min\{N,\lfloor\log_2(1/(L_z\epsilon))\rfloor\}$.}

By Lemma \ref{lemma:approx_sequential}, we know that the model deployed in phase $p$ must have suboptimality at most $8 \cdot 2^{-p + 1} = 2^{-p + 4}$. Moreover, this model is deployed for $n_p = \left\lceil \frac{\left(2 \mathfrak{C} + 3 \sqrt{\log T}\right)^2}{\gamma_p^2 m_0}\right\rceil$ steps. 
The regret incurred up to phase $N$ can thus be bounded as:
\begin{align*}
  \Rph(N) &\leq \sum_{p=0}^{N-1} n_p  2^{-p + 4}\\
  &\le  16 \sum_{p=0}^{N-1} 2^{-p}\left\lceil \frac{ 2^{2p}(2 \mathfrak{C} + 3 \sqrt{\log T})^2}{m_0}\right\rceil.
  \end{align*}
  Since we assume $m_0 = o((\mathfrak{C} + \sqrt{\log T})^2)$, for a large enough $T$ we have $n_p\geq 1$ and thus $\lceil n_p\rceil \leq 2 n_p$. Therefore,
  \begin{align*}
  \Rph(N) &\le C \sum_{p=0}^{N-1} 2^{-p} \frac{2^{2p}(2 \mathfrak{C} + 3 \sqrt{\log T})^2}{m_0} \\
  &\le C  \frac{(2 \mathfrak{C} + 3 \sqrt{\log T})^2}{m_0} \left(\sum_{p=0}^{N-1} 2^p \right) \\
  &\le C \cdot 2^{N} \frac{(2 \mathfrak{C} + 3 \sqrt{\log T})^2}{m_0},
\end{align*}
for some large enough constant $C>0$.

Putting the two cases together, on the clean event, the total regret incurred in phases $p=0,\dots,\lfloor\log_2(1/(L_z\epsilon))\rfloor$ can be upper bounded by
\[ C \cdot 2^{N} \frac{(2 \mathfrak{C} + 3 \sqrt{\log T})^2}{m_0} +  T \cdot 2^{-N + 4}.\] 

We can also trivially upper bound the regret by $T$, using the fact that the loss incurred at each step is at most $1$. This means that we obtain a regret bound of:
\[ \cO\left(\min\left\{T, 2^{N} \frac{(2 \mathfrak{C} + 3 \sqrt{\log T})^2}{m_0} +  T \cdot 2^{-N + 4}\right\}\right).\] 

We now choose $N$ to minimize this bound. We let $\eta = 2^{-N}$ and optimize over $\eta\in(0,1)$. Optimizing over $\eta$ instead of an integral value of $N$ changes the bound by constant factors at most.
This means that we can upper bound the regret by:
\[\cO\left(\min_{0 < \eta \le 1}  \min\left\{T, \eta^{-1} \frac{\left(2 \mathfrak{C} + 3 \sqrt{\log T}\right)^2}{ m_0} + T \eta\right\}\right).\]
If $\eta > 1$, then the minimum of the two terms would be $T$, which is at least as big as the above expression. Therefore, we can upper bound the above expression by:
\[\cO\left(\min_{\eta >0} \left( \eta^{-1} \frac{\left(2 \mathfrak{C} + 3 \sqrt{\log T}\right)^2}{ m_0} + T \eta\right)\right).\]
We set $\eta = \frac{3\sqrt{\log T} + 2\mathfrak{C}}{\sqrt{m_0 T}}$
and obtain a regret bound of:
\[\Rph\left(\lfloor\log_2(1/(L_z\epsilon))\rfloor\right) =\cO \left(\sqrt{\frac{T}{m_0}} \left(\sqrt{\log T} + \mathfrak{C} \right)\right),\]
as desired.
\end{proof}

\begin{lemma}
Suppose that the clean event \eqref{eq:cleanevent} holds. Let $d \ge 0$ be such that for every $i \ge 0$ and every phase $p \in [\log_2(1/(L_z\epsilon)), \log_2(1/(L_z\epsilon)) + i + 1]$, the number of models in $\cE_i = \left\{  \theta : \Delta(\theta) \in [2^{-i + 3} L_z \epsilon, 2^{-i+4} L_z \epsilon]\right\}$ that are deployed in phase $p$ is upper bounded by $\cO\left(\left(\frac{3}{r_p} \right)^d\right)$ in expectation. Then, the regret incurred in phases $p \ge \log_2(1/(L_z \epsilon))$, within time horizon $T$, can be upper bounded as 
\begin{equation*}
\Rph\left(\lceil\log_2(1/(L_z \epsilon))\rceil:\infty\right) \leq \cO\left(T^{\frac{d+1}{d+2}} (L_z \epsilon)^{\frac{d}{d+2}} \left(\frac{(\sqrt{\log T} + \mathfrak{C})^2}{m_0}\right)^{\frac{1}{d+2}}\right).
\end{equation*}
\label{lemma:regrettarget}
\end{lemma}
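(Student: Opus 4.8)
The plan is to decompose the regret accrued in phases $p\ge\log_2(1/(L_z\epsilon))$ into contributions from the suboptimality bands $\cE_i$, and for each band to multiply three ingredients: the per-step regret it can incur, the number of steps $n_p$ per deployment, and the expected number of deployments $\cO((3/r_p)^d)$ granted by the hypothesis. First I would use Lemma~\ref{lemma:approx_sequential} to pin down the window of phases in which a model $\theta\in\cE_i$ can possibly be deployed. Since after phase $p$ every active model satisfies $\Delta(\theta)\le 8\gamma_p=2^{-p+3}$, a model with $\Delta(\theta)\ge 2^{-i+3}L_z\epsilon$ can only survive into phase $p$ if $2^{-i+3}L_z\epsilon\le 2^{-p+4}$, that is $p\le \log_2(1/(L_z\epsilon))+i+1$. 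Combined with the restriction $p\ge\log_2(1/(L_z\epsilon))$ of the present lemma, this confines deployments of $\cE_i$-models to exactly the phase window in the hypothesis, so the assumed deployment-count bound applies throughout.

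Next I would compute the band contribution. In phase $p$ each deployment lasts $n_p$ steps, each step costs at most $2^{-i+4}L_z\epsilon$ (the upper end of the band), and at most $\cO((3/r_p)^d)$ deployments occur in expectation. Substituting $3/r_p = 3L_z\epsilon\,2^{p}$ and $n_p\le 2B\,2^{2p}/m_0$, where $B:=(2\mathfrak{C}+3\sqrt{\log T})^2$ and where $m_0=o(B)$ ensures $n_p\ge 1$ so the ceiling at most doubles it, the per-phase contribution is
\[\cO\!\left((L_z\epsilon)^{d+1}\,2^{p(d+2)}\,2^{-i}\,\tfrac{B}{m_0}\right).\]
Because the exponent $d+2>0$, the geometric sum over the admissible phases is dominated by the largest phase $p_{\max}=\log_2(1/(L_z\epsilon))+i+1$, at which $2^{p_{\max}}=2\cdot 2^i/(L_z\epsilon)$. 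Carrying this through gives a total band contribution
\[R_i=\cO\!\left(\frac{2^{i(d+1)}}{L_z\epsilon}\cdot\frac{B}{m_0}\right).\]

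The salient feature is that $R_i$ \emph{grows} in $i$, so summing over all bands diverges; this is the crux, and the standard zooming remedy is to introduce a cutoff $i^*$. For coarse bands $i\le i^*$ I would sum the $R_i$, which is again geometric and dominated by $i=i^*$, yielding $\cO(2^{i^*(d+1)}B/(L_z\epsilon\,m_0))$. For fine bands $i>i^*$, rather than counting deployments I would invoke the crude bound that every deployed model there has suboptimality below $2^{-i^*+4}L_z\epsilon$, so its total regret over at most $T$ steps is $\cO(T\,2^{-i^*}L_z\epsilon)$. Writing $x=2^{i^*}$ and relaxing it to a continuous variable, the regret is $\cO\!\left(x^{d+1}B/(L_z\epsilon\,m_0)+T L_z\epsilon/x\right)$; balancing the two terms gives $x^{d+2}\propto T(L_z\epsilon)^2 m_0/B$, and back-substitution produces exactly
\[\cO\!\left(T^{\frac{d+1}{d+2}}(L_z\epsilon)^{\frac{d}{d+2}}\left(\frac{(\sqrt{\log T}+\mathfrak{C})^2}{m_0}\right)^{\frac{1}{d+2}}\right),\]
as claimed. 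The main obstacle is entirely in the bookkeeping: correctly recognizing that within a band the regret is dominated by its finest admissible phase $p_{\max}$, and that the cross-band sum must be truncated at $i^*$ because the finer bands individually cost more; once these two points are in place, the optimization over $i^*$ is routine calculus.
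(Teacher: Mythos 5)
Your proposal is correct and follows essentially the same route as the paper's proof: decompose the regret into the suboptimality bands $\cE_i$, use Lemma~\ref{lemma:approx_sequential} to confine each band's deployments to phases $p \le \log_2(1/(L_z\epsilon)) + i + 1$, multiply the expected deployment count $\cO((3/r_p)^d)$ by $n_p$ and the band's per-step regret, observe that both the sum over phases and the sum over bands are geometric and dominated by their top terms, truncate the band sum at a cutoff, bound the finer bands crudely by $T\,2^{-N}L_z\epsilon$, and optimize the cutoff. The only differences are notational ($i^*$ and $x=2^{i^*}$ in place of the paper's $N$ and $\eta=2^{-N}$), and the paper additionally carries a trivial $T L_z\epsilon$ bound to dispose of the degenerate case where the optimizing cutoff falls outside the admissible range, which is a minor technicality.
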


\begin{proof}
By Lemma \ref{lemma:approx_sequential}, we see that all models $\theta$ that are active in phase $p=\lceil\log_2(1/L_z\epsilon))\rceil$ or later have $\Delta(\theta) \le 8 L_z \epsilon r_p \leq 8L_z\epsilon$. We split these models into suboptimality bands and define, for each $i \ge 1$, the set:
\[\cE_i = \left\{  \theta : \Delta(\theta) \in [8 \cdot 2^{-i} L_z \epsilon, 16 \cdot 2^{-i} L_z \epsilon)\right\}.\]
Note that all models deployed starting with phase $\lceil\log_2(1/(L_z \epsilon))\rceil$ are in $\cup_{i \ge 1} \cE_i$. For a value of $N$ specified later, we break the analysis into two cases.

\paragraph{Case 1:  models in $\cup_{i > N} \cE_i$.} Since the algorithm runs for at most $T$ time steps, the total regret incurred due to deploying models in $\cup_{i > N} \cE_i$ is at most 
\begin{equation*}
    T \cdot 16 \cdot 2^{-N-1}   L_z \epsilon \le 8 T 2^{-N} L_z \epsilon.
\end{equation*}
\paragraph{Case 2: models in $\cup_{1 \le i \le N} \cE_i$.} 
By Lemma \ref{lemma:approx_sequential}, we know that all models $\theta$ that are active is phases $p \geq N + \log_2(1/L_z\epsilon)$ have $\Delta(\theta) \le 8 2^{-p} = 8 \cdot 2^{-N} L_z \epsilon = 16 \cdot 2^{-N - 1} L_z \epsilon$. This means that all models that are active after phase $N + \log_2(1/L_z\epsilon)$ are in $\cup_{i > N} \cE_i$. Thus, to bound the regret incurred by deploying models in $\cup_{1 \le i \le N} \cE_i$ in phase $\lceil\log_2(1/L_z\epsilon)\rceil$ or later, we only need to consider phases $p = \lceil\log_2(1/L_z\epsilon)\rceil,\dots,N + \log_2(1/L_z\epsilon)$. 

For $1 \le i \leq N$, consider $\cE_i$. 
By Lemma \ref{lemma:approx_sequential}, we know that any $\theta \in \cE_i$ can only be active during phases $p \le \log_2(1/L_z\epsilon) + i + 1$. By assumption, in phase $p$, the number of points in $\cE_i$ that are deployed is at most of the order $\left(\frac{3}{r_p} \right)^d$ in expectation. Moreover, each point is deployed $n_p$ times. Putting this all together, the expected number of points in $\cE_i$ deployed in phase $p$ is at most: 
\begin{equation*}
    \cO\left(\left(\frac{3}{r_p}\right)^d n_p\right) = \cO\left(\left(\frac{3}{r_p}\right)^d \frac{(2 \mathfrak{C} + 3 \sqrt{\log T})^2}{L_z^2 \epsilon^2 r_p^2 m_0}\right),
\end{equation*}
where we use the fact that, given the condition $m_0 = o((\mathfrak{C} + \sqrt{\log T})^2)$, $n_p \geq 1$ for large enough $T$ and hence we can bound $\lceil n_p \rceil \leq 2n_p$. Take $p = j + \log_2(1/L_z\epsilon))$; then, $r_p = 2^{-j}$. We sum over phases $\log_2(1/(L_z\epsilon)) \le p \le \log_2(1/(L_z\epsilon)) + i +1$ to obtain that in expectation, the total number of times that these models are deployed is at most: 
\begin{align*}
 \cO\left(\frac{3^d(2 \mathfrak{C} + 3 \sqrt{\log T})^2}{L_z^2 \epsilon^2 m_0}\sum_{j=0}^{i+1} 2^{j(d+2)} \right) = \cO\left(\frac{3^d(2 \mathfrak{C} + 3 \sqrt{\log T})^2}{L_z^2 \epsilon^2 m_0} 2^{(i+1)(d+2)} \right).
\end{align*}
Using the fact that the models have suboptimality at most $16 \cdot 2^{-i} L_z \epsilon = 32 \cdot 2^{-(i+1)} L_z \epsilon$, we see that the regret incurred by deploying models in $\cE_i$ is upper bounded by:
\[ \cO\left(\frac{3^d(2 \mathfrak{C} + 3 \sqrt{\log T})^2}{L_z \epsilon m_0} 2^{(i+1)(d+1)} \right). \] 
We sum over $1 \le i \leq N$ to obtain the total regret incurred due to deploying models in $\cup_{1\leq i\leq N}\cE_i$:
\[\cO\left(\frac{3^d(2 \mathfrak{C} + 3 \sqrt{\log T})^2}{L_z \epsilon m_0} 2^{(N+2)(d+1)} \right). \]

Putting together the two cases we obtain a total regret bound of 
\[\cO\left(\frac{3^d(2 \mathfrak{C} + 3 \sqrt{\log T})^2}{L_z \epsilon m_0} 2^{(N+2)(d+1)} + T 2^{-N} L_z \epsilon\right).\] 

We also can upper bound the regret by $8 T L_z \epsilon$, since all models active after phase $\lfloor\log_2(1/(L_z\epsilon))\rfloor$ have $\Delta(\theta) \le 8 L_z\epsilon$ and there are at most $T$ time steps in total. This means that we can bound the regret by:
\[\cO\left(\min \left\{T L_z\epsilon, \frac{3^d(2 \mathfrak{C} + 3 \sqrt{\log T})^2}{L_z \epsilon m_0} 2^{(N+2)(d+1)} + T 2^{-N} L_z \epsilon\right\}\right). \] 

We now choose $N$ to minimize this bound. We let $\eta = 2^{-N}$ and choose some $\eta\in(0,1)$. The error from optimizing over $\eta\in (0,1)$ instead of an integral value of $N$ contributes at most constant factors.
This means that we can upper bound the regret by:
\[\cO\left(\min \left\{T L_z\epsilon, \frac{12^d(2 \mathfrak{C} + 3 \sqrt{\log T})^2}{L_z \epsilon m_0} \eta^{-(d+1)}  + T \eta L_z \epsilon\right\}\right), \] 
for any $\eta\in(0,1)$. Note that, if $\eta \geq 1$, the second term in the bound is at least as large as the first term, hence we can choose any $\eta>0$. In particular, we can further upper bound the regret by
\[\cO\left(\min_{\eta>0}\left( \frac{12^d(2 \mathfrak{C} + 3 \sqrt{\log T})^2}{L_z \epsilon m_0} \eta^{-(d+1)}  + T \eta L_z \epsilon\right) \right). \] 
Now, we set
\[\eta = \left(\frac{12^d\left(3\sqrt{\log T} + 2\mathfrak{C}\right)^2}{T L_z^2 \epsilon^2m_0}\right)^{\frac{1}{d+2}}.\]
Thus, we finally get a regret bound of
\[\cO\left(T^{\frac{d+1}{d+2}} (L_z \epsilon)^{\frac{d}{d+2}} \left(\frac{\left(\sqrt{\log T} + \mathfrak{C}\right)^2}{m_0}\right)^{\frac{1}{d+2}}\right), \]
as desired.
\end{proof}

\subsection{Proof of Theorem \ref{thm:fine-grained-seq}}

Now, we are ready to prove Theorem \ref{thm:fine-grained-seq}.

First, we handle the case where the clean event defined in \eqref{eq:cleanevent} does not hold and the concentration bound is violated. By Lemma \ref{lemma:cleanevent}, this happens with probability at most $T^{-2}$. The regret incurred in each deployment is at most $1$ and there are $T$ deployments, so these events contribute a negligible factor $T^{-1}$ to the expected regret. 

For the case where the clean event holds we can build on Lemma \ref{lemma:armscountsequential}, Lemma~\ref{lemma:phase0}, and Lemma~\ref{lemma:regrettarget}. From Lemma \ref{lemma:phase0}, we obtain a bound for the total regret incurred in phases up to $\lfloor\log_2(1/(L_z \epsilon))\rfloor$. By Lemma \ref{lemma:armscountsequential} we can set the parameter $d$ in Lemma~\ref{lemma:regrettarget} to be the $(L_z\epsilon)$-sequential zooming dimension, and thus from Lemma~\ref{lemma:regrettarget} we obtain a regret bound for all later phases.

Putting all this together yields the desired bound.



\section{Regret analysis of Algorithm \ref{alg:location-families}}

The proof of Theorem \ref{thm:lin_bound} relies on two key lemmas. One proves that $\mathcal{C}_t$ are valid confidence sets for $\mu_*$ at every step, and the other one proves a regret bound assuming that $\mathcal{C}_t$ are valid confidence sets.

Throughout we denote by $\mathcal{B}_m$ the unit ball in $\mathbb{R}^m$. For a vector $x$ and matrix $M$, we will use the notation $\|x\|_M = \sqrt{x^\top M x}$.

An important object in the proofs will be $S_t := \sum_{i=1}^t \theta_i \bar z_{0,i}^\top$, where $\bar z_{0,i} = \frac{1}{m_0} \sum_{j=1}^{m_0} z_i^{(j)} - \mu_*^\top \theta_i$. Essentially $\bar z_{0,i}$ is the average over $m_0$ samples from $\cD_0$, collected at step $i$. We will also denote $V_t(\lambda) = (\lambda I + \sum_{i=1}^t \theta_i \theta_i^\top)$, for an arbitrary offset $\lambda >0$, and $V_t \equiv V_t(0)$. Note that in the algorithm statement we use $\Sigma_t = V_t\left(\frac{1}{m_0}\right)$.

\subsection{Clean event}

As for Algorithm \ref{alg:adaptive-exploration}, we introduce a clean event. In this case, the clean event will be defined as
\begin{equation}
\label{eq:clean_linucb}
E_{\mathrm{clean}} = \{\forall t\in \mathbb{N}: \mu_* \in \mathcal{C}_t\},
\end{equation}
where $\mathcal{C}_t$ are the confidence sets constructed in Algorithm \ref{alg:location-families}.

The technical subtlety lies in the fact that the points $\theta_t$ are chosen adaptively, hence one cannot simply apply standard least-squares confidence intervals to argue that the sets $\mathcal{C}_t$ are valid. The same difficulty is resolved in the analysis of the LinUCB algorithm and our proof builds on the proof technique of that analysis.

Before stating the main technical lemma, we start with an auxiliary result that we will use in the proof.

\begin{lemma}
\label{lemma:supmg}
Suppose that $\cD_0$ is $1$-subgaussian. Then, for all $x\in\mathcal{B}_m$ and $y\in\mathbb{R}^{d_\Theta}$, the process \[M_t(x,y) = \exp\left(y^\top S_t x - \frac{1}{2m_0}\|y\|_{V_t }^2\right)\] is a supermartingale with respect to the natural filtration, with $M_0(x,y) = 1$.
\end{lemma}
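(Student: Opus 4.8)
The plan is to verify the supermartingale property one step at a time by controlling the conditional moment generating function of the fresh noise. Let $\{\mathcal F_t\}$ denote the natural filtration, where $\mathcal F_t$ collects all samples observed through step $t$; note that $\theta_t$ is chosen from the confidence set $\mathcal C_t$, which is built from $\mathcal F_{t-1}$, so $\theta_t$ is $\mathcal F_{t-1}$-measurable, whereas $\bar z_{0,t}$ involves only the fresh draws $z_t^{(1)},\dots,z_t^{(m_0)}$ from $\cD(\theta_t)$. Since $V_t = V_{t-1} + \theta_t\theta_t^\top$ and $S_t = S_{t-1} + \theta_t \bar z_{0,t}^\top$, writing $a_t := y^\top\theta_t$ (a scalar that is $\mathcal F_{t-1}$-measurable) the one-step ratio is
\[\frac{M_t(x,y)}{M_{t-1}(x,y)} = \exp\!\left(a_t\,\bar z_{0,t}^\top x - \frac{a_t^2}{2m_0}\right),\]
using $y^\top S_t x - y^\top S_{t-1} x = a_t(\bar z_{0,t}^\top x)$ and $\|y\|_{V_t}^2 - \|y\|_{V_{t-1}}^2 = (y^\top\theta_t)^2 = a_t^2$. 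Thus it suffices to show $\E[\exp(a_t\,\bar z_{0,t}^\top x)\mid \mathcal F_{t-1}] \le \exp(a_t^2/(2m_0))$, which then yields $\E[M_t\mid\mathcal F_{t-1}] \le M_{t-1}$; together with $M_0(x,y)=1$ (since $S_0=0$ and $V_0=0$) this proves the claim.

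The core computation is the conditional MGF bound. Recall $\bar z_{0,t} = \frac{1}{m_0}\sum_{j=1}^{m_0} z_{0,t}^{(j)}$, the average of $m_0$ i.i.d. samples from the base distribution $\cD_0$, which are independent of $\mathcal F_{t-1}$ conditionally on $\theta_t$. Hence, conditioning on $\mathcal F_{t-1}$ (which fixes $a_t$ and $x$), the expectation factorizes across the $m_0$ draws:
\[\E\!\left[\exp\!\big(a_t\,\bar z_{0,t}^\top x\big)\,\big|\,\mathcal F_{t-1}\right] = \prod_{j=1}^{m_0} \E_{z_0\sim\cD_0}\!\left[\exp\!\Big(\big\langle z_0,\tfrac{a_t}{m_0}x\big\rangle\Big)\right].\]
Applying the $1$-subgaussianity of $\cD_0$ with the test vector $v = \frac{a_t}{m_0}x$ bounds each factor by $\exp\!\big(\tfrac12\|\tfrac{a_t}{m_0}x\|^2\big) = \exp\!\big(\tfrac{a_t^2\|x\|^2}{2m_0^2}\big)$, and since $x\in\mathcal B_m$ we have $\|x\|\le 1$. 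Taking the product over the $m_0$ factors gives $\exp\!\big(m_0\cdot\tfrac{a_t^2}{2m_0^2}\big) = \exp\!\big(\tfrac{a_t^2}{2m_0}\big)$, which is exactly the required bound.

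The step I expect to require the most care is the first one: justifying that $\theta_t$ (and hence $a_t$ and the matrix $V_t$) may be treated as deterministic when conditioning, despite being chosen adaptively from past data. This is the same adaptivity issue that arises in the LinUCB analysis, and it is resolved precisely by the filtration bookkeeping above—the fresh samples $z_t^{(j)}$ are drawn after $\theta_t$ is fixed, so conditionally on $\mathcal F_{t-1}$ they are genuine i.i.d. draws from $\cD_0$ independent of the multiplier $a_t$. Everything else is a routine application of the scalar subgaussian MGF bound, and the normalization $\|x\|\le 1$ is what guarantees that the accumulated variance proxy matches the $\frac{1}{2m_0}\|y\|_{V_t}^2$ term subtracted in the exponent.
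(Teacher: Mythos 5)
Your proof is correct and follows essentially the same route as the paper: both reduce the supermartingale property to the one-step conditional MGF bound $\E[\exp(y^\top\theta_t\,\bar z_{0,t}^\top x)\mid\mathcal F_{t-1}]\le\exp((y^\top\theta_t)^2/(2m_0))$, using that $\theta_t$ is $\mathcal F_{t-1}$-measurable and the fresh noise is independent of the past. The only difference is presentational: the paper invokes directly that the average $\bar z_{0,t}$ is $\frac{1}{\sqrt{m_0}}$-subgaussian, whereas you unpack that fact into a product of $m_0$ scalar subgaussian MGF bounds, which is the same computation made explicit.
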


\begin{proof}
Since $\bar z_{0,i}$ are $\frac{1}{\sqrt{m_0}}$-subgaussian, we know that all one-dimensional projections are also $\frac{1}{\sqrt{m_0}}$-subgaussian, hence $\bar z_{0,i}^\top x$ are independent $\frac{1}{\sqrt{m_0}}$-subgaussian as well. Using this, we know
\[\mathbb{E}\left[\exp(y^\top \theta_t z_{0,t}^\top x)~\Big|~\mathcal{F}_{t-1}\right] \leq \exp\left(\frac{(y^\top \theta_t)^2}{2m_0}\right) = \exp\left(\frac{\|y\|^2_{\theta_t\theta_t^\top}}{2m_0}\right)\]
almost surely. Hence,
\begin{align*}
    \mathbb{E}[M_t(x,y)~|~\mathcal{F}_{t-1} ] &= \mathbb{E}\left[\exp\left(y^\top S_t x - \frac{1}{2m_0}\|y\|_{V_t }^2\right) ~\Big|~ \mathcal{F}_{t-1}\right]\\
    &= M_{t-1}(x,y) \mathbb{E}\left[\exp\left(y^\top \theta_t z_{0,t}^\top x - \frac{1}{2m_0}\|y\|_{\theta_t\theta_t^\top}^2\right)~\Big|~\mathcal{F}_{t-1}\right]\\
    &\leq M_{t-1}(x,y)
\end{align*}
almost surely. Furthermore, $M_0(x,y)= 1$ is trivially true.
\end{proof}

Now we are ready to state the main technical lemma about the validity of $\mathcal{C}_t$.

\begin{lemma}
\label{lemma:conf_regions}
We have that
\[\mathbb{P}\left\{E_{\mathrm{clean}}\right\} \geq 1 - T^{-2}.\]
\end{lemma}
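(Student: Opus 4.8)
The plan is to show that the time-uniform event $\{\mu_*\in\mathcal{C}_{t+1}\ \forall t\}$ reduces to a self-normalized concentration inequality, which I would establish from the supermartingale of Lemma~\ref{lemma:supmg} via the method of mixtures and Ville's maximal inequality.

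\textbf{Step 1 (algebraic reduction).} First I would rewrite the estimation error in terms of the centered noise. Using the location-family relation $\bar z_i=\bar z_{0,i}+\mu_*^\top\theta_i$ together with $\sum_{i=1}^t\theta_i\theta_i^\top=\Sigma_t-\frac1{m_0}I$, one obtains
\[\hat\mu_t-\mu_*=\Sigma_t^{-1}S_t-\tfrac{1}{m_0}\Sigma_t^{-1}\mu_*,\]
whence
\[\big\|\Sigma_t^{1/2}(\hat\mu_t-\mu_*)\big\|\le\big\|\Sigma_t^{-1/2}S_t\big\|+\tfrac{1}{m_0}\big\|\Sigma_t^{-1/2}\mu_*\big\|.\]
The second term is deterministic: since $\Sigma_t\succeq\frac1{m_0}I$ gives $\|\Sigma_t^{-1/2}\|\le\sqrt{m_0}$ and $\|\mu_*\|\le M_*$, it is at most $M_*/\sqrt{m_0}$, which accounts for the $M_*$ in the confidence radius. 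It therefore suffices to control the stochastic term $\|\Sigma_t^{-1/2}S_t\|$ uniformly over all $t$.

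\textbf{Step 2 (method of mixtures for a fixed direction).} For fixed $x\in\mathcal{B}_m$ I would integrate the supermartingale $M_t(x,y)$ of Lemma~\ref{lemma:supmg} against a centered Gaussian prior on $y$ (with covariance $m_0^2 I$). Averaging a family of nonnegative supermartingales preserves the supermartingale property, and completing the square in the Gaussian integral yields
\[\bar M_t(x)=\Big(\tfrac{|\Sigma_0|}{|\Sigma_t|}\Big)^{1/2}\exp\!\Big(\tfrac{m_0}{2}\,\big\|\Sigma_t^{-1/2}S_t x\big\|^2\Big),\qquad \bar M_0(x)=1,\]
with $\Sigma_0=\frac1{m_0}I$. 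Ville's inequality applied to $\bar M_t(x)$ then gives, with probability at least $1-\delta$ and simultaneously for all $t$,
\[\big\|\Sigma_t^{-1/2}S_t x\big\|^2\le\tfrac{2}{m_0}\log\tfrac1\delta+\tfrac{1}{m_0}\log\tfrac{|\Sigma_t|}{|\Sigma_0|}.\]
Crucially, $\theta_t$ is predictable (chosen before the samples at step $t$), so the adaptivity of the deployed models is already absorbed into the supermartingale property, exactly as in the LinUCB analysis this mirrors.

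\textbf{Step 3 (operator norm and assembly).} To pass from a fixed direction to the operator norm $\|\Sigma_t^{-1/2}S_t\|=\sup_{x\in\mathcal{B}_m}\|\Sigma_t^{-1/2}S_t x\|$, I would take a finite $\rho$-net of $\mathcal{B}_m$, apply Step~2 at each net point under a union bound, and control the discretization error by $\rho\|\Sigma_t^{-1/2}S_t\|$, which is reabsorbed for $\rho$ a small constant. The union bound enlarges $\log(1/\delta)$ by the log-cardinality of the net; taking $\delta$ of order $T^{-2}$ divided by the net size keeps the overall failure probability at most $T^{-2}$. The determinant term is bounded using $\|\theta_i\|\le 1$ and the trace--determinant inequality,
\[\log\tfrac{|\Sigma_t|}{|\Sigma_0|}=\log\big|I+m_0\textstyle\sum_{i=1}^t\theta_i\theta_i^\top\big|\le d_\Theta\log\!\Big(1+\tfrac{m_0 t}{d_\Theta}\Big)\le d_\Theta\log\!\Big(1+\tfrac{m_0 T}{d_\Theta}\Big).\]
Collecting the three contributions (the $\log(1/\delta)\asymp\log T$ term, the covering term, and the determinant term) into $\|\Sigma_t^{-1/2}S_t\|$, and adding the deterministic $M_*/\sqrt{m_0}$ bias from Step~1, reproduces the radius defining $\mathcal{C}_{t+1}$, so $\mu_*\in\mathcal{C}_{t+1}$ for all $t$ on this event.

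\textbf{Main obstacle.} The genuinely new difficulty relative to scalar LinUCB is the vector-valued noise: $S_t$ is a matrix and $\mathcal{C}_t$ is stated in operator norm, so the self-normalized bound must hold uniformly over all output directions $x\in\mathcal{B}_m$ rather than for a single scalar reward. The care lies in handling this supremum (via the net) so that the covering contributes only an additive term and does not inflate the favorable log-determinant dependence, all while keeping the bound time-uniform through Ville's inequality.
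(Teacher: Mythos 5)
Your proposal is correct and follows essentially the same route as the paper's proof: the same decomposition $\hat\mu_t-\mu_*=\Sigma_t^{-1}S_t-\tfrac{1}{m_0}\Sigma_t^{-1}\mu_*$ yielding the $M_*/\sqrt{m_0}$ bias term, the same Gaussian mixture of the supermartingale from Lemma~\ref{lemma:supmg} (the paper's prior $\Sigma=\tfrac{m_0}{\lambda}I$ with $\lambda=1/m_0$ is exactly your $m_0^2 I$) combined with Ville's maximal inequality, the same covering argument over $\mathcal{B}_m$ with a union bound, and the same trace--determinant bound before setting $\delta\asymp T^{-2}$. The only cosmetic differences are the ordering of steps and that the paper keeps the regularization $\lambda$ generic until the end.
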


\begin{proof}
First we will show that for any $\delta\in(0,1)$,
\begin{equation}
\label{eq:St-bound}
\mathbb{P}\left\{\exists t \in \mathbb{N}: \|V_t(\lambda)^{-1/2}S_t\|^2 \geq \frac{1}{m_0}\left(8m + 4\log\left(\frac{1}{\delta}\right) + 2\log\left(\frac{\mathrm{det}(V_t(\lambda))}{\lambda^{d_\Theta}}\right)\right)\right\} \leq \delta,
\end{equation}
for all $\lambda >0$.

Let $\Sigma = \frac{m_0}{\lambda} I \in \mathbb{R}^{d_\Theta\times d_\Theta}$ and let $h$ be the density of $\mathcal{N}(0,\Sigma)$. Then, for any fixed $x\in\mathcal{B}_m$ and $M_t(x,y)$ as in Lemma \ref{lemma:supmg}, define
\begin{align*}
    \bar M_t(x) &= \int_{\mathbb{R}^{d_\Theta}} M_t(x,y)h(y) = \frac{1}{\sqrt{(2\pi)^{d_\Theta} \mathrm{det}(\Sigma)}} \int_{\mathbb{R}^{d_\Theta}} \exp\left(y^\top S_t x - \frac{1}{2m_0}\|y\|_{V_t }^2 - \frac{1}{2}\|y\|_{\Sigma^{-1}}^2\right)dy.
\end{align*}

Notice that we can write
\[y^\top S_t x - \frac{1}{2m_0}\|y\|_{V_t }^2 - \frac{1}{2}\|y\|_{\Sigma^{-1}}^2 = \frac{1}{2}\|S_t x\|^2_{(\Sigma^{-1} + \frac{V_t }{m_0})^{-1}} - \frac{1}{2} \left\|y - \left(\Sigma^{-1} + \frac{V_t }{m_0}\right)^{-1}S_t x\right\|^2_{\Sigma^{-1} + \frac{V_t }{m_0}}.\]
Thus, by integrating out the Gaussian density, we get
\begin{align*}
\bar M_t(x) &= \exp\left(\frac{1}{2}\|S_t x\|^2_{(\Sigma^{-1} + \frac{V_t }{m_0})^{-1}}\right) \frac{1}{\sqrt{(2\pi)^{d_\Theta} \mathrm{det}(\Sigma)}} \int_{\mathbb{R}^{d_\Theta}} \exp\left(- \frac{1}{2} \left\|y - \left(\Sigma^{-1} + \frac{V_t }{m_0}\right)^{-1}S_t x\right\|^2_{\Sigma^{-1} + \frac{V_t }{m_0}}\right)dy\\
&= \exp\left(\frac{1}{2}\|S_t x\|^2_{(\Sigma^{-1} + \frac{V_t }{m_0})^{-1}}\right) \left(\frac{\mathrm{det} ((\Sigma^{-1} + \frac{V_t }{m_0})^{-1})}{\mathrm{det} (\Sigma)}\right)^{1/2}\\
&= \exp\left(\frac{m_0}{2}\|V_t^{-1/2}(\lambda)S_t x\|^2\right) \left(\frac{\lambda^{d_\Theta}}{\mathrm{det} (V_t(\lambda))}\right)^{1/2}.
\end{align*}

Now, by Lemma 20.3 in \citep{lattimore2020bandit}, since $M_t(x,y)$ is a supermartingale then $\bar M_t(x)$ is a non-negative supermartingale with $\bar M_0(x) = 1$. Thus, we can apply the maximal inequality to get
\begin{align}
\label{eq:St-bound-fixed-x}
&\mathbb{P}\left\{\exists t\in\mathbb{N} :  \log\bar M_t(x) \geq \log(1/\delta)\right\} \nonumber \\
&\quad = \mathbb{P}\left\{\exists t\in\mathbb{N} :  \frac{m_0}{2}\|V_t(\lambda)^{-1/2}S_t x\|^2 - \frac{1}{2}\log\left(\frac{\mathrm{det}(V_t(\lambda))}{\lambda^{d_\Theta}}\right) \geq \log(1/\delta)\right\} \leq \delta.
\end{align}

Inequality \eqref{eq:St-bound-fixed-x} is valid for all fixed $x\in\mathcal{B}_m$; to prove inequality \eqref{eq:St-bound}, we use a covering argument. Let $N_{\frac{1}{2},m}$ denote a $\frac{1}{2}$-net of $\mathcal{B}_m$, and note that we can make $|N_{\frac{1}{2},m}| \leq 5^m$. Then,
\[\|V_t(\lambda)^{-1/2}S_t\| = \max_{x\in\mathcal{B}_m} \|V_t(\lambda)^{-1/2}S_t x\| \leq 2 \max_{x\in N_{\frac{1}{2},m}} \|V_t(\lambda)^{-1/2}S_t x\|.\]
Therefore, we can apply a union bound to conclude that for all $s>0$,
\begin{align*}
    \mathbb{P}\left\{\exists t \in \mathbb{N}: \|V_t(\lambda)^{-1/2}S_t\|^2 \geq s\right\} &\leq \mathbb{P}\left\{\exists t \in \mathbb{N}: \max_{x\in N_{1/2,m}} \|V_t(\lambda)^{-1/2}S_t x\|^2_2 \geq \frac{s}{4}\right\}\\
    &\leq \sum_{x\in N_{1/2,m}}\mathbb{P}\left\{\exists t \in \mathbb{N}: \|V_t(\lambda)^{-1/2}S_t x\|^2_2 \geq \frac{s}{4}\right\}.
\end{align*}
By picking $s = \frac{1}{m_0}(8m + 4\log\frac{1}{\delta} + 2\log(\frac{\mathrm{det}(V_t(\lambda))}{\lambda^{d_\Theta}})) \geq \frac{1}{m_0}(4\log\frac{5^m}{\delta} + 2\log(\frac{\mathrm{det}(V_t(\lambda))}{\lambda^{d_\Theta}}))$ and applying Equation~\eqref{eq:St-bound-fixed-x}, we get
\[\mathbb{P}\left\{\exists t \in \mathbb{N}: \|V_t(\lambda)^{-1/2}S_t\|^2 \geq \frac{1}{m_0}\left(8m + 4\log\left(\frac{1}{\delta}\right) + 2\log\left(\frac{\mathrm{det}(V_t(\lambda))}{\lambda^{d_\Theta}}\right)\right)\right\} \leq \sum_{x\in N_{1/2,m}}\frac{\delta}{5^m} \leq \delta.\]
This completes the proof of inequality \eqref{eq:St-bound}.

It remains to relate this bound to the definition of $\mathcal{C}_t$. We can write
\[\hat \mu_t - \mu_* = V_t(\lambda)^{-1}S_t + V_t(\lambda)^{-1} V_t  \mu_* - \mu_*,\]
and therefore
\begin{align*}
    \|V_{t}(\lambda)^{1/2}(\hat \mu_{t} - \mu_*)\| &= \|V_t(\lambda)^{-1/2}S_t + V_t(\lambda)^{1/2}(V_t(\lambda)^{-1}V_t  - I)\mu_*\|\\
    &\leq \|V_t(\lambda)^{-1/2}S_t\| + \sqrt{\|\mu_*^\top ( V_t(\lambda)^{-1}V_t  - I) V_t(\lambda) (V_t(\lambda)^{-1}V_t   - I)\mu_*\|}\\
    &= \|V_t(\lambda)^{-1/2}S_t\| + \sqrt{\lambda} \sqrt{\|\mu_*^\top (I - V_t(\lambda)^{-1}V_t )\mu_*\|}\\
    &= \|V_t(\lambda)^{-1/2}S_t\| + \sqrt{\lambda} \|\mu_*\|,
\end{align*}
where the second equality follows by writing $V_t  = V_t(\lambda) - \lambda I$. Note additionally that by $\max\{\|\theta\|:\theta\in\Theta\}\leq 1$ and the AM-GM inequality,
\[\mathrm{det}(V_t(\lambda)) \leq \left(\frac{1}{d_\Theta}\mathrm{trace}V_t(\lambda)\right)^{d_\Theta} \leq \left(\frac{d_\Theta\lambda + t}{d_\Theta}\right)^{d_\Theta}.\]
Applying Equation~\eqref{eq:St-bound}, setting $\delta = \frac{1}{T^2}$ and $\lambda = \frac{1}{m_0}$ completes the proof.
\end{proof}

\subsection{Regret bound on the clean event}

The place where the structure of the performative risk comes into play is the following lemma, where we relate the suboptimality of the deployed model $\theta_t$ to properties of the confidence set $\mathcal{C}_t$. 
\begin{lemma}
\label{lemma:confidencelin}
Suppose that the clean event \eqref{eq:clean_linucb} holds. Then, we can bound the suboptimality of $\theta_t$ by 
\[\Delta(\theta_t) \le \min\left\{1, L_z \sup_{\mu, \mu' \in \mathcal{C}_t} \|(\mu - \mu')^\top \theta_t\|\right\}. \]
\end{lemma}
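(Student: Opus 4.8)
The plan is to prove the two quantities inside the minimum separately and then combine them. The bound $\Delta(\theta_t) \le 1$ is immediate from the standing assumption $\ell(z;\theta) \in [0,1]$: this forces $\PR(\theta) \in [0,1]$ for every $\theta$, so $\Delta(\theta_t) = \PR(\theta_t) - \PR(\thetaPO) \le 1$. The substance of the lemma is therefore the second bound, and the key observation is that $\PRLB$ as constructed in Algorithm~\ref{alg:location-families} is a genuine lower confidence bound on the performative risk whenever the clean event holds. Writing the location-family form $\PR(\theta) = \E_{z_0 \sim \cD_0}\, \ell(z_0 + \mu_*^\top \theta; \theta)$ and recalling that $\PRLB(\theta) = \min_{\mu \in \mathcal{C}_t} \E_{z_0 \sim \cD_0}\, \ell(z_0 + \mu^\top \theta; \theta)$, I would note that on the clean event \eqref{eq:clean_linucb} we have $\mu_* \in \mathcal{C}_t$, so the minimum over $\mathcal{C}_t$ is no larger than the value attained at $\mu_*$; hence $\PRLB(\theta) \le \PR(\theta)$ for every $\theta \in \Theta$.

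Next I would exploit the greedy selection rule $\theta_t = \argmin_\theta \PRLB(\theta)$ to chain these inequalities at the two relevant points: $\PRLB(\theta_t) \le \PRLB(\thetaPO) \le \PR(\thetaPO)$, where the first step is optimality of $\theta_t$ for $\PRLB$ and the second is validity of the lower bound applied at $\thetaPO$. This yields
\[
\Delta(\theta_t) = \PR(\theta_t) - \PR(\thetaPO) \le \PR(\theta_t) - \PRLB(\theta_t),
\]
which reduces the task to bounding the gap between the true risk and its lower confidence bound at the single deployed point $\theta_t$.

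Finally I would control this gap through the location-family structure together with $L_z$-Lipschitzness of the loss in $z$. Let $\hat\mu \in \mathcal{C}_t$ attain the minimum defining $\PRLB(\theta_t)$. Then
\[
\PR(\theta_t) - \PRLB(\theta_t) = \E_{z_0}\bigl[\ell(z_0 + \mu_*^\top \theta_t; \theta_t) - \ell(z_0 + \hat\mu^\top \theta_t; \theta_t)\bigr],
\]
and applying the $L_z$-Lipschitzness of $\ell$ in its first argument pointwise inside the expectation bounds this by $L_z \|(\mu_* - \hat\mu)^\top \theta_t\|$. Since both $\mu_*$ and $\hat\mu$ lie in $\mathcal{C}_t$ on the clean event, this is at most $L_z \sup_{\mu,\mu' \in \mathcal{C}_t} \|(\mu - \mu')^\top \theta_t\|$, which establishes the second bound and completes the proof once taken together with $\Delta(\theta_t)\le 1$.

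There is no serious technical obstacle here; the proof is essentially a bookkeeping argument. The one point deserving care is conceptual: the unknown shift $\mu^\top \theta_t$ enters $\ell$ only through the $z$-argument, so the uncertainty about the hidden matrix $\mu_*$ is converted cleanly into a Lipschitz bound depending solely on the width of $\mathcal{C}_t$ in the direction $\theta_t$. Crucially, this never invokes the dependence of $\ell$ (and hence $\PR$) on $\theta$, which is exactly why the guarantee survives even when $\PR$ is highly irregular in $\theta$.
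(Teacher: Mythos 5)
Your proposal is correct and follows essentially the same route as the paper's proof: bound $\Delta(\theta_t)\le 1$ by boundedness of the loss, use $\mu_*\in\mathcal{C}_t$ and the greedy rule to get $\PR(\thetaPO)\ge\PRLB(\thetaPO)\ge\PRLB(\theta_t)$, and then control $\PR(\theta_t)-\PRLB(\theta_t)$ via $L_z$-Lipschitzness in $z$ applied to the location-family shift. The only cosmetic difference is that you state the intermediate fact $\PRLB\le\PR$ explicitly, whereas the paper inlines it; the argument is otherwise identical.
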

\begin{proof}
In what follows, all expectations are taken only over a sample $z_0\sim\cD_0$ independent of everything else (i.e., all other random quantities are conditioned on).

Since the loss is bounded, we know $\Delta(\theta_t) \leq 1$. For the other bound, notice that
\[\Delta(\theta_t) = \mathbb{E} \ell(z_0 + \mu_*^\top\theta_t;\theta_t) - \mathbb{E} \ell(z_0 + \mu_*^\top \thetaPO;\thetaPO).\]
By the definition of the algorithm and the clean event, we can lower bound the second term $\mathbb{E} \ell(z_0 + \mu_*^\top \thetaPO;\thetaPO)$ as follows: 
\[\mathbb{E} \ell(z_0 + \mu_*^\top \thetaPO;\thetaPO) \geq \PRLB(\thetaPO)\geq \PRLB(\theta_t) = \mathbb{E} \ell(z_0 + \tilde\mu_t^\top\theta_t;\theta_t),\]
for some $\tilde \mu_t \in \mathcal{C}_t$. This means that: 
\[\Delta(\theta_t) \leq \mathbb{E} \ell(z_0 + \mu_*^\top \theta_t;\theta_t) - \mathbb{E} \ell(z_0 + \tilde \mu_t^\top \theta_t;\theta_t).\]
To finish, we use Lipschitzness of the loss to upper bound this by $L_z\|(\mu_* - \tilde \mu_t)^\top \theta_t\|$. Using the clean event, we can further upper bound this by $L_z \sup_{\mu, \mu' \in \mathcal{C}_t} \|(\mu - \mu')^\top \theta_t\|$ as desired. 
\end{proof}

We now use this bound on the suboptimality of deployed models, along with the structure of the confidence sets, to bound the regret on the clean event. 
\begin{lemma}
\label{lemma:reg_analysis}
Let $1\leq\beta_1\leq\beta_2\leq\dots\beta_T$ and assume that the loss $\ell(z;\theta)$ is $L_z$-Lipschitz in $z$. Assume that the event
\[\mu_*\in \mathcal{C}_t \subseteq \left\{\mu\in\mathbb{R}^{d_\Theta\times m}:\left\|V_{t-1}^{1/2}\left(\frac{1}{m_0}\right)(\mu - \hat \mu_{t-1})\right\|^2 \leq \beta_t\right\}\]
holds true, for all $2 \le t \le T$. Then, on this event, Algorithm \ref{alg:location-families} satisfies:
\[\sum_{t=1}^T \Delta(\theta_t) = \tilde \cO\left(1 + \sqrt{d_\Theta T\beta_T\log\left(\frac{d_\Theta + T m_0 }{d_\Theta}\right)} \max\{L_z, 1\} \right).\]
\end{lemma}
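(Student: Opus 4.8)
The plan is to turn the per-step suboptimality bound of Lemma~\ref{lemma:confidencelin} into the standard self-normalized ``elliptical potential'' sum from the LinUCB analysis, the one genuinely new ingredient being that the unknown $\mu_*$ is a $d_\Theta\times m$ matrix rather than a vector. The first step is to control the confidence-set width $\sup_{\mu,\mu'\in\mathcal{C}_t}\|(\mu-\mu')^\top\theta_t\|$ appearing in Lemma~\ref{lemma:confidencelin}. Writing $W_t := V_{t-1}^{1/2}(1/m_0)$, the hypothesis $\mathcal{C}_t\subseteq\{\mu:\|W_t(\mu-\hat\mu_{t-1})\|^2\le\beta_t\}$ and the triangle inequality give $\|W_t(\mu-\mu')\|\le 2\sqrt{\beta_t}$ in operator norm for all $\mu,\mu'\in\mathcal{C}_t$. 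I would then factor, using that $W_t$ is symmetric,
\[ (\mu-\mu')^\top\theta_t = \bigl(W_t(\mu-\mu')\bigr)^\top\bigl(W_t^{-1}\theta_t\bigr), \]
so that submultiplicativity of the operator norm yields
\[ \|(\mu-\mu')^\top\theta_t\| \le \|W_t(\mu-\mu')\|\,\|W_t^{-1}\theta_t\| \le 2\sqrt{\beta_t}\,\|\theta_t\|_{V_{t-1}^{-1}(1/m_0)}. \]
Substituting into Lemma~\ref{lemma:confidencelin} produces, for every $t\ge 2$, the clean per-step bound $\Delta(\theta_t)\le\min\{1,\,2L_z\sqrt{\beta_t}\,\|\theta_t\|_{V_{t-1}^{-1}(1/m_0)}\}$, while $t=1$ is handled by the trivial bound $\Delta(\theta_1)\le 1$ (the ellipsoid hypothesis is assumed only for $t\ge 2$), which accounts for the additive $1$ in the statement.

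Next I would sum over $t$. Using $\beta_t\le\beta_T$ and the elementary inequality $\min\{1,cx\}\le\max\{1,c\}\min\{1,x\}$ with $c=2L_z\sqrt{\beta_T}$, together with $\max\{1,2L_z\sqrt{\beta_T}\}\le 2\max\{1,L_z\}\sqrt{\beta_T}$ (valid since $\beta_T\ge 1$), I factor out $2\max\{1,L_z\}\sqrt{\beta_T}$ and reduce to $\sum_{t=2}^T\min\{1,\|\theta_t\|_{V_{t-1}^{-1}(1/m_0)}\}$. Cauchy--Schwarz over the $T$ terms, combined with the identity $\min\{1,x\}^2=\min\{1,x^2\}$, bounds this by $\sqrt{T}\bigl(\sum_{t=1}^T\min\{1,\|\theta_t\|^2_{V_{t-1}^{-1}(1/m_0)}\}\bigr)^{1/2}$. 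The inner sum is exactly the object controlled by the elliptical potential (log-determinant) lemma, which gives
\[ \sum_{t=1}^T\min\{1,\|\theta_t\|^2_{V_{t-1}^{-1}(1/m_0)}\}\le 2\log\frac{\det V_T(1/m_0)}{\det V_0(1/m_0)}. \]

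Finally I would bound the log-determinant using the trace estimate already derived inside the proof of Lemma~\ref{lemma:conf_regions}: since $\det V_0(1/m_0)=m_0^{-d_\Theta}$ and, by AM--GM with $\|\theta_t\|\le 1$, $\det V_T(1/m_0)\le\bigl((d_\Theta/m_0+T)/d_\Theta\bigr)^{d_\Theta}$, one obtains $\log\frac{\det V_T(1/m_0)}{\det V_0(1/m_0)}\le d_\Theta\log(1+Tm_0/d_\Theta)=d_\Theta\log\frac{d_\Theta+Tm_0}{d_\Theta}$. Assembling the three displays yields $\sum_{t=1}^T\Delta(\theta_t)=\cO\bigl(1+\max\{L_z,1\}\sqrt{d_\Theta T\beta_T\log((d_\Theta+Tm_0)/d_\Theta)}\bigr)$, which is the claim.

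The step I expect to be the main obstacle is the width bound of the first paragraph. In vanilla LinUCB the unknown is a vector and the reward is a scalar linear functional, so the ``width'' is a one-dimensional gap; here the loss depends on the full shifted mean $\mu_*^\top\theta$, so the relevant width $\|(\mu-\mu')^\top\theta_t\|$ is the $\ell_2$-norm of an $m$-vector. The delicate point is to perform the operator-norm factorization so that the matrix confidence radius $\sqrt{\beta_t}$ and the scalar self-normalized term $\|\theta_t\|_{V_{t-1}^{-1}(1/m_0)}$ separate cleanly, without introducing a spurious $\sqrt{m}$ factor; all $m$-dependence should enter only through $\beta_T$. Once this separation is secured, the remainder is the textbook potential argument.
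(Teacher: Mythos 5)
Your proposal is correct and follows essentially the same route as the paper's proof: the trivial bound for $t=1$, the operator-norm factorization of the confidence-set width through $V_{t-1}^{\pm 1/2}(1/m_0)$, pulling out $2\max\{1,L_z\}\sqrt{\beta_T}$, Cauchy--Schwarz, and the elliptical potential (log-determinant) lemma with the AM--GM trace bound. The separation you flag as delicate goes through exactly as you describe, since the paper's ellipsoid is defined via the operator norm of $V_{t-1}^{1/2}(1/m_0)(\mu-\hat\mu_{t-1})$, so no extra $\sqrt{m}$ factor appears.
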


\begin{proof}
As in the proof of Lemma \ref{lemma:confidencelin}, 
all expectations are taken only over a sample $z_0\sim\cD_0$ independent of everything else (i.e., all other random quantities are conditioned on).

First, we separately bound the regret of the first step as $\cO(1)$, using the fact that the loss is bounded in $[0,1]$. 

For the remainder of the steps, we apply Lemma \ref{lemma:confidencelin} to upper bound $\Delta(\theta_t)$. Using this, coupled with structure of $\mathcal{C}_t$, we can obtain the following upper bound, for any $\lambda > 0$:
\begin{align*}
  \Delta(\theta_t) &\le \min\left\{1, L_z \sup_{\mu, \mu' \in \mathcal{C}_t} \|(\mu - \mu')^\top \theta_t\|\right\} \\
  &\le \min \left\{1, L_z \sup_{\mu, \mu' \in \mathcal{C}_t}  \|(\mu - \mu')^\top V_{t-1}^{1/2}(\lambda)\|\cdot\|V_{t-1}^{-1/2}(\lambda)\theta_t\|\right\} \\
    &\leq \min\left\{1, 2L_z \sqrt{\beta_t}\|V_{t-1}^{-1/2}(\lambda)\theta_t\| \right\} \\
    &\le 2\sqrt{\beta_T}\min\left\{1,L_z \|V_{t-1}^{-1/2}(\lambda)\theta_t\|\right\},
\end{align*}
where the last line uses the fact that $\beta_T \geq \max\{1,\beta_t\}$.

By the Cauchy-Schwarz inequality,
\begin{align*}
  \sum_{t=2}^T \Delta(\theta_t) &\leq \sqrt{T\sum_{t=2}^T \Delta(\theta_t)^2}\\
  &\leq 2 \sqrt{T\beta_T \sum_{t=2}^T \min\left\{1, L_z^2\|V_{t-1}^{-1/2}(\lambda)\theta_t\|^2\right\}} \\
  &\leq 2 \sqrt{T\beta_T \sum_{t=2}^T \min\left\{1, \max\{1, L_z^2\}  \|V_{t-1}^{-1/2}(\lambda)\theta_t\|^2\right\}}   \\ 
&\leq 2 \sqrt{T \max\{1, L_z^2\} \beta_T \sum_{t=2}^T \min\left\{1, \|V_{t-1}^{-1/2}(\lambda)\theta_t\|^2\right\}}  \\
&= 2 \max\{1, L_z\}  \sqrt{T \beta_T \sum_{t=2}^T \min\left\{1, \|V_{t-1}^{-1/2}(\lambda)\theta_t\|^2\right\}}.
\end{align*}
Finally, we use Lemma 19.4 in \citep{lattimore2020bandit} that says
\[\sum_{t=2}^T \min\left\{1, \|V_{t-1}^{-1/2}(\lambda)\theta_t\|^2\right\} \leq 2d_\Theta \log\left(\frac{\mathrm{trace}V_0(\lambda) + T}{d_\Theta  \mathrm{det}(V_0(\lambda))^{1/d_\Theta }}\right) = 2d_\Theta \log\left(\frac{d_\Theta \lambda + T}{d_\Theta  \lambda}\right).\]

Using this expression in the equation above and setting $\lambda = \frac{1}{m_0}$ yields the final result.
\end{proof}

\subsection{Proof of Theorem \ref{thm:lin_bound}}

We take $\sqrt{\beta_t} = \max\left\{1, \sqrt{\frac{1}{m_0}} M_* + \sqrt{\frac{8m + 8\log T + 2d_\Theta\log\left(\frac{d_\Theta + t m_0}{d_\Theta }\right)}{m_0}}\right\}$. By the constraint that $m_0 = o(\log T)$, we see that second branch dominates over the first one and so, for large enough $T$, $\sqrt{\beta_t} =  \sqrt{\frac{1}{m_0}} M_* + \sqrt{\frac{8m + 8\log T + 2d_\Theta\log\left(\frac{d_\Theta + t m_0}{d_\Theta }\right)}{m_0}}$. Lemma \ref{lemma:conf_regions} shows that: 
\[\mu_* \in \mathcal{C}_t \subseteq \left\{\mu\in\mathbb{R}^{d_\Theta \times m} : \left\|V_{t-1}^{1/2}\left(\frac{1}{m_0}\right)(\mu-\hat \mu_{t-1})\right\|^2 \leq \beta_t\right\}.\] 
Moreover, the contribution of the complement of the clean event to the overall regret is negligible. Plugging this choice of $\beta_t$ into the bound of Lemma \ref{lemma:reg_analysis} completes the proof of Theorem~\ref{thm:lin_bound}.

\section{Further details on zooming dimension}

\subsection{Discussion of zooming dimension definitions}\label{app:zooming}

We note that Definition~\ref{def:zdim} slightly differs from the definition presented in~\citep{kleinberg2008multi}. The statement of Definition~\ref{def:zdim} eases the comparison of the zooming algorithm of Kleinberg et al. to our new algorithm. 

First, we introduce a multiplier $\alpha $ to emphasize that the zooming dimension implicitly depends on the Lipschitz constant of the problem (assumed to be fixed and equal to $1$ by Kleinberg et al.), which can be smaller when we make full use of performative feedback. 

Second, Definition~\ref{def:zdim} is slightly more conservative in two ways. One is that we intersect a cover of any subset of $\{\theta:\Delta(\theta)\leq 16\alpha s\}$ with $\{\theta:16\alpha r < \Delta(\theta) \leq 32\alpha r\}$, rather than directly take a cover of the latter set. The other one is that we take a supremum over all covers with radius coarser than $r$, i.e. $s\in[r,1]$, instead of only $r$. These differences are minor technicalities that we do not expect to alter the zooming dimension in a meaningful way, neither formally nor conceptually. 

Lastly, rather than requiring the size of the relevant set of points to be at most of order $(1/s)^{d}$, we require the size to be at most of order $(3/s)^{d}$. In this regard, Definition \ref{def:zdim} is less conservative than the zooming dimension in \citep{kleinberg2008multi}. We make this modification so that for the Euclidean ball of dimension $d_\Theta$ of radius $1$, which contains $\Theta$, the zooming dimension is guaranteed to be at most $d_\Theta$. This would not be true without the factor of $3$. We note that the analysis of adaptive zooming in \citep{kleinberg2008multi} can be modified in a straightforward way to allow for this change, only altering constant factors in the regret bounds.

\subsection{Gains of sequential zooming dimension}\label{appendix:seqzooming}

We provide an example where the sequential zooming dimension is strictly smaller than the zooming dimension.
\begin{example}
\label{ex:seqzoom}
Suppose that model parameters are 2-dimensional, $L_z \epsilon = 1/32$, and the distribution map is a fixed distribution: $\PR(\theta) = \DPR(\theta’, \theta)$ for all $\theta, \theta'$. Let $\theta_0 = 0, \theta_1 = [1/2, 0], \theta_2 = [1/4, \sqrt{3}/4]$. Suppose that $\PR(\theta_0) = 0$, $\PR(\theta_1) = 1/8$, $\PR(\theta_2) = 15/64$, and $\PR(\theta) = 1$ otherwise. 
\end{example}

\begin{lemma}
In Example \ref{ex:seqzoom}, the ($L_z\epsilon$)-zooming dimension is at least $d \ge 0.39$, and the ($L_z\epsilon$)-sequential zooming dimension is at most $d \le \log_6(1.5) \approx 0.23$.
\end{lemma}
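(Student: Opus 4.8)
The plan is to treat the two directions separately, leaning on the fact that $\theta_0,\theta_1,\theta_2$ form an equilateral triangle of side $\tfrac12$ (an immediate distance computation) while every other point has $\Delta=1$. With $\alpha=L_z\epsilon=\tfrac1{32}$ we have $16\alpha=\tfrac12$ and $32\alpha=1$, so the band $\{16\alpha r\le\Delta<32\alpha r\}$ is $\{r/2\le\Delta<r\}$ and $\{\Delta\le16\alpha s\}=\{\Delta\le s/2\}$ contains only the three distinguished points for every $s<2$. The relevant suboptimalities are $\Delta(\theta_0)=0$, $\Delta(\theta_1)=\tfrac18$, $\Delta(\theta_2)=\tfrac{15}{64}$, and these exact values will do all the work.

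For the lower bound on the ordinary zooming dimension I would fix $r=\tfrac14$, so that both $\theta_1$ and $\theta_2$ fall in the band $[\tfrac18,\tfrac14)$, and take the subset $\{\theta_0,\theta_1,\theta_2\}$, which equals $\{\Delta\le s/2\}$ for any $s\in[\tfrac{15}{32},\tfrac12)$. Since the three points are pairwise at distance $\tfrac12>s$, no two of them can lie in a common $s$-ball, so the unique minimal $s$-net is all three, and in particular it retains both band points $\theta_1,\theta_2$. Thus any admissible $d$ in Definition~\ref{def:zdim} must let $(3/s)^{d}$ dominate this count $2$ as $s\uparrow\tfrac12$, i.e. $6^{d}\ge2$, forcing $d\ge\log_6 2\approx0.39$.

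For the upper bound on the sequential zooming dimension I would work at the same scales and evaluate the deployment condition \eqref{eq:seq_zoom} under a uniformly random ordering $\pi$ of the net. The simplification is that the map is constant, so $\DPR(\theta',\theta)=\PR(\theta)$; hence after any single earlier deployment $\theta'$ the lower bound of a band point equals exactly $\PR_{\mathrm{LB}}(\theta_i;\cdot)=\PR(\theta_i)-L_z\epsilon\|\theta_i-\theta'\|=\PR(\theta_i)-\tfrac1{64}$ (every pairwise distance is $\tfrac12$), while $\PRmin(\cdot)\le\tfrac1{64}$, attained by plugging $\theta_0$ into its inner minimization. Because an $s$-ball of radius $s<\tfrac12$ around a band point contains no other distinguished point, the minimum defining $\PR^s_{\mathrm{LB}}$ is attained at the band point itself (generic points have $\PR=1$ and hence a lower bound near $1$), so $\PR^s_{\mathrm{LB}}(\pi(\theta_i))=\PR(\theta_i)-\tfrac1{64}$. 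Substituting $\PR(\theta_1)-\tfrac1{64}=\tfrac7{64}$ and $\PR(\theta_2)-\tfrac1{64}=\tfrac{14}{64}$ and comparing with $\PRmin(\cdot)+4\alpha s\le\tfrac1{64}+\tfrac1{16}=\tfrac5{64}$ shows \eqref{eq:seq_zoom} fails for any band point that has a predecessor; hence a band point is deployed only if drawn first, and averaging over the $3!$ orderings bounds the expected count by $\tfrac32$ (the precise value is in fact smaller), giving $d\le\log_6(3/2)\approx0.23$.

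The main obstacle is the upper-bound bookkeeping: one must evaluate the two nested optimizations in $\PR^s_{\mathrm{LB}}$ and $\PRmin$ exactly — confirming that the ball-minimum is achieved at the distinguished point rather than at a distant generic point, and that $\PRmin$ is controlled by the optimum $\theta_0$ — and push this through the limit $s\uparrow\tfrac12$, where the net is on the verge of collapsing to a single point. The values $\tfrac18$ and $\tfrac{15}{64}$ are calibrated precisely so that the gap $\PR(\theta_i)-\tfrac1{64}-\PRmin$ strictly exceeds the slack $4\alpha s$ at every admissible scale, which is exactly what lets sequential elimination discard the second band point after one deployment; checking these inequalities uniformly in $s$ is the crux. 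The lower bound, by contrast, is routine once the equilateral-triangle geometry is recorded.
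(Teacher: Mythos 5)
Your proof is correct and follows essentially the same route as the paper's: the equilateral-triangle geometry with $s\uparrow 1/2$ and $r=1/4$ forces both $\theta_1,\theta_2$ into one band for the lower bound, and sequential elimination after the first deployment caps the expected band count for the upper bound. If anything your evaluation of $\PRmin$ is the more careful one — plugging $\theta_0$ into the inner minimization gives $\PRmin\le 1/64$ (the paper states $1/8$ after deploying $\theta_1$), and your observation that a band point is eliminated after \emph{any} predecessor yields an expected count of $2/3$, strictly below the $3/2$ the paper settles for; both suffice for $d\le\log_6(1.5)$.
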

\begin{proof}

We begin by observing that for all $0 < s \le 1$, it holds that: 
\[\left\{ \theta \mid \Delta(\theta) \le 16 L_z \epsilon s \right\} \subseteq \left\{ \theta \mid \PR(\theta) \le 16 L_z \epsilon \right\} = \left\{ \theta \mid \PR(\theta) \le 1/2 \right\} = \left\{\theta_0, \theta_1, \theta_2\right\}.\] Note that $\theta_0$ achieves the optimal performative risk and thus does not appear in any suboptimality band $\left\{\theta \mid 16 L_z \epsilon r \le \PR(\theta) < 32 L_z \epsilon r  \right\}$. 

First, we show that the zooming dimension is at least $\log_6(2) \approx 0.39$. Let $s = 1/2 - \epsilon$ for $\epsilon$ sufficiently small. Consider the set $\left\{ \theta \mid \Delta(\theta) \le 16 L_z \epsilon s \right\} = \left\{\theta_0, \theta_1, \theta_2\right\}$. A minimal covering of the set will necessarily consist of all three points $\left\{\theta_0, \theta_1, \theta_2\right\}$. We see that the suboptimality band $\left\{\theta \mid 16 L_z \epsilon r \le \PR(\theta) < 32 L_z \epsilon r  \right\}$ for $r = 1/4$ contains $\left\{\theta_1, \theta_2\right\}$. Taking $\epsilon \rightarrow 0$, we see that the zooming dimension is at least $\log_{6}(2)$. 

Next, we show that the sequential zooming dimension is $d \le \log_6(1.5) \approx 0.23$. Let $\mathcal{C}$ be a minimal covering of a subset of $\left\{ \theta \mid \Delta(\theta) \le 16 L_z \epsilon s \right\}$. For $s \ge 1/2$, we see that $\left\{ \theta \mid \Delta(\theta) \le 16 L_z \epsilon s \right\} = \left\{\theta_0, \theta_1, \theta_2 \right\}$, and  $\mathcal{C}$ contains at most $1$ point. If $s < 1/2$, then $\mathcal{C}$ might contain up to 3 points. If $\mathcal{C}$ does not contain both $\theta_1$ and $\theta_2$, then any suboptimality band $\left\{\theta \mid 16 L_z \epsilon r \le \PR(\theta) < 32 L_z \epsilon r  \right\}$ contains at most 1 point from $\mathcal{C}$. If $\mathcal{C}$ contains both $\theta_1$ and $\theta_2$, then we leverage the sequential properties of the sequential zooming dimension. We claim that pulling $\theta_1$ first results in $\theta_2$ being eliminated. Notice that if $\theta_1$ is pulled first, then $\PRmin$ will be equal to $1/8$. $\PRLB$ will be equal to $\DPR(\theta_1, \theta_2) + L_z \epsilon ||\theta_1 - \theta_2|| = \PR(\theta_2) + L_z \epsilon ||\theta_1 - \theta_2|| = 15/64 - 1/64 = 14/64 = 7/32$. We see that $\PRmin + 4L_z \epsilon s \le 1/8 + 1/16 = 3/16$. Since $7/32 > 3/16$, we see that  $\theta_1$ will be eliminated. This means that in expectation, at most $1.5$ arms are pulled. This yields the desired bound.

\end{proof}

\section{Details of numerical illustrations}
\label{app:figures}

For the purpose of the illustrations in Figure~\ref{fig:confidence}, Figure~\ref{fig:compare-discarding}, and Figure~\ref{fig:sequential} we use a one-dimensional example where $\theta\in \mathbb{R}$. The performative effects are modeled by a linear shift, i.e., 
\[\DPR(\phi,\theta) = f(\theta)+\alpha\phi,\]
where $f$ is a multi-modal function illustrated in the respective figures and specified as
\[f(\theta)= c_0 \cos(c_1 \theta) +c_2 \sin(c_3 (\theta-c_4)).\]
The shaded gray area in the figures illustrates the confidence sets computed as 
\begin{align*}
    \PR_\text{LB}(\theta)=\max_{\theta'\in \cS}\PR(\theta')- L_\PR \|\theta-\theta'\|,\quad \quad \PR_\text{UB}(\theta)=\min_{\theta'\in \cS}\PR(\theta')+ L_\PR \|\theta-\theta'\|
 \end{align*}
for the baseline approach, and as 
\begin{align*}
    \PR_\text{LB}(\theta)=\max_{\theta'\in \cS}\;\DPR(\theta',\theta)- L_\phi \|\theta-\theta'\|,\quad \quad 
    \PR_\text{UB}(\theta)=\min_{\theta'\in \cS}\;\DPR(\theta',\theta)+ L_\phi \|\theta-\theta'\|
 \end{align*}
for the performative confidence bounds. We use $\cS:=\{\theta_1,\theta_2\}$ as shown in the figures. The Lipschitz constant $L_\PR$ of the performative risk $\PR(\theta)=\DPR(\theta,\theta)$ is evaluated numerically for each figure.
 
For Figure~\ref{fig:confidence} and Figure~\ref{fig:compare-discarding} we use the following parameters: $c_0=-1$, $c_1=0.7$, $c_2=0.3$, $c_3=3$, $c_4=0.5$, $\alpha=1$, and a conservative Lipschitz bound $L_\phi=1.6$ for the performative confidence bounds  and $L_\PR=3.8$ for the performative risk.
 
For Figure~\ref{fig:sequential} we use $c_0=-3$, $c_1=1$, $c_2=0.9$, $c_3=3$, $c_4=0.5$, $\alpha=0.5$, and a conservative Lipschitz bound $L_\phi=1.3$ for illustrating the performative confidence bounds. If exact knowledge of the shifts were available these bounds could be made even tighter.

\end{document}